\newtheorem{theorem}{Theorem}
\newtheorem{corollary}[theorem]{Corollary}
\newtheorem{lemma}[theorem]{Lemma}
\newtheorem{proposition}[theorem]{Proposition}
\newtheorem{definition}{Definition}
\newtheorem{example}{Example}
\newtheorem{remark}{Remark}
\newlength{\strutheight}
\begin{document}

\twocolumn[
\aistatstitle{Efficient Computation and Analysis of Distributional Shapley Values}

\aistatsauthor{ Yongchan Kwon \And Manuel A. Rivas \And  James Zou }
\aistatsauthor{\texttt{\{yckwon,mrivas,jamesz\}@stanford.edu}}
\aistatsaddress{ Department of Biomedical Data Science, Stanford University, CA, USA} ]

\begin{abstract}
Distributional data Shapley value (DShapley) has recently been proposed as a principled framework to quantify the contribution of individual datum in machine learning. DShapley develops the foundational game theory concept of Shapley values into a statistical framework and can be applied to identify data points that are useful (or harmful) to a learning algorithm. Estimating DShapley is computationally expensive, however, and this can be a major challenge to using it in practice. Moreover, there has been little mathematical analyses of how this value depends on data characteristics. In this paper, we derive the first analytic expressions for DShapley for the canonical problems of linear regression, binary classification, and non-parametric density estimation. These analytic forms provide new algorithms to estimate DShapley that are several orders of magnitude faster than previous state-of-the-art methods. Furthermore, our formulas are directly interpretable and provide quantitative insights into how the value varies for different types of data. We demonstrate the practical efficacy of our approach on multiple real and synthetic datasets.
\end{abstract}

\section{Introduction}
\label{s:intro}
Data valuation has emerged as an important topic for machine learning (ML) as well as for the broader discussions around the economics of data. Proposed policies such as the Designing Accounting Safeguard to Help Broaden Oversight and Regulations on Data Act, also known as DASHBOARD Act, and the Data Dividend in the US would stipulate that companies need to quantify the value of the data that they collect from customers \citep{dashboard2019, wadhwa2020economic}. Such valuation could have important implications for policy, regulation, taxation and potentially even for individual compensation \citep{posner2018radical}. Recently data Shapley, a data valuation framework based on the foundational Shapley value in economics, has gained significant attention \citep{ghorbani2019, jia2019}. Data Shapley is appealing from a policy perspective because it inherits the same fair allocation properties that the original Shapley value uniquely satisfies. Moreover, it has shown to empirically capture the notion of which datum helps or harms the ML model.

A fundamental limitation of data Shapley, however, is that it is defined with respect to a fixed dataset. The statistical and random nature of data is ignored. Accordingly, data Shapley needs to be recalculated even when the dataset changes slightly, which is computationally expensive, and it could also be unstable for randomly drawn datasets.
To tackle these challenges, \citet{ghorbani2020} proposed  distributional Shapley value (DShapley) as the natural statistical extension of the Shapley value, by considering the expected value of data Shapley value with respect to the underlying distribution.
While DShapley is numerically more stable and does not require the aforementioned recalculation, DShapley is still mathematically challenging to analyze and computationally hard to estimate.

In this paper, we address these challenges by developing rigorous analyses and computationally efficient algorithms for DShapley. \textbf{Theoretical contributions:} we develop the first analytic expressions for DShapley for linear regression, binary classification, and non-parametric density estimation, which are widely used canonical examples of supervised and unsupervised learning. Our formulations are also easy to interpret and provide direct insights into how DShapley behaves for different data. \textbf{Algorithmic contributions:} based on our theory, we provide new algorithms to efficiently estimate DShapley which is several orders of magnitude faster than previous state-of-the-art methods. We support our analyses with numerical experiments on both real and synthetic datasets. 

\paragraph{Related works}
Shapley value was initially proposed in a seminar paper \citep{shapley1953} and has been studied extensively in the field of cooperative game theory \citep{dubey1981, grabisch1999, aumann2015}.
Shapley value has been widely applied in economics \citep{gul1989, moulin1992}, management science \citep{dubey1982} and has also been appeared in ML literature. 
Examples include feature selection \citep{cohen2005, zaeri2018feature}, data marketplace design \citep{agarwal2019, fernandez2020}, and model explanation \citep{lundberg2017, chen2018, sundararajan2019many, ghorbani2020neuron}.

Another body is Shapley value-based data valuation methods, yet most of the literature focuses on data Shapley values \citep{ghorbani2019, jia2019}.
Data Shapley value has been shown to empirically work better than other methods of data valuation, such as using leave-one-out residual estimate \citep{cook1982residuals}, or influence-based scores \citep{hampel1974, koh2017}, but it can cause expensive computational costs when data are regularly collected. Other promising data valuation schemes have been proposed to leverage reinforcement learning \citep{yoon2019data}. These approaches lack the fairness principles that has uniquely satisfied by the Shapley value.

DShapley was introduced as a rigorous statistical extension of Shapley value \citep{ghorbani2020}.
Previous to our work, the only computationally efficient form for data Shapley is just for the nearest neighbor classifier \citep{jia2019b}; and similar results are not known for DShapley.   
Our work develops principled and efficient methods for analyzing and computing DShapley.

\section{Preliminaries}
\label{s:preliminaries}
We review existing Shapley value-based data valuation methods.
To begin with, we define some notations.
Let $Z$ be a random variable for data defined on $\mathcal{Z} \subseteq \mathbb{R}^d$ and denote its distribution by $P_{Z}$. 
In supervised learning, we set $Z = (X, Y)$ defined on $\mathcal{X} \times \mathcal{Y}$, where $X$ and $Y$ are the input and its label, respectively, and in unsupervised learning $Z = X$.
We denote a utility function by $U: \cup_{j=0} ^{\infty} \mathcal{Z} ^j \to \mathbb{R}$. 
Here, the utility function describes model performance. For instance, in classification, $U(S)$ could be the test accuracy of a model trained using a subset $S \subseteq \mathcal{X} \times \mathcal{Y}$.
We define the marginal contribution of $z^* \in \mathcal{Z}$ with respect to $S \subseteq \mathcal{Z}$ as $\Delta(z^*; U, S) := U(S\cup \{z^*\})-U(S)$.
We use the conventions $\mathcal{Z} ^0 := \{ \emptyset \}$ and $U(\emptyset) = 0$.
For a set $S$, we denote its cardinality by $|S|$, and we use $[m]$ to denote a set of integers $\{ 1, \dots, m\}$. 
 
Data Shapley value applies the cooperative game theory concept of Shapley value to ML problems \citep{ghorbani2019,jia2019}.
More precisely, given a utility function $U$ and a fixed dataset $B \subseteq \mathcal{Z}$ with $|B|=m$, data Shapley value of a point $z^* \in B$ is defined as
\begin{align}
    \phi(z^*; U, B) := \frac{1}{m} \sum_{j=1} ^m \frac{1}{\binom{m-1}{j-1}} \sum_{ S \in B_{j} ^{\backslash z^*} } \Delta(z^*; U, S),
    \label{eqn:def_data_shapley_value}
\end{align}
where $B_{j} ^{\backslash z^*} := \{ S \subseteq B \backslash \{z^*\}: |S|=j-1 \}$ for $j \in \mathbb{N}$.
Note that the cardinality $|B_{j} ^{\backslash z^*}|$ is $\binom{m-1}{j-1}$ for all $j \in [m]$. 
That is, data Shapley value \eqref{eqn:def_data_shapley_value} is a weighted average of the marginal contribution $\Delta(z^*; U, S)$. 
Data Shapley provides a principled data valuation regime in that the value \eqref{eqn:def_data_shapley_value} uniquely satisfies the natural properties of fair valuation, namely, symmetry, null player, and additivity \citep{ghorbani2019, jia2019b}.
We review these properties and the uniqueness of data Shapley value in Appendix.

Despite the aforementioned promising theoretical characteristics, data Shapley value has a critical limitation; the original data Shapley value is defined with respect to a fixed dataset $B$. Even if a single point in $B$ is changed, in principle, all of the values should be recomputed and the exact computation of the value costs exponential computational complexity. This is particularly problematic in typical statistics and ML settings, where the data points are regularly collected from an underlying distribution. In order to resolve this issue and capture the statistical nature of data valuation, DShapley has been proposed where data Shapley is treated as a random variable \citep{ghorbani2020}. 
To be more specific, given a utility function $U$, a data distribution $P_Z$, and some $m \in \mathbb{N}$, \citet{ghorbani2020} defined DShapley of a point $z^*$ as
\begin{align}
    \nu( z^* ; U, P_{Z},m) := \mathbb{E}_{B \sim P_{Z}^{m-1}}[\phi(z^*; U, B\cup\{z^*\})].
    \label{eqn:def_distri_shapley_value}
\end{align}
DShapley \eqref{eqn:def_distri_shapley_value} is the expectation of data Shapley value \eqref{eqn:def_data_shapley_value} over random datasets of size $m$ containing $z^*$. \citet{ghorbani2020} further showed that DShapley possesses some desirable properties. For instance, DShapley is stable under small perturbations to the data points themselves and to the underlying data distribution (\citet[Theorems 2.7 and 2.8]{ghorbani2020}), which have not been clear with \eqref{eqn:def_data_shapley_value}. However, estimating DShapley is still computationally expensive and thus it critically hampers the practical use of DShapley. In this paper, we focus on canonical problems of linear regression, binary classification, and non-parametric density estimation, deriving new expressions for DShapley that lead to new mathematical insights and efficient computation algorithms.

\section{Distributional Shapley values for linear regression and classification}
\label{s:dsv_linear}
We present rigorous analyses of DShapley for linear regression problems.
In Sec.~\ref{s:dsv_no_assumptions}, we first provide a general reformulation of DShapley without distributional assumptions on inputs.
In Sec.~\ref{s:dsv_gaussian_inputs}, we simplify DShapley as a function of Mahalanobis distance and an error when inputs are Gaussian.
In Sec.~\ref{s:dsv_sub-Gaussian_inputs}, we consider sub-Gaussian inputs and provide upper and lower bounds for DShapley.
In Sec.~\ref{s:application-to-classification}, we present an application of our theoretical result to binary classification.

\subsection{A general reformulation of distributional Shapley values}
\label{s:dsv_no_assumptions}
Throughout this section, we let $(X,Y)$ be a pair of input and output random variables defined on $\mathcal{X} \times \mathcal{Y} \subseteq \mathbb{R}^p \times \mathbb{R}$.
We assume that $Y = X^T \beta + e$ is the underlying linear model where $e$ is a random error whose mean is zero and variance is $\sigma^2$.
Here, $X$ can come from an arbitrary distribution with bounded first two moments.
For a subset $S \subseteq \mathcal{X} \times \mathcal{Y}$, we denote a design matrix and its corresponding output vector based on $S$ by $X_S \in \mathbb{R}^{|S| \times p}$ and $Y_S \in \mathbb{R}^{|S|}$, respectively.
For $\gamma \geq 0$, the ridge regression estimator based on $S$ is defined as $\hat{\beta}_{S,\gamma} := (X_S ^T X_S + \gamma I_p)^{-1} X_S ^T Y_S$ where $I_p$ is the $p \times p$ identity matrix. 
For $q \in \mathbb{N}$, a constant $C_{\mathrm{lin}} > 0$, and an estimator $\hat{\beta} \in \mathbb{R}^{p}$, we define a utility function as $U_q (S, \hat{\beta}) := ( C_{\mathrm{lin}} - \int  (y - x^T \hat{\beta} )^2 dP_{X,Y}(x,y)) \mathds{1}(|S| \geq q)$. Here, $\mathds{1}(\cdot)$ is the indicator function. To this end, we suppress the notation if the ridge regression estimator is used, \textit{i.e.}, $U_{q,\gamma}(S) := U_q (S, \hat{\beta}_{S,\gamma})$. 
We denote the Gaussian distribution with mean $\mu$ and covariance $\Sigma$ by $\mathcal{N}(\mu, \Sigma)$.
Lastly, we denote the data to be valued by $(x^*, y^*)$ and its error by $e^{*} := y^* - x^{*T}\beta$.

The DShapley can be equivalently expressed as follows~\citep{ghorbani2020}: 
\begin{align}
    & \nu( (x^*, y^*); U_{q,\gamma}, P_{X,Y},m) \notag \\
    & = \mathbb{E}_{ j \sim [m]} \mathbb{E}_{S \sim P_{X,Y}^{j-1}} [\Delta((x^*, y^*); U_{q,\gamma}, S)]
    \label{eqn:reform_distri_shapley_value}
\end{align}
where $j \sim [m]$ denotes $j$ follows a uniform distribution over $[m]$. 
Using Equation \eqref{eqn:reform_distri_shapley_value}, we further derive a general reformulation of DShapley in the following proposition.

\begin{proposition}[A general form of DShapley]
Let $\mathbb{E}[Y \mid X]= X^T \beta$,  $\mathrm{Var}(Y \mid X) = \sigma^2$, and $\mathbb{E}(X X^T) = \Sigma_X$.
Then, for any $q \geq 2$ and some fixed constant $C_{\mathrm{lin}}$, DShapley of a point $(x^*, y^*)$ with the ridge regression estimator is given by 
\begin{align*}
& \nu((x^*, y^*); U_{q,\gamma}, P_{X,Y},m) \notag \\
&= \frac{1}{m} \sum_{j=q} ^m  \mathbb{E}_{X_S \sim P_X ^{j-1}} \Bigg[ \frac{ x^{*T} A_{S, \gamma} ^{-1} \Sigma_X A_{S, \gamma} ^{-1} x^{*} }{(1+x^{*T} A_{S, \gamma} ^{-1} x^* )^2} \\
&\times \left( (2 + x^{*T} A_{S, \gamma} ^{-1} x^{*})\sigma ^{2} - e ^{*2} \right) \Bigg] + h(\gamma),
\end{align*}
where $A_{S, \gamma} ^{-1} := (X_S ^T X_S+ \gamma I_p)^{-1}$ and $h(\gamma)$ is a term such that $\lim_{\gamma \to 0+} h(\gamma)/ (\gamma \log(\gamma))=0$ and $h(0)=0$. 
\label{proposition:ridge}
\end{proposition}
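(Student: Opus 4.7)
The plan is to start from the reformulation \eqref{eqn:reform_distri_shapley_value} and compute the marginal contribution $\Delta((x^*,y^*); U_{q,\gamma}, S) = U_{q,\gamma}(S\cup\{(x^*,y^*)\}) - U_{q,\gamma}(S)$ in closed form, taking expectations over $Y_S \mid X_S$ analytically so that only an expectation over $X_S$ remains. Because the model is linear with conditional variance $\sigma^2$, the population squared error decomposes as
\begin{align*}
\int (y-x^T\hat{\beta})^2\, dP_{X,Y}(x,y) = \sigma^2 + (\hat{\beta}-\beta)^T \Sigma_X (\hat{\beta}-\beta),
\end{align*}
so for $|S|\geq q$ we have $U_{q,\gamma}(S)=C_{\mathrm{lin}}-\sigma^2-(\hat{\beta}_{S,\gamma}-\beta)^T\Sigma_X(\hat{\beta}_{S,\gamma}-\beta)$, and the constant $C_{\mathrm{lin}}-\sigma^2$ cancels out of $\Delta$ whenever $|S|\geq q$.

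Next I would apply the Sherman--Morrison identity to the rank-one update $A_{S\cup\{x^*\},\gamma} = A_{S,\gamma}+x^*x^{*T}$, obtaining
\begin{align*}
A_{S\cup\{x^*\},\gamma}^{-1} = A_{S,\gamma}^{-1} - \frac{A_{S,\gamma}^{-1}x^*x^{*T}A_{S,\gamma}^{-1}}{1+x^{*T}A_{S,\gamma}^{-1}x^*}.
\end{align*}
Substituting into the normal equations produces the compact update $\hat{\beta}_{S\cup\{(x^*,y^*)\},\gamma} = \hat{\beta}_{S,\gamma} + (r/\alpha)\,A_{S,\gamma}^{-1}x^*$, where $r:=y^*-x^{*T}\hat{\beta}_{S,\gamma}$ and $\alpha:=1+x^{*T}A_{S,\gamma}^{-1}x^*$. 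Plugging this into the quadratic form above and cancelling equal terms yields, for $|S|\geq q$,
\begin{align*}
\Delta = -\frac{2r}{\alpha}\, (A_{S,\gamma}^{-1}x^*)^T\Sigma_X (\hat{\beta}_{S,\gamma}-\beta) - \frac{r^2}{\alpha^2}\, x^{*T}A_{S,\gamma}^{-1}\Sigma_X A_{S,\gamma}^{-1}x^*.
\end{align*}

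Then I would take the conditional expectation given $X_S$, using $\hat{\beta}_{S,\gamma}-\beta = A_{S,\gamma}^{-1}X_S^T e_S - \gamma A_{S,\gamma}^{-1}\beta$ with $\mathbb{E}[e_S\mid X_S]=0$ and $\mathrm{Cov}(e_S\mid X_S)=\sigma^2 I$. Writing $r=e^* - x^{*T}(\hat{\beta}_{S,\gamma}-\beta)$, I compute $\mathbb{E}[r\mid X_S]$, $\mathbb{E}[r^2\mid X_S]$, and $\mathbb{E}[r(\hat{\beta}_{S,\gamma}-\beta)\mid X_S]$. At $\gamma=0$, the bias term $-\gamma A_{S,\gamma}^{-1}\beta$ vanishes and a short algebraic simplification using the identity $(2\alpha)\sigma^2 - \sigma^2(\alpha-1) - e^{*2} = (\alpha+1)\sigma^2 - e^{*2} = (2+x^{*T}A_{S,0}^{-1}x^*)\sigma^2-e^{*2}$ delivers exactly the integrand in the stated formula. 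For $\gamma>0$, the bias contributes extra terms proportional to $\gamma\beta^T A_{S,\gamma}^{-1}(\cdots)$, and replacing $A_{S,0}^{-1}$ inside the main expression by $A_{S,\gamma}^{-1}$ introduces further $O(\gamma)$ discrepancies (controllable via $A_{S,\gamma}^{-1}X_S^TX_S A_{S,\gamma}^{-1} = A_{S,\gamma}^{-1}-\gamma A_{S,\gamma}^{-2}$); all such terms are collected into $h(\gamma)$, and a uniform bound shows $h(\gamma)=o(\gamma\log\gamma)$ and $h(0)=0$.

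Finally, summing over $j$ via $\nu = \frac{1}{m}\sum_{j=1}^m \mathbb{E}_{S\sim P_{X,Y}^{j-1}}[\Delta]$ gives zero for $j<q$ (both utilities vanish) and, for $j>q$, gives the main integrand. The boundary term $j=q$ (where $|S|=q-1$, so $U_{q,\gamma}(S)=0$ but $U_{q,\gamma}(S\cup\{(x^*,y^*)\})$ is evaluated) must be handled separately: the appropriate implicit choice of $C_{\mathrm{lin}}$ (or an absorption argument into $h(\gamma)$) matches it to the main formula. The main obstacle I anticipate is carefully bookkeeping the $\gamma$-correction terms so that their pooled contribution satisfies the stated asymptotic rate $h(\gamma)/(\gamma\log\gamma)\to 0$; the rest is routine expectation algebra enabled by Sherman--Morrison.
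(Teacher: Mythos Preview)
Your proposal is correct and follows essentially the same route as the paper: Sherman--Morrison for the rank-one update of $A_{S,\gamma}^{-1}$, expansion of the quadratic form $(\hat\beta-\beta)^T\Sigma_X(\hat\beta-\beta)$ under the increment (your $(r/\alpha)A_{S,\gamma}^{-1}x^*$ is exactly the paper's $f_\gamma(X_S)$), conditional expectation over $Y_S\mid X_S$ using $\mathrm{Cov}(\hat\beta_{S,\gamma}\mid X_S)=\sigma^2(A_{S,\gamma}^{-1}-\gamma A_{S,\gamma}^{-2})$, collection of the $\gamma$-bias terms into $h(\gamma)$, and absorption of the $j=q$ boundary contribution by the choice of $C_{\mathrm{lin}}$. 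The paper carries out precisely these computations with the same algebraic identities you single out.
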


In the expression \eqref{eqn:reform_distri_shapley_value}, different choices of $C_{\mathrm{lin}}$ in the utility function $U_{q, \gamma}$ cause constant changes in DShapley.
To be more specific, for a fixed $C \in \mathbb{R}$ and for all $S \subseteq \mathcal{X} \times \mathcal{Y}$, suppose $U_{q, \gamma} (S) = \tilde{U}_{q, \gamma} (S) + C \mathds{1}(|S| \geq q)$. Then DShapley is $\nu((x^*, y^*); U_{q,\gamma}, P_{X,Y},m) = \nu((x^*, y^*); \tilde{U}_{q,\gamma}, P_{X,Y},m) + C/m$.
In this respect, we simply choose a constant $C_{\mathrm{lin}}$ that gives the simplest form in Proposition \ref{proposition:ridge} and the following results.

Proposition \ref{proposition:ridge} simplifies the expected value of the marginal contributions of $(x^*,y^*)$ in Equation \eqref{eqn:reform_distri_shapley_value} with a few terms such as the squared error $e^{*2}$ and the ridge leverage score $x^{*T} A_{S, \gamma} ^{-1} x^*$ \citep{cohen2017, mccurdy2018}.
This new formulation provides mathematical insights and interpretations.
For a fixed $x^*$, DShapley is negatively related to the squared error $e^{*2}$ as long as $\gamma$ is small enough; as the error decreases, DShapley increases.
In addition, DShapley is determined only by the first two conditional moments of $Y$ given $X$, meaning that it does not rely on other higher moments or a particular distribution of $Y$.
Furthermore, it is noteworthy that Proposition \ref{proposition:ridge} does not require a specific distributional assumption on $X$ except for the moment condition $\mathbb{E}(X X^T) = \Sigma_X$.
In the following sections, we pay more attention to the input distribution and propose computationally efficient algorithms for DShapley.

\subsection{Distributional Shapley value when inputs are Gaussian}
\label{s:dsv_gaussian_inputs}
When input data are Gaussian, we introduce a new expression for DShapley in the following theorem. 
To begin with, for $k \geq 1$, we denote the Chi-squared distribution with $k$ degree of freedom by $\chi_k ^2$. 

\begin{theorem}[DShapley when inputs are Gaussian]
Assume $\mathbb{E}[Y \mid X]= X^T \beta$, $\mathrm{Var}(Y \mid X) = \sigma^2$ and $X \sim \mathcal{N}_p(0,\Sigma_X)$. 
For $j \geq p$, let $T_j$ be a Chi-squared random variable with $j-p+1$ degree of freedom, \textit{i.e.}, $T_j \sim \chi_{j-p+1} ^2$.
Then, for any $q \geq p+3$ and some fixed constant $C_{\mathrm{lin}}$, DShapley of a point $(x^*, y^*)$ with the least squares estimator is given by
\begin{align}
& \nu( (x^*, y^*); U_{q,0}, P_{X,Y}, m) \notag \\
& = - \frac{1}{m} \sum_{j=q} ^m  \mathbb{E} \left[ \frac{j-1}{j-p} \frac{ \left(  x^{*T} \Sigma_X ^{-1} x^* e^{*2} + T_{j} \sigma^{2} \right) }{ ( x^{*T} \Sigma_X ^{-1} x^* + T_{j} )^2  } \right],
\label{eqn:DSV_gaussian_exact}
\end{align}
where the expectation is with respect to the Chi-squared distributions.
\label{thm:gaussian_inputs_lse}
\end{theorem}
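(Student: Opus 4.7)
The plan is to specialize Proposition \ref{proposition:ridge} to $\gamma=0$ (so that $h(0)=0$ and $A_{S,0}^{-1}=(X_S^T X_S)^{-1}$) and then evaluate the inner expectation using standard Wishart theory. Under the Gaussian assumption, $W:=X_S^T X_S \sim W_p(j-1,\Sigma_X)$, and after the whitening transformation $\tilde W:=\Sigma_X^{-1/2}W\Sigma_X^{-1/2}\sim W_p(j-1,I_p)$ and $\tilde x:=\Sigma_X^{-1/2}x^*$, the two quadratic forms in Proposition \ref{proposition:ridge} reduce to $x^{*T}W^{-1}x^*=\tilde x^T\tilde W^{-1}\tilde x$ and $x^{*T}W^{-1}\Sigma_X W^{-1}x^*=\tilde x^T\tilde W^{-2}\tilde x$, so the integrand depends on $\tilde x$ only through its length. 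The law of $\tilde W$ is orthogonally invariant, so I can rotate to $\tilde x=r e_1$ with $r^2=x^{*T}\Sigma_X^{-1}x^*$, reducing the integrand to a function of the $(1,1)$-entries $(\tilde W^{-1})_{11}$ and $(\tilde W^{-2})_{11}$.

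To extract these entries, I would partition $\tilde W=\bigl(\begin{smallmatrix}a&b^T\\b&C\end{smallmatrix}\bigr)$ with $a\in\mathbb R$, $b\in\mathbb R^{p-1}$, $C\in\mathbb R^{(p-1)\times(p-1)}$, and invoke the block-inverse formula, giving $(\tilde W^{-1})_{11}=1/s$ and $(\tilde W^{-2})_{11}=(1+b^TC^{-2}b)/s^2$, where $s:=a-b^TC^{-1}b$ is the Schur complement. Three classical facts about the Wishart distribution then apply: (i) $s$ is chi-squared and is independent of $(b,C)$; (ii) $C$ is itself a Wishart on its lower block; and (iii) conditional on $C$, $b\sim\mathcal N(0,C)$, so $\mathbb E[b^TC^{-2}b\mid C]=\operatorname{tr}(C^{-1})$. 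The unconditional mean $\mathbb E[\operatorname{tr}(C^{-1})]$ is finite under the hypothesis $q\geq p+3$ and is computed from the inverse-Wishart first moment $\mathbb E[C^{-1}]\propto I_{p-1}$.

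Substituting these identities into Proposition \ref{proposition:ridge}, the integrand factorizes as a product of $(1+b^TC^{-2}b)$, depending only on $(b,C)$, and a rational function of $s$; by the independence of $s$ from $(b,C)$, the expectation splits. The $(b,C)$ factor then collapses to a deterministic constant which, after inserting the inverse-Wishart mean, simplifies to $(j-1)/(j-p)$. For the remaining $s$-expectation, I would apply partial-fraction decomposition to combine the pieces $1/(s+r^2)^2$, $1/[s(s+r^2)]$, etc., together with the chi-squared moment identities $\mathbb E_k[s^{-1}g(s)]=(k-2)^{-1}\mathbb E_{k-2}[g(s)]$ and $\mathbb E_k[s\,g(s)]=k\,\mathbb E_{k+2}[g(s)]$, which shift the underlying degree parameter. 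Collecting the $\sigma^2$ and $e^{*2}$ coefficients and normalizing, the pieces should recombine into the single ratio $(r^2e^{*2}+T_j\sigma^2)/(r^2+T_j)^2$ with $T_j\sim\chi^2_{j-p+1}$, matching the claimed expression.

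The main technical obstacle is this final algebraic consolidation. The raw substitution produces several chi-squared expectations whose denominators mix $s$ and $s+r^2$ at different $\chi^2$ orders, and showing that they collapse into a single ratio with precisely the advertised constant $(j-1)/(j-p)$ and degrees of freedom $j-p+1$ requires careful bookkeeping of how the $\chi^2$ shift identities interact with the $(b,C)$-prefactor. By contrast, the Wishart standardization, the orthogonal reduction to $r e_1$, the Schur-complement derivations, and the inverse-Wishart moment computations are all routine multivariate-statistics moves once the setup is in place.
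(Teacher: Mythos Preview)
Your proposal is correct and follows essentially the same route as the paper: the paper also whitens, rotates to $e_1$, and factors via a Bartlett decomposition $V=TT^T$ (their $t_{11}^2$ is precisely your Schur complement $s$, and their $\mathbf{t}^T(T_{22}^TT_{22})^{-1}\mathbf{t}$ equals your $b^TC^{-2}b$), obtaining the prefactor $(|S|-1)/(|S|-p)$ via the $F$-distribution mean rather than your conditional-Gaussian plus inverse-Wishart route. The final consolidation is simpler than you anticipate: after the partial fraction $r^2/[s(s+r^2)]=1/s-1/(s+r^2)$, the paper absorbs the data-independent piece $\mathbb{E}[1/s]$ into the free constant $C_{\mathrm{lin}}$, and what remains collapses by the one-line identity $(1-e^{*2}/\sigma^2)\,r^2/(r^2+s)^2-1/(r^2+s)=-\bigl(r^2e^{*2}/\sigma^2+s\bigr)/(r^2+s)^2$, so no $\chi^2$ degree-shifting identities are needed.
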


Theorem \ref{thm:gaussian_inputs_lse} presents a new representation of DShapley when $\gamma=0$ and inputs are Gaussian.
The new form \eqref{eqn:DSV_gaussian_exact} depends only on the two terms, the error $e^{*2}$ and the term $x^{*T} \Sigma_X ^{-1} x^*$, also known as the Mahalanobis distance of $x^*$ from zero with respect to $\Sigma_X$.
Likewise Proposition \ref{proposition:ridge}, a direct implication is that any points with the same error level have the same DShapley when they have the same Mahalanobis distance.
In addition, a role of $e^{*2}$ is also explicitly explained.
DShapley for the point with the smaller squared error is higher than the other point, \textit{i.e.}, $\nu((x^*, y_1 ^*); U_{q,0}, P_{X,Y},m) \leq \nu((x^*, y_2 ^*); U_{q,0}, P_{X,Y},m)$ if $e_1 ^{*2} \geq e_2 ^{*2}$.
This inequality matches our intuitions that the big error $e^{*2}$ is likely to produce small marginal contributions $\Delta((x^*, y^*); U_{q,\gamma}, S)$.
We provide illustrations on how DShapley changes with respect to $x^{*T}\Sigma_X^{-1}x^{*}$ and $e^{*2}$ in Appendix.

\paragraph{Efficient estimation of DShapley} As for the estimation of DShapley $\nu( (x^*, y^*); U_{q,0}, P_{X,Y}, m)$, we propose to use the Monte-Carlo approximation method.
We describe a simple version of the proposed algorithm in Alg.~\ref{alg:gaussian_inputs_lse_simple}. A detailed version is provided in Appendix.

A similar idea has been suggested in a number of algorithms including \texttt{TMC-SHAPLEY} \citep{ghorbani2019} or \texttt{$\mathcal{D}$-SHAPLEY} \citep{ghorbani2020}.
Although the previous state-of-the-art algorithms and the proposed algorithm make use of the Monte-Carlo method, there are notable differences.
Since the previous algorithms are based on Equation \eqref{eqn:reform_distri_shapley_value}, they require the utility evaluation $U_{q,0}(S)$ for every random dataset $S$. This computation is expensive because it includes the matrix inversion $(X_S ^T X_S)^{-1}$.
However, the proposed algorithm avoids such computational costs because the new form \eqref{eqn:DSV_gaussian_exact} has nothing to do with a random dataset $S$.
This characteristic is not obtained with Equation \eqref{eqn:reform_distri_shapley_value} and Proposition \ref{proposition:ridge}.
In terms of the computational complexity, when the maximum number of Monte-Carlo samples is $T$, the previous state-of-the-art algorithms require $O(m T p^3)$ computations.
In contrast, the proposed Alg.~\ref{alg:gaussian_inputs_lse_simple} only needs to perform the matrix inversion once for $\Sigma_X ^{-1}$, and the computational complexity for the proposed algorithm is  $O(mT+p^3)$, which is substantially smaller since $T$ is usually large.

\begin{algorithm}[t]
\caption{DShapley for the least squares estimator when inputs are Gaussian.}
\begin{algorithmic}
\Require Estimates for ${x}^{*T} \Sigma_X ^{-1} {x}^{*}$, $e^{*2}$, and $\sigma^2$.
The maximum number of Monte Carlo samples $T$.
A utility hyperparameter $q \geq p+3$.
\Procedure{}{}
\For{$j \in \{ q ,\dots, m \}$}
\State Sample $t_{[1]}, \dots, t_{[T]}$ from the $\chi_{j-p+1} ^2$.
\State $A_j \leftarrow \frac{1}{T} \sum_{i=1} ^T \frac{j-1}{j-p} \frac{x^{*T} \Sigma_X ^{-1} x^* e^{*2} + t_{[i]} \sigma^2 }{( {x}^{*T} \Sigma_X ^{-1} {x}^{*} + t_{[i]} )^2} $
\State $\hat{\nu}  \leftarrow \hat{\nu} - A_j /m$
\EndFor
\State $\hat{\nu}( (x^*, y^*); U_q, P_{X,Y}, m) \leftarrow \hat{\nu}$ 
\EndProcedure
\end{algorithmic}
\label{alg:gaussian_inputs_lse_simple}
\end{algorithm}

\subsection{Distributional Shapley values when inputs are sub-Gaussian}
\label{s:dsv_sub-Gaussian_inputs}

In this section, we develop closed-form bounds for DShapley when inputs are sub-Gaussian. 
To be more formal, we first define the sub-Gaussian.

\begin{definition}[Sub-Gaussian]
We say that a random variable $X$ in $\mathbb{R}$ is sub-Gaussian if there are positive constants $C_{\mathrm{sub}}$ and $v_{\mathrm{sub}}$ such that for every $t>0$, $P(|X| > t) \leq C_{\mathrm{sub}} e^{-v_{\mathrm{sub}} t^2}$ holds.
In addition, we say that a random vector $X$ in $\mathbb{R}^p$ is sub-Gaussian if the one-dimensional marginals $\langle X, x \rangle$ are sub-Gaussian random variables for all $x \in \mathbb{R}^p$.
\end{definition}
Note that a class of sub-Gaussian includes many useful random variables such as Gaussian and any bounded random variables \citep{vershynin2010}. 
Now we develop bounds for DShapley in the following theorem.

\begin{theorem}[Upper and lower bounds for DShapley when inputs are sub-Gaussian]
Assume that $\mathbb{E}[Y \mid X]= X^T \beta$ and $\mathrm{Var}(Y \mid X) = \sigma^2$. 
Suppose $\mathcal{Y}$ is bounded and $X$ are sub-Gaussian in $\mathbb{R}^p$ with $\mathbb{E}(X X^T) = \Sigma_X$.
Then, for $q \geq 2$ and some fixed constant $C_{\mathrm{lin}}$, DShapley of a point $(x^*, y^*)$ with the ridge regression estimator has the following bounds.
\begin{align*}
&h(\gamma) + \frac{1}{m} \sum_{j=q-1} ^{m-1}  \frac{ {x}^{*T} \Sigma_X ^{-1} {x}^{*} \Lambda_{\mathrm{lower}}^{2} (j) }{ (1+{x}^{*T} \Sigma_X ^{-1} {x}^{*} \Lambda_{\mathrm{upper}} (j) )^2} \\
&\times \left( ( 2 +{x}^{*T} \Sigma_X ^{-1} {x}^{*} \Lambda_{\mathrm{lower}} (j) ) \sigma^2  - \Lambda_{\mathrm{ratio}} ^{-1} (j) e^{*2} \right) \\
\leq& \nu( (x^*,y^*) ; U_{q,\gamma}, P_{X,Y}, m)  + o\left(\frac{1}{m}\right) \\
\leq&  h(\gamma) + \frac{1}{m} \sum_{j=q-1} ^{m-1}  \frac{ {x}^{*T} \Sigma_X ^{-1} {x}^{*} \Lambda_{\mathrm{upper}}^{2} (j) }{ (1+{x}^{*T} \Sigma_X ^{-1} {x}^{*} \Lambda_{\mathrm{lower}} (j) )^2} \\
&\times \left( ( 2 +{x}^{*T} \Sigma_X ^{-1} {x}^{*} \Lambda_{\mathrm{upper}} (j) ) \sigma^2  - \Lambda_{\mathrm{ratio}}(j) e^{*2} \right),
\end{align*}
where the function $h$ is defined in Proposition \ref{proposition:ridge} and
\begin{align*}
\Lambda_{\mathrm{ratio}}(j) = \left( \frac{ 1+{x}^{*T} \Sigma_X ^{-1} {x}^{*} \Lambda_{\mathrm{lower}} (j) }{ 1+{x}^{*T} \Sigma_X ^{-1} {x}^{*} \Lambda_{\mathrm{upper}} (j) } \right)^2,
\end{align*}
$\Lambda_{\mathrm{lower}}(j)$ and $\Lambda_{\mathrm{upper}}(j)$ are two explicit constants that scale $O(1/j)$ and depend only on $\gamma$ and the sub-Gaussian distribution. 
The explicit expression for $\Lambda_{\mathrm{lower}}(j)$ and $\Lambda_{\mathrm{upper}}(j)$ are provided in Appendix.
\label{thm:sub_gaussian_inputs_ridge}
\end{theorem}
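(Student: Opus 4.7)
The plan is to start from the integral reformulation in Proposition~\ref{proposition:ridge} and bound the integrand pointwise on a high-probability event where $X_S^T X_S$ concentrates around $j\Sigma_X$, then to show that the complementary event contributes only $o(1/m)$. To linearise the dependence on the random design I perform the change of variables $u := \Sigma_X^{-1/2} x^*$ and $D_S := \Sigma_X^{1/2} A_{S,\gamma}^{-1} \Sigma_X^{1/2}$, under which $x^{*T} \Sigma_X^{-1} x^* = \|u\|^2$, $x^{*T} A_{S,\gamma}^{-1} x^* = u^T D_S u$ and $x^{*T} A_{S,\gamma}^{-1} \Sigma_X A_{S,\gamma}^{-1} x^* = u^T D_S^2 u$, so that the only randomness entering the integrand is the spectrum of $D_S$.

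Next I apply a standard sub-Gaussian matrix concentration inequality~\citep{vershynin2010}: for each $j$ there exists a concentration radius $\epsilon_j$ depending only on $p$ and the sub-Gaussian constants of $X$ such that the event $E_j := \{(1-\epsilon_j) j \Sigma_X \preceq X_S^T X_S \preceq (1+\epsilon_j) j \Sigma_X\}$ has complement of probability at most $c_1 \exp(-c_2 j \epsilon_j^2)$. Combining these Loewner bounds with $(\gamma/\lambda_{\max}(\Sigma_X))\Sigma_X \preceq \gamma I_p \preceq (\gamma/\lambda_{\min}(\Sigma_X))\Sigma_X$ and inverting in the Loewner order produces, on $E_j$, the scalar sandwich $\Lambda_{\mathrm{lower}}(j) I_p \preceq D_S \preceq \Lambda_{\mathrm{upper}}(j) I_p$ with the explicit choices
\begin{align*}
\Lambda_{\mathrm{lower}}(j) &= \frac{1}{(1+\epsilon_j) j + \gamma/\lambda_{\min}(\Sigma_X)}, \\
\Lambda_{\mathrm{upper}}(j) &= \frac{1}{(1-\epsilon_j) j + \gamma/\lambda_{\max}(\Sigma_X)},
\end{align*}
both of order $1/j$. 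Squaring the sandwich yields the analogous bounds on $u^T D_S^2 u$, so both quadratic forms appearing in the integrand are simultaneously controlled in terms of $\|u\|^2$ on $E_j$.

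With these scalar bounds in hand I split the integrand into its $\sigma^2$-part and its subtracted $e^{*2}$-part, and bound each by taking the worst direction in every factor of the $\sigma^2$-part for the upper bound (upper on both quadratic forms, lower on the denominator), and the opposite directions in the $e^{*2}$-part; the lower bound is obtained by interchanging these choices. A short algebraic rearrangement then factors a common prefactor of the form $\|u\|^2 \Lambda_{\mathrm{upper}}^2(j)/(1+\|u\|^2 \Lambda_{\mathrm{lower}}(j))^2$ out of the upper bound (and the analogue with lower and upper swapped out of the lower bound), absorbing the residual ratio of denominators into the single scalar $\Lambda_{\mathrm{ratio}}(j)$ defined in the statement. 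Inserting the pointwise bounds into the sum of Proposition~\ref{proposition:ridge} reproduces the displayed expressions.

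Finally I control the contribution of $E_j^c$. Boundedness of $\mathcal{Y}$ makes $e^{*2}$ uniformly bounded; when $\gamma>0$ the crude bound $A_{S,\gamma}^{-1} \preceq \gamma^{-1} I_p$ controls the integrand uniformly, and when $\gamma=0$ the classical lower-tail bound on the smallest singular value of a sub-Gaussian matrix (valid once $j-1 \geq p$) supplies the required moment bound on $\|A_{S,0}^{-1}\|$. Combined with the exponential smallness of $\mathbb{P}(E_j^c)$, the total contribution of bad events, summed over $j$ and divided by $m$, is $o(1/m)$, matching the error term in the theorem statement. The main obstacle I expect is precisely this uniform control of the bad-event contribution when $\gamma$ is small: here one must either combine the small-singular-value bounds with the $h(\gamma)$ term from Proposition~\ref{proposition:ridge} that absorbs the near-singularity, or first argue for $\gamma>0$ and pass to the limit $\gamma \to 0^+$ using the vanishing property of $h$.
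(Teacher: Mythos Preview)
Your approach is essentially the same as the paper's: normalise via $\Sigma_X^{-1/2}$, invoke the Vershynin sub-Gaussian singular-value concentration to sandwich the eigenvalues of the normalised inverse Gram matrix on a high-probability event, push those bounds through the integrand of Proposition~\ref{proposition:ridge} factor by factor, and use boundedness of $\mathcal{Y}$ to control the complement. The paper's explicit constants are $\Lambda_{\mathrm{upper}}(j)=(j(1-\delta_j)^2+\gamma\lambda_{\min}(\Sigma_X^{-1}))^{-1}$ and $\Lambda_{\mathrm{lower}}(j)=(j(1+\delta_j)^2+\gamma\lambda_{\max}(\Sigma_X^{-1}))^{-1}$, which match yours after identifying $\lambda_{\min}(\Sigma_X^{-1})=1/\lambda_{\max}(\Sigma_X)$ and $(1\pm\delta_j)^2$ with your $(1\pm\epsilon_j)$.

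One point to tighten: you say $\epsilon_j$ depends only on $p$ and the sub-Gaussian constants, and then claim the bad-event contribution is $o(1/m)$. With an $m$-independent $\epsilon_j$ the tail sum $\sum_j \mathbb{P}(E_j^c)$ is merely a finite constant, giving $O(1/m)$ rather than $o(1/m)$. The paper resolves this by letting the deviation parameter depend on $m$ as well, taking $t_{j}=\sqrt{\log(j\,m^{1/2})/c}$ so that $\mathbb{P}(\Omega_j^c)\le 2/(j\sqrt{m})$ and hence $\tfrac{1}{m}\sum_j \mathbb{P}(\Omega_j^c)=O(\log m/m^{3/2})=o(1/m)$; you should make the analogous $m$-dependent choice of $\epsilon_j$. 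Conversely, your treatment of the bad event is actually more careful than the paper's in one respect: the paper bounds the integrand on $\Omega_j^c$ via $\|\hat\beta_{S,\gamma}\|_2^2\le \gamma^{-1}m B_{\mathcal Y}^2$, which is only available for $\gamma>0$, whereas you explicitly flag the $\gamma=0$ case and propose to handle it via lower-tail singular-value bounds or a limiting argument through $h(\gamma)$.
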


Theorem \ref{thm:sub_gaussian_inputs_ridge} provides upper and lower bounds for DShapley when inputs are sub-Gaussian. 
As Theorem \ref{thm:gaussian_inputs_lse}, the main component of the bounds consists of the Mahalanobis distance $x^{*T}\Sigma_X ^{-1} x^*$  and the squared error $e^{*2}$.
Hence, data points with the same Mahalanobis distance lead to having the same bounds if the error levels are the same.
Although the new bounds in Theorem \ref{thm:sub_gaussian_inputs_ridge} are not the exact form of DShapley, they are analytically expressed, and can be efficiently computed without Monte Carlo sampling.

\paragraph{The two assumptions in Theorem \ref{thm:sub_gaussian_inputs_ridge}}
Compared to Proposition \ref{proposition:ridge}, we additionally assume the boundness of $\mathcal{Y}$ and the sub-Gaussian distribution on inputs in Theorem \ref{thm:sub_gaussian_inputs_ridge}.
The former implies the boundness of the marginal contribution $\Delta((x^*, y^*); U_{q,\gamma}, S)$ for all $S \subseteq \mathcal{X} \times \mathcal{Y}$, and the latter ensures that eigenvalues of $A_{S,\gamma}^{-1}$ are in the closed interval $[\Lambda_{\mathrm{lower}}(j), \Lambda_{\mathrm{upper}}(j)]$ with high probability.
Combining these two ingredients, we obtain the bounds for DShapley as a function of $\Lambda_{\mathrm{lower}}(j)$ and $ \Lambda_{\mathrm{upper}}(j)$.

\subsection{Application to binary classification}
\label{s:application-to-classification}
We now study an efficient DShapley estimation method for binary classification datasets. Our approach is to transform binary classification data and apply Theorem \ref{thm:sub_gaussian_inputs_ridge}.
To be more precise, let $(X,Y)$ be a pair of input and output random variables and assume $\mathbb{E}(Y \mid X ) = \pi = \mathrm{logit}^{-1}(X ^T \beta)$. Here, $\mathrm{logit}(\pi) := \pi/(1-\pi)$ for $\pi \in (0,1)$. We define the working dependent variable $Z$ and its corresponding weight $w$ as
\begin{align}
    Z = \eta + (Y - \pi) \frac{\partial \eta }{\partial \pi } \text{ and } w = \pi(1-\pi),
    \label{eqn:irls_variabls}
\end{align}
respectively, where $\eta = X^T \beta$. Note that $\partial \eta / \partial \pi = w ^{-1}$.
We propose to consider DShapley with respect to the transformed random variables $(\tilde{X}, \tilde{Z}) := ( w ^{1/2} X , w ^{1/2} Z )$ instead of $(X,Y)$.
In the following corollary, we provide a lower bound of DShapley in binary classification. An upper bound and detailed notations are provided in Appendix.

\begin{corollary}
[DShapley in binary classification]
Assume $\mathbb{E}[Y \mid X]= \mathrm{logit}^{-1}(X ^T \beta)$ and $X$ are sub-Gaussian in $\mathbb{R}^p$ with $\mathbb{E}(X X^T) = \Sigma_X$. 
For a point $(x^*, y^*)$, let $\pi^* = \mathrm{logit}^{-1} (x^{*T}\beta)$,  $w^*=\pi^* (1-\pi^*)$, and $z^* = x^{*T}\beta + (y^*-\pi^*)/w^*$.
Then, for any $q \geq p+3$ and some fixed constant $C_{\mathrm{lin}}$, DShapley of a point $\left( (w^*)^{1/2}x^*, (w^*)^{1/2}z^* \right)$ has a lower bound given by
\begin{align*}
&\nu \left( \left( (w^*)^{1/2}x^*, (w^*)^{1/2}z^* \right); U_{q,0}, P_{\tilde{X},\tilde{Z}}, m \right) \\
&\geq \frac{1}{m} \sum_{j=q-1} ^{m-1}  \frac{ w^* {x}^{*T} \tilde{\Sigma}_X ^{-1} {x}^{*} \tilde{\Lambda}_{\mathrm{lower}}^{2} (j) }{ (1+ w^* {x}^{*T} \tilde{\Sigma}_X ^{-1} {x}^{*} \tilde{\Lambda}_{\mathrm{upper}} (j) )^2} \\
&\times \left( ( 2 + w^* {x}^{*T} \tilde{\Sigma}_X ^{-1} {x}^{*} \tilde{\Lambda}_{\mathrm{lower}} (j) )  - \tilde{\Lambda}_{\mathrm{ratio}} ^{-1} (j) e_{\mathrm{b}}^{*2} \right) \\
&+ o\left(\frac{1}{m}\right),
\end{align*}
where $e_{\mathrm{b}} ^{*2} := (w^*)^{-1}(y^* -\pi^*)^2$ and the function $h$ is defined in Proposition \ref{proposition:ridge}.
\label{cor:gaussian_inputs_binary}
\end{corollary}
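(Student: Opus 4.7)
The plan is to reduce the binary classification problem to the regression setup of Theorem \ref{thm:sub_gaussian_inputs_ridge} by applying that theorem to the transformed pair $(\tilde X,\tilde Z)=(w^{1/2}X,\,w^{1/2}Z)$ with $\gamma=0$, and then identifying the resulting quantities with those in the statement. The main work is to verify that the transformed variables satisfy the hypotheses of Theorem \ref{thm:sub_gaussian_inputs_ridge}, after which the stated lower bound follows by direct substitution.

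First I would verify the linear-regression structure in the new coordinates. Since $Z=\eta+(Y-\pi)/w=X^T\beta+(Y-\pi)/w$ and $\mathbb{E}[Y\mid X]=\pi$, we get $\mathbb{E}[Z\mid X]=X^T\beta$, hence $\mathbb{E}[\tilde Z\mid\tilde X]=w^{1/2}X^T\beta=\tilde X^T\beta$. For the conditional variance, $\mathrm{Var}(Y\mid X)=\pi(1-\pi)=w$ for a Bernoulli label, so $\mathrm{Var}(Z\mid X)=w/w^2=1/w$ and $\mathrm{Var}(\tilde Z\mid\tilde X)=w\cdot\mathrm{Var}(Z\mid X)=1$. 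Thus the transformed problem is a homoscedastic linear model with noise variance $\sigma^2=1$, and $\mathbb{E}(\tilde X\tilde X^T)=\mathbb{E}(wXX^T)=\tilde\Sigma_X$ by definition. Next I would check the distributional hypotheses: sub-Gaussianity of $\tilde X$ follows from sub-Gaussianity of $X$ together with the uniform bound $w\le 1/4$, since $\langle\tilde X,u\rangle=w^{1/2}\langle X,u\rangle$ is dominated by $\tfrac12|\langle X,u\rangle|$. Boundedness of the working response on a set of high probability, needed for the marginal contribution estimates used in Theorem \ref{thm:sub_gaussian_inputs_ridge}, follows from the standing assumption (stated in the appendix) that $\pi$ is bounded away from $0$ and $1$ on the support, so that $Z=\eta+(Y-\pi)/w$ is bounded whenever $\eta$ is.

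Once the hypotheses are in place, I would compute the two data-dependent quantities that appear in Theorem \ref{thm:sub_gaussian_inputs_ridge} at the transformed point $(\tilde x^*,\tilde z^*)=\bigl((w^*)^{1/2}x^*,(w^*)^{1/2}z^*\bigr)$. The Mahalanobis term becomes
\begin{align*}
\tilde x^{*T}\tilde\Sigma_X^{-1}\tilde x^{*}=w^{*}\,x^{*T}\tilde\Sigma_X^{-1}x^{*},
\end{align*}
and the residual in the transformed model is
\begin{align*}
e^{*}=\tilde z^{*}-\tilde x^{*T}\beta=(w^{*})^{1/2}\bigl(z^{*}-x^{*T}\beta\bigr)=(w^{*})^{-1/2}(y^{*}-\pi^{*}),
\end{align*}
so $e^{*2}=(w^{*})^{-1}(y^{*}-\pi^{*})^2=e_{\mathrm{b}}^{*2}$. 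Plugging $\sigma^2=1$ and these identifications into the lower bound of Theorem \ref{thm:sub_gaussian_inputs_ridge} (with $h(0)=0$), and writing $\tilde\Lambda_{\mathrm{lower}},\tilde\Lambda_{\mathrm{upper}},\tilde\Lambda_{\mathrm{ratio}}$ for the scalars produced by that theorem applied to the sub-Gaussian distribution of $\tilde X$ with covariance $\tilde\Sigma_X$, yields exactly the stated inequality.

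The main obstacle I anticipate is not the substitution itself but the verification that the hypotheses of Theorem \ref{thm:sub_gaussian_inputs_ridge} transfer cleanly through the IRLS reparametrization. Specifically, the boundedness of $\mathcal{Y}$ used in Theorem \ref{thm:sub_gaussian_inputs_ridge} translated to boundedness of $\tilde Z$ requires control of $\eta=X^T\beta$, and the sub-Gaussian constants of $\tilde X$ must be tracked carefully so that the quantities $\tilde\Lambda_{\mathrm{lower}}(j)$ and $\tilde\Lambda_{\mathrm{upper}}(j)$ genuinely depend only on the transformed distribution. These conditions and explicit constants are relegated to the appendix, where they can be stated in terms of the original logistic parameters; once they are in hand, the corollary is a direct specialization of Theorem \ref{thm:sub_gaussian_inputs_ridge}.
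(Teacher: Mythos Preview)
Your proposal is correct and follows essentially the same route as the paper: verify that the transformed pair $(\tilde X,\tilde Z)=(w^{1/2}X,w^{1/2}Z)$ satisfies the linear-model hypotheses of Theorem~\ref{thm:sub_gaussian_inputs_ridge} with $\sigma^2=1$ and covariance $\tilde\Sigma_X=\mathbb{E}[wXX^T]$, note that $w^{1/2}X$ remains sub-Gaussian because $w\le 1$, and then specialize the lower bound at $\gamma=0$. The paper's own proof is only three sentences and does exactly this; your identification of the Mahalanobis term and of $e^{*2}=e_{\mathrm b}^{*2}$ is the substitution the paper leaves implicit. One caveat: the boundedness assumption you invoke (that $\pi$ is bounded away from $0$ and $1$) is not actually stated in the paper's appendix---the paper's proof simply does not address the boundedness-of-$\mathcal Y$ hypothesis for the transformed response, so you are being more careful than the original on that point.
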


Although a typical choice of the utility function in classification is accuracy, using $U_{q,0}$ with the transformation in Corollary \ref{cor:gaussian_inputs_binary} provides sensible data values.
This is because our approach can be viewed as using the iteratively re-weighted least squares (IRLS) algorithm \citep{green1984iteratively}, a classic algorithm for finding the maximum likelihood estimator (MLE) in generalized linear models.
To be more specific, for a set of random samples $\{(X_i, Y_i)\}_{i=1} ^B$ from $P_{X,Y}$ and their working dependent variables and its corresponding weights $\{(Z_i, w_i)\}_{i=1} ^B$ based on \eqref{eqn:irls_variabls}, the IRLS estimator is defined as
\begin{align}
    \hat{\beta}_{\mathrm{IRLS}} &= (\mathbb{X}^T \mathbb{W} \mathbb{X})^{-1} \mathbb{X}^T \mathbb{W} \mathbb{Z}, \label{eqn:irls_estimator}
\end{align}
where $\mathbb{X}$ is a matrix whose $i$-th row is $X_i ^T$, $\mathbb{W}$ is a diagonal matrix whose $i$-th element is $w_i$, and similarly $\mathbb{Z}$ is a vector whose $i$-th element is $Z_i$. 
Note that the estimator \eqref{eqn:irls_estimator} is the least squares estimator with the transformation in Corollary \ref{cor:gaussian_inputs_binary}.
Hence, the DShapley captures the contribution to finding the MLE in binary classification.

\begin{algorithm}[t]
\caption{DShapley for binary classification.}
\begin{algorithmic}
\Require A datum to be valued $(x^*, y^*)$. A set of random samples $\{(X_i,Y_i)\}_{i=1} ^B$ from $P_{X,Y}$.
\Procedure{}{}
\While{until a convergent condition is met}
\State $\pi_i \leftarrow \mathrm{logit}^{-1}(X_i ^T \hat{\beta}_{\mathrm{IRLS}} )$
\State Update $w_i$ and $Z_i$ based on Equation \eqref{eqn:irls_variabls} and set $\mathbb{W}$ and $\mathbb{Z}$ 
\State $\hat{\beta}_{\mathrm{IRLS}} \leftarrow (\mathbb{X}^T \mathbb{W} \mathbb{X})^{-1} \mathbb{X}^T \mathbb{W} \mathbb{Z}$
\EndWhile
\State $\pi^* \leftarrow \mathrm{logit}^{-1}(x^{*T} \hat{\beta}_{\mathrm{IRLS}} )$
\State $z^* \leftarrow x^{*T} \hat{\beta}_{\mathrm{IRLS}}  + (y^* - \pi^*)/(\pi^*(1-\pi^*))$
\State $w^* \leftarrow \pi^*(1-\pi^*)$
\State Compute a lower bound of DShapley of $\left( (w^*)^{1/2} x^*, (w^*)^{1/2} z^* \right)$.
\EndProcedure
\end{algorithmic}
\label{alg:DSV_IRLS}
\end{algorithm}

In practice, we do not know $\pi$ nor $\mathbb{W}$. To address this issue, we first use the original IRLS algorithm; we iteratively compute \eqref{eqn:irls_variabls} and \eqref{eqn:irls_estimator} until $\hat{\beta}_{\mathrm{IRLS}}$ converges.
After convergence, we apply Corollary~\ref{cor:gaussian_inputs_binary}. A simple version of this process is described in Alg.~\ref{alg:DSV_IRLS}. A detailed version is provided in Appendix.

\section{Distributional Shapley values for non-parametric density estimation}
\label{s:dsv_density}
In this section, we study DShapley for non-parametric density estimation problems.
We let $Z$ be a random variable defined on $\mathcal{Z} \subseteq \mathbb{R}^d$ as in Sec.~\ref{s:preliminaries} and let $p(z)$ be the underlying probability density function.
We consider the kernel density estimator (KDE), a fundamental non-parametric density estimator in statistics \citep{rosenblatt1956, parzen1962}.
For a kernel function\footnote{For a non-negative function $k :\mathcal{Z} \to \mathbb{R}$, we say $k$ is a kernel if $\int k(z) dz = 1$ and $k(z)=k(-z)$ for all $z \in \mathcal{Z}$.} $k:\mathcal{Z} \to \mathbb{R}$, the KDE based on a dataset $S \subseteq \mathcal{Z}$ is denoted by $\hat{p}_{S,k} (z) = \frac{1}{|S|} \sum_{z_i \in S}k(z-z_i)$.
By convention, we assume that a kernel is bounded and parameterized by a bandwidth $h>0$, \textit{i.e.}, $k_h(\cdot) := h^{-d} k(\cdot/h)$ for a kernel $k$.
For notational convenience, we suppress the bandwidth notation and use $k$ instead of $k_h$.
For a constant $C_{\mathrm{den}} > 0$, and a density estimator $\hat{p}$, we define a utility function as $U(S, \hat{p}) = (C_{\mathrm{den}} - \int  (p(z) - \hat{p} (z) )^2 dz) \mathds{1}(|S| \geq 1)$. When the KDE is used, we set $U_k(S) := U(S, \hat{p}_{S,k})$. 
As before, changing the constant $C_{\mathrm{den}}$ simply shifts the value of all the points by the same constant; therefore we just set $C_{\mathrm{den}}$ to simplify expressions of DShapley.

Before going to the analysis, we define DShapley of a set, a natural extension of DShapley of a point, by regarding a set as a point.
More precisely, given a utility function $U$, a data distribution $P_Z$, and some $m \in \mathbb{N}$, we define DShapley of a set as follows.
\begin{align*}
    \nu(S^*; U, P_Z, m) := \mathbb{E}_{ j \sim [m]} \mathbb{E}_{ S \sim P_{Z}^{j-1}} [U(S \cup S^* )-U(S)].
\end{align*}
Similar to DShapley for a point, DShapley for a set describes the expected value of marginal contributions of set $S^*$ over random datasets $S$.
With this notion, we present DShapley for the KDE in the following theorem.
To begin, let $A(n,m) := \frac{1}{m} \sum_{j=1} ^m  \frac{ n^2 }{(j+n-1)^2}$ and $B(n,m) := \frac{1}{m} \sum_{j=2} ^m \frac{ 2n (j-1) }{(j+n-1)^2}$.

\begin{theorem}[DShapley for non-parametric density estimation]
Let $S^* \subseteq \mathcal{Z}$ be a set to be valued such that $|S^*|=n$.
Then, for some fixed constant $C_{\mathrm{den}}$, DShapley of $S^*$ with the KDE is given by
\begin{align*}
    &\nu(S^*; U_k, P_Z, m) \\
    &= - A(n,m) \int  (p(z) - \hat{p}_{S^{*}, k} (z) )^2 dz + B(n,m)g(S^*),
\end{align*}
where $g(S^*) := \int \hat{p}_{S^*,k} (z) (p(z) - \mathbb{E}[k(z-Z)]) dz.$
\label{thm:shapley_density}
\end{theorem}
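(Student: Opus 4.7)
The plan is to compute the per-$j$ expected marginal contribution $\Delta_j := \mathbb{E}_{S \sim P_Z^{j-1}}[U_k(S \cup S^*) - U_k(S)]$ and then assemble $\nu(S^*; U_k, P_Z, m) = \frac{1}{m}\sum_{j=1}^m \Delta_j$, handling the boundary $j=1$ separately. When $j=1$ we have $S = \emptyset$, so the indicator in $U_k$ kills $U_k(S)$ and yields $\Delta_1 = C_{\mathrm{den}} - \int(p - \hat{p}_{S^*,k})^2 dz$; for $j \geq 2$ both indicators are $1$ and $C_{\mathrm{den}}$ cancels, so I analyze $F_S - G_S$ with $F_S := \int(p - \hat{p}_{S,k})^2 dz$ and $G_S := \int(p - \hat{p}_{S \cup S^*,k})^2 dz$.

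The key algebraic observation is that the KDE on a (disjoint) union is a convex combination of the constituent KDEs: setting $\alpha_j := n/(j-1+n)$ and $\beta_j := 1 - \alpha_j = (j-1)/(j-1+n)$, one has $\hat{p}_{S \cup S^*,k} = \beta_j \hat{p}_{S,k} + \alpha_j \hat{p}_{S^*,k}$, and hence $p - \hat{p}_{S \cup S^*,k} = \beta_j(p - \hat{p}_{S,k}) + \alpha_j(p - \hat{p}_{S^*,k})$. Squaring and integrating in $z$ gives
\begin{align*}
G_S = \beta_j^2 F_S + 2\alpha_j\beta_j \int (p - \hat{p}_{S,k})(p - \hat{p}_{S^*,k}) dz + \alpha_j^2 F_{S^*},
\end{align*}
so that $F_S - G_S = (1-\beta_j^2)F_S - 2\alpha_j\beta_j \int(p - \hat{p}_{S,k})(p - \hat{p}_{S^*,k})dz - \alpha_j^2 F_{S^*}$, where $F_{S^*} := \int(p - \hat{p}_{S^*,k})^2 dz$.

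Taking the expectation over $S$ and using $\mathbb{E}_S[\hat{p}_{S,k}(z)] = \mathbb{E}[k(z-Z)] =: \tilde{k}(z)$ (with $\hat{p}_{S^*,k}$ fixed), the cross term simplifies to $\int(p - \tilde{k})(p - \hat{p}_{S^*,k}) dz = \int(p-\tilde{k}) p\, dz - g(S^*)$. Thus for $j \geq 2$ the $S^*$-dependent part of $\mathbb{E}_S[F_S - G_S]$ is exactly $-\alpha_j^2 F_{S^*} + 2\alpha_j\beta_j g(S^*)$, while the remaining $(1-\beta_j^2)\mathbb{E}_S[F_S]$ and $-2\alpha_j\beta_j\int(p-\tilde{k})p\,dz$ depend only on $P_Z$, $k$, $n$, and $j$.

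Dividing by $m$ and summing, the coefficient of $F_{S^*}$ becomes $-\frac{1}{m}\sum_{j=1}^m \alpha_j^2 = -A(n,m)$ (the boundary $j=1$ contributes $-F_{S^*}/m = -\alpha_1^2 F_{S^*}/m$ since $\alpha_1 = 1$), and the coefficient of $g(S^*)$ becomes $\frac{1}{m}\sum_{j=2}^m 2\alpha_j\beta_j = B(n,m)$. The $S^*$-free residuals are then absorbed into the choice of $C_{\mathrm{den}}$, which the paper has already declared to be free to simplify the expression. The main obstacle is essentially bookkeeping: carefully identifying which terms in $\mathbb{E}_S[F_S - G_S]$ depend on $S^*$ and confirming that the convex-combination decomposition produces exactly the coefficients $-\alpha_j^2$ and $2\alpha_j\beta_j$ needed to match $A(n,m)$ and $B(n,m)$ after summation.
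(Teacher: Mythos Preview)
Your proposal is correct and follows essentially the same approach as the paper: both exploit the convex-combination identity $\hat{p}_{S\cup S^*}=\beta_j\hat{p}_{S}+\alpha_j\hat{p}_{S^*}$ to expand the ISE difference, take the expectation over $S$, isolate the $S^*$-dependent pieces (yielding the coefficients $-\alpha_j^2$ and $2\alpha_j\beta_j$ that sum to $-A(n,m)$ and $B(n,m)$), and absorb the $S^*$-free remainder into $C_{\mathrm{den}}$. The only cosmetic difference is that the paper writes $\hat{p}_{S\cup S^*}=\hat{p}_S+\tfrac{n}{|S|+n}(\hat{p}_{S^*}-\hat{p}_S)$ and expands via the intermediate identities \eqref{eq:part1_set}--\eqref{eq:part2_set}, whereas you write the error itself as a convex combination and square directly; the resulting bookkeeping is identical.
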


\paragraph{The term $g(S^*)$} 
Suppose $p(z)$ is twice continuously differentiable, and for all $i \in [d]$, a kernel satisfies $\int z_{(i)}^2 k(z) dz < \infty$ and $\int |z_{(i)}|^3 k(z) dz < \infty$, where $z = (z_{(1)}, \dots, z_{(d)}) \in \mathbb{R}^d$. Then, the bias $(p(z)-\mathbb{E}[k(z-Z)])$ of the KDE is $O(h^2)$ and thus $g(S^*) = O(h^2)$ \citep[Equation (1.131)]{ghosh2018}. Many useful kernels such as the Gaussian kernel or any continuous kernel with bounded support  satisfy the conditions.

Theorem \ref{thm:shapley_density} shows the exact form of DShapley of a set $S^*$.
As discussed above, under the mild conditions, the second term is $O(h^2)$, so we focus on the first term.
The first term is the negative constant $-A(n,m)$ times to the integrated squared error (ISE) of $\hat{p}_{S^*,k}$.
That means, DShapley for a set $S^*$ increases as ISE decreases, and vice versa.
Note that the ISE could be interpreted as performance of $S^*$.

As for the estimation of DShapley, we use the Monte-Carlo approximation method based on Theorem \ref{thm:shapley_density}. For $m, B \in \mathbb{N}$, sets of random samples $\{\tilde{z}_1,\dots, \tilde{z}_B\}$ and $\{\tilde{z}_1 ^*, \dots, \tilde{z}_B ^* \}$ from $P_{Z}$ and $\hat{p}_{S^*, k}$, respectively, the DShapley estimator $\hat{\nu}(S^*; U_k, P_Z, m)$ is given by
\begin{align}
    &\frac{A(|S^*|,m)}{B} \sum_{i=1} ^B \left( \hat{p}_{S^*, k} (\tilde{z}_i ^*) -2 \hat{p}_{S^*, k} (\tilde{z}_i) \right) \notag \\
    &+ \frac{B(|S^*|,m)}{B} \sum_{i=1} ^B \left( \hat{p}_{S^*, k} (\tilde{z}_i) - k (\tilde{z}_i ^* - \tilde{z}_i)  \right). \label{eqn:DSV_density_estimator} 
\end{align}
We provide more details in Appendix. In the following examples, we provide more insights on DShapley with the uniform kernel.
Proofs of Examples \ref{exp:two_set} and \ref{exp:two_set_synergy} are available in Appendix.

\begin{example}[A set with two elements]
Suppose $S^* = \{z_1 ^*, z_2 ^* \}$, $p(z)=1$ for all $z \in [0,1]$ and $k (z- z_i) = \frac{1}{h} \mathds{1} ( | \frac{z-z_i}{h} | \leq \frac{1}{2} )$. 
We set a bandwidth such that $h \leq 2\min\{z_1 ^*, z_2 ^*, (1-z_1 ^*), (1-z_2 ^*) \}$.
Then, we have a closed-form expression for DShapley as follows.
\begin{align*}
    &\nu(S^*; U, P_Z, m) \\
    &= \begin{cases}
A(2,m) \left( 1 - \frac{1}{2h} \right) + C_{\mathrm{set}} & \text{if } \Delta \geq h, \\
A(2,m) \left( 1 - \frac{1}{h} + \frac{\Delta}{2h^2} \right) + C_{\mathrm{set}} & \text{if } \Delta < h,
\end{cases}
\end{align*}
where $\Delta := | z_1 ^* - z_2 ^*|$ and $C_{\mathrm{set}}$ is some explicit constant independent of $S^*$.
DShapley for a set satisfying $\Delta < h$ is less than the value of a set with $\Delta \geq h$.
In other words, if the two data points are farther than $h$, DShapley gets larger. \label{exp:two_set}
\end{example}

\begin{example}[Synergy of two elements]
We suppose the same setting with Example \ref{exp:two_set} and now investigate the case where DShapley of $S^*$ is greater than the sum of two DShapleys of the point, \textit{i.e.}, 
\begin{align}
    \nu( \{z_1 ^*, z_2 ^* \}; U, P_Z, m) \geq  \sum_{z \in \{z_1 ^*, z_2 ^* \}} \nu( z; U, P_Z, m).
    \label{ineqn:synergy}
\end{align}
We say there is a synergy of $z_1 ^*$ and $z_2 ^*$ when the inequality \eqref{ineqn:synergy} holds.
Although a similar analysis used in Example \ref{exp:two_set} gives $\nu( z_1 ^*; U, P_Z, m)$ a closed-form expression, it is difficult to know when the inequality \eqref{ineqn:synergy} holds analytically.
With empirical experiments, we show that synergy happens when $\Delta$ is bigger than some threshold, \textit{i.e.}, when the two points are not too close. 
\label{exp:two_set_synergy}
\end{example}

\section{Numerical experiments}
\label{s:experiments}
We now demonstrate the practical efficacy of the DShapley using real and synthetic datasets.
As for the proposed methods, we use Alg.~\ref{alg:gaussian_inputs_lse_simple}, Alg.~\ref{alg:DSV_IRLS}, and Equation \eqref{eqn:DSV_density_estimator} for linear regression, binary classification, and non-parametric density estimation problems, respectively.
To empirically show the general applicability of the proposed methods, we include complex nonlinear models such as convolutional neural networks (CNNs) on our image datasets. Following the common procedure in prior works, we treat the early layers of an off-the-shelf pre-trained network as fixed feature extractors and apply Shapley to the last layer (\cite{ghorbani2020,koh2017}).
Detailed information about datasets and experiment settings are provided in Appendix. Our implementation codes are available at \url{https://github.com/ykwon0407/fast_dist_shapley}.

\begin{figure}[t]
    \centering
    \includegraphics[width=0.485\columnwidth]{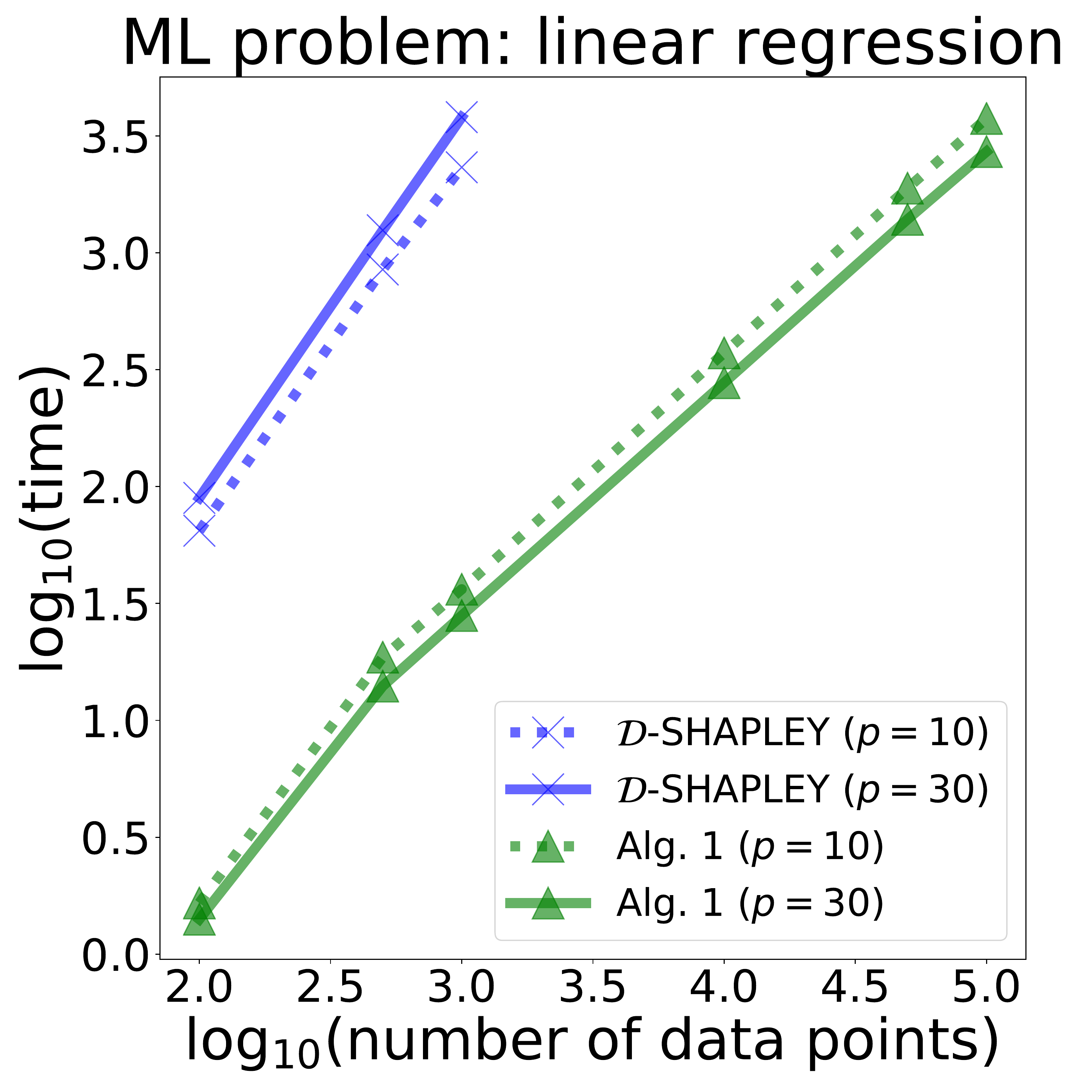}
    \includegraphics[width=0.485\columnwidth]{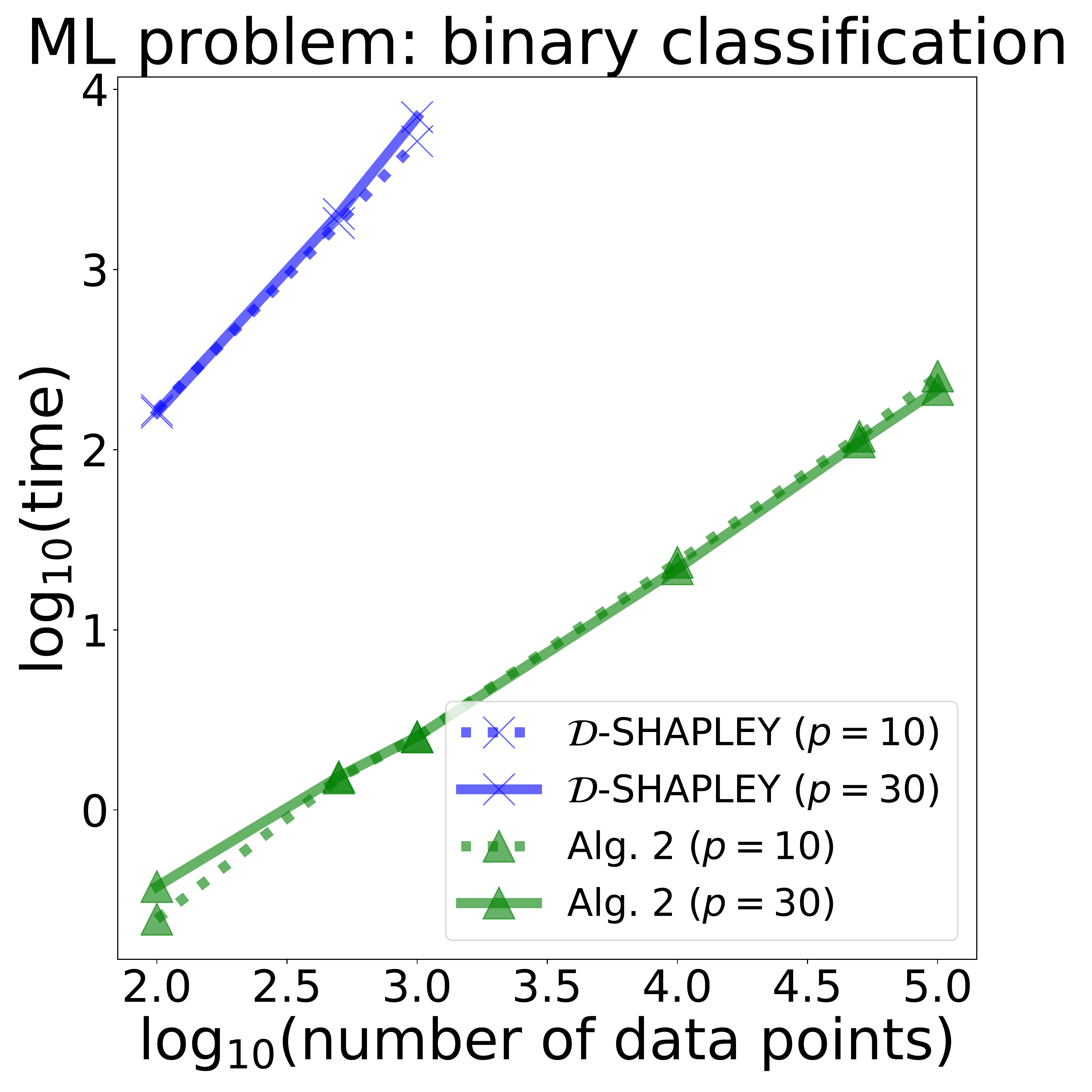}
    \caption{Computation time (in seconds) as a function of number of data points to be valued in logarithmic scale. We compare \texttt{$\mathcal{D}$-SHAPLEY} (blue) with the proposed algorithms (green) when the input dimension $p$ is 10 (dashed) or 30 (solid). Alg.~\ref{alg:gaussian_inputs_lse_simple} is used for regression and Alg.~\ref{alg:DSV_IRLS} for classification.}
    \label{fig:time_comparison}
\end{figure}

\paragraph{Comparison of the computational time}
We compare the computational time of \texttt{$\mathcal{D}$-SHAPLEY} by \citet{ghorbani2020} with the proposed methods in several ML problems.
As we mentioned in Sec.~\ref{s:dsv_gaussian_inputs}, the existing algorithm requires the utility evaluation, and thus it is anticipated to have much heavier computational costs than the proposed algorithm.
All the computation time results in this section are measured with the single Intel\textregistered Xeon\textregistered E5-2640v4 CPU processor and are an average based on 50 repetitions.

Figure~\ref{fig:time_comparison} shows the computational time of state-of-the-art \texttt{$\mathcal{D}$-SHAPLEY} and the proposed methods in various the number of data to be valued, denoted by $m$, and the dimension of input data, denoted by $p$. We consider linear regression and binary classification problems and use the synthetic Gaussian datasets.
For both ML problems, the proposed algorithm is several orders of magnitude faster than \texttt{$\mathcal{D}$-SHAPLEY}.
In particular, in case of classification, while \texttt{$\mathcal{D}$-SHAPLEY} requires 7015.7 seconds, Alg.~\ref{alg:DSV_IRLS} takes 2.6 seconds, which is 2750 times faster, when $(m,p)=(1000,30)$. Our proposed algorithms is scalable to compute the distribution Shapley values of hundreds of thousands of data points in thousands of dimensions. With $(m,p)=(5 \times 10^5, 10^3)$, Alg.~\ref{alg:gaussian_inputs_lse_simple} (Alg.~\ref{alg:DSV_IRLS}) takes around 5.3 hours (\textit{resp.} 30 minutes) to compute the DShapley values for all half million data points for linear regression (\textit{resp.} binary classification). This can be further improved with parallel computing and GPU processors.
The computation time for the binary classification problem is much smaller because we use a computationally cheap analytic lower bound. The sharpness of this lower bound is examined in the point addition experiment below.

\begin{table}[t]
    \centering
    \caption{Computation time (in seconds) of \texttt{$\mathcal{D}$-SHAPLEY} and the proposed algorithms in various ML problems and datasets.
    The number of data to be valued is fixed to 200 for all datasets.}
    \vspace{0.1in}
    \resizebox{\columnwidth}{!}{
        \begin{tabular}{lcccccccccccc}
    \toprule
ML problem & \multirow{2}{*}{Dataset}  & \multirow{2}{*}{\texttt{$\mathcal{D}$-SHAPLEY}} & \multirow{2}{*}{Proposed} \\
(Proposed method) \\
    \midrule
    \multirow[b]{2}{*}{Linear regression} & \texttt{Gaussian-R} &  229.2 & 6.9  \\
     & \texttt{abalone} & 226.3 & 5.2  \\
    \multirow[t]{2}{*}{(Alg.~\ref{alg:gaussian_inputs_lse_simple})} & \texttt{airfoil} & 280.7 & 4.6 \\
     & \texttt{whitewine} & 275.4 & 5.1 \\
    \midrule     
    \multirow[b]{2}{*}{Binary classification} & \texttt{Gaussian-C}  & 470.4 & 0.7 \\
     & \texttt{skin-nonskin} & 788.5 & 1.7 \\
    \multirow[t]{2}{*}{(Alg.~\ref{alg:DSV_IRLS})} & \texttt{CIFAR10}  & 550.7 & 3.0 \\
     & \texttt{MNIST}  & 536.7 & 5.3 \\
     \midrule     
    \multirow[b]{2}{*}{Density estimation} & \texttt{diabetes} &  3307.8 & 0.6 \\
     & \texttt{australian}  & 5219.8 & 0.3 \\
     \multirow[t]{2}{*}{(Equation \eqref{eqn:DSV_density_estimator})} & \texttt{Fashion-MNIST} & 281.7 & 23.7 \\
     & \texttt{CIFAR10} & 338.6 & 28.4 \\
    \bottomrule
    \end{tabular}
    }
    \label{tab:time_comparison_dshapley_real_datasets}
\end{table}

Lastly, Table~\ref{tab:time_comparison_dshapley_real_datasets} shows computational time in various tasks and real and synthetic datasets. We here fix the number of data to be valued as 200. This further demonstrates the computational efficiency of the proposed algorithms across all datasets.

\begin{figure*}[t]
    \centering
    \vspace{0.05in}
    \includegraphics[width=1.55in,height=1.55in]{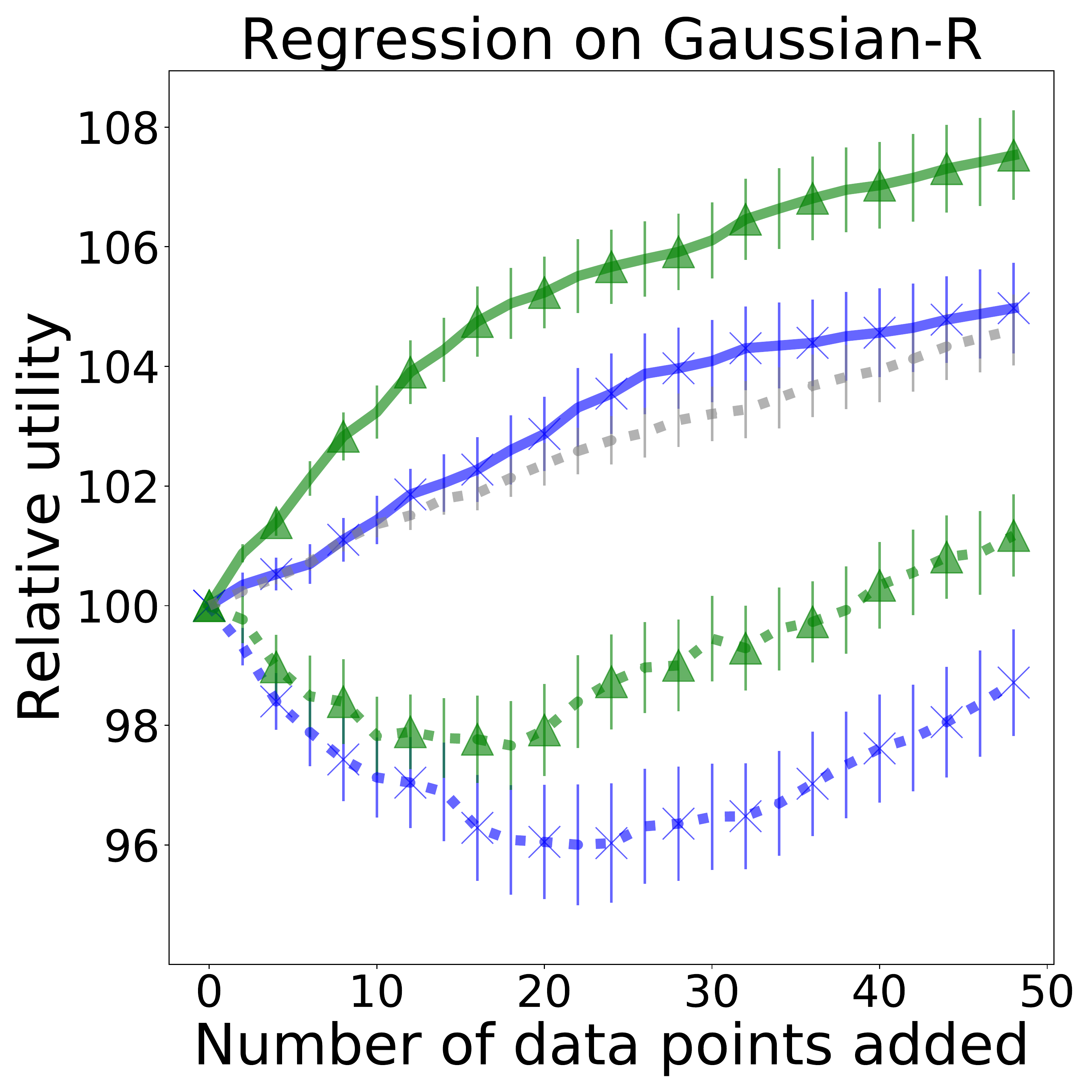}
    \includegraphics[width=1.55in,height=1.55in]{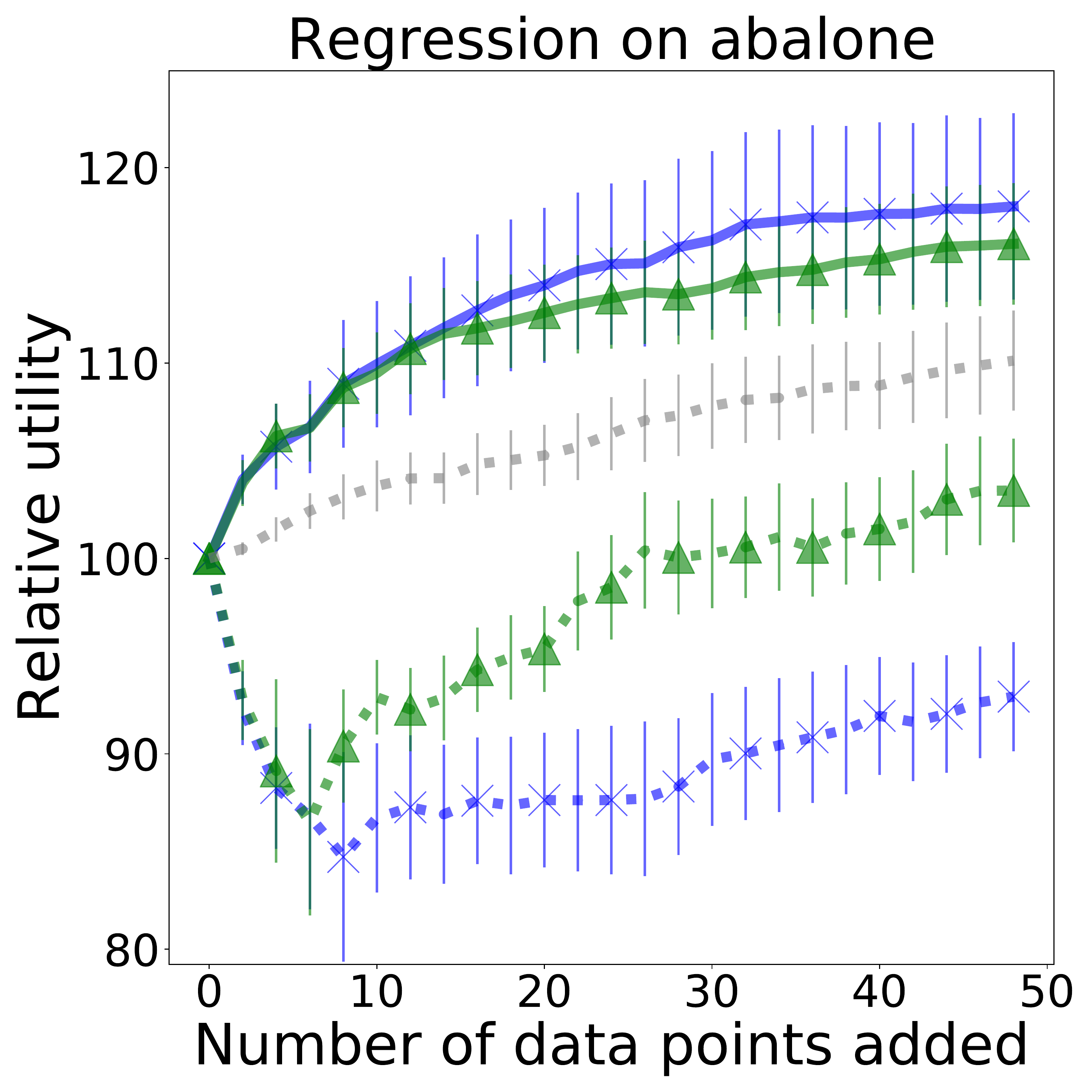}
    \includegraphics[width=1.55in,height=1.55in]{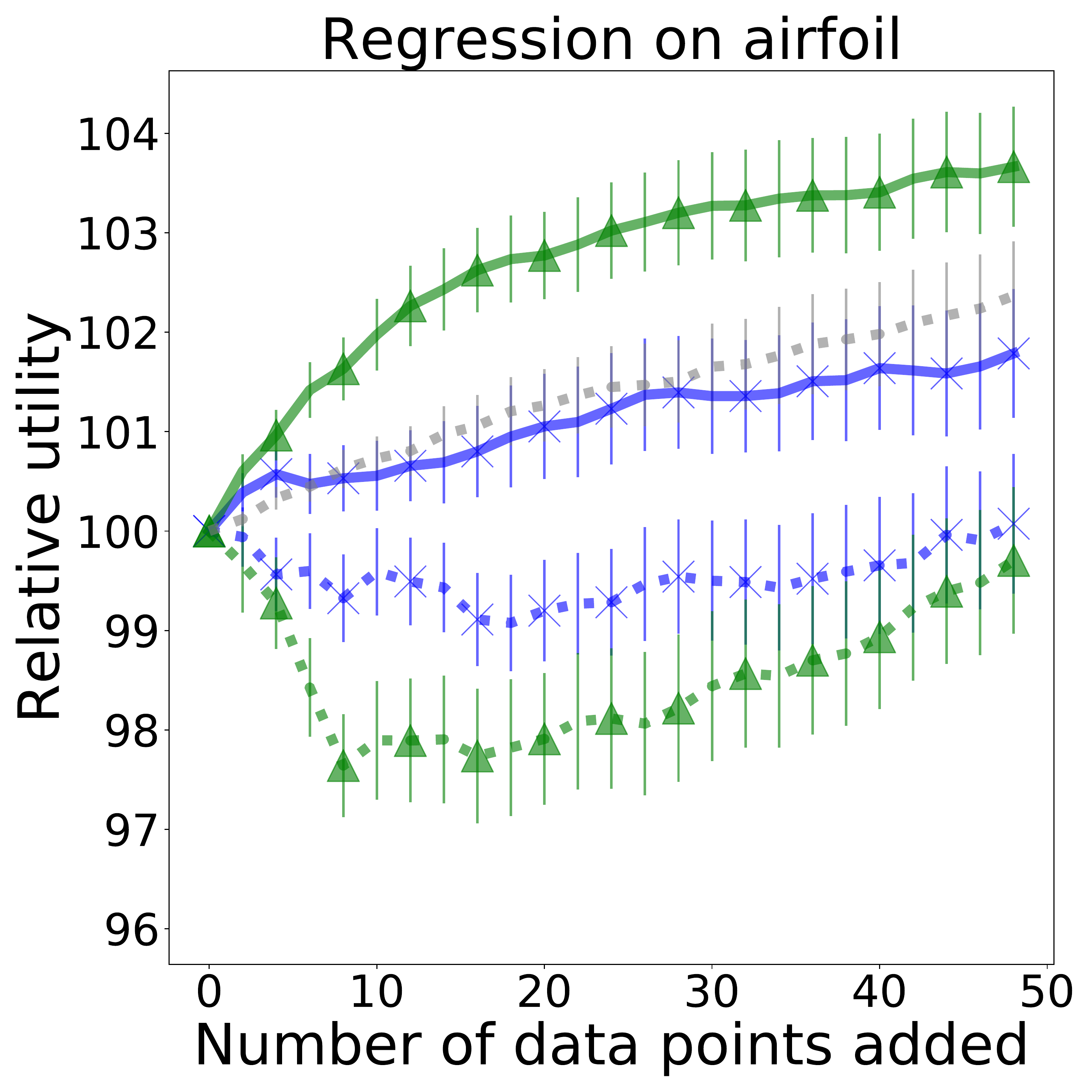}
    \includegraphics[width=1.55in,height=1.55in]{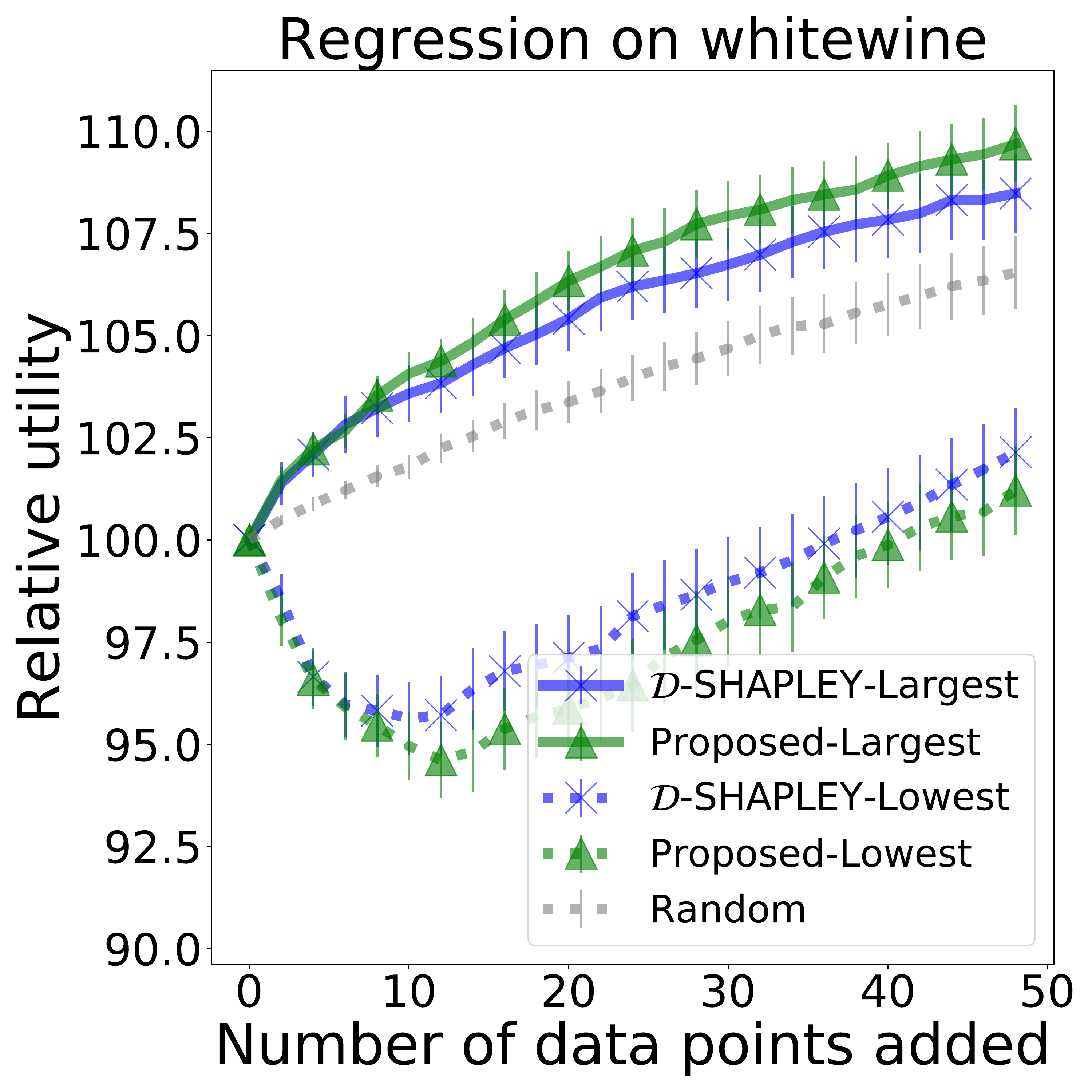}\\
    \includegraphics[width=1.55in,height=1.55in]{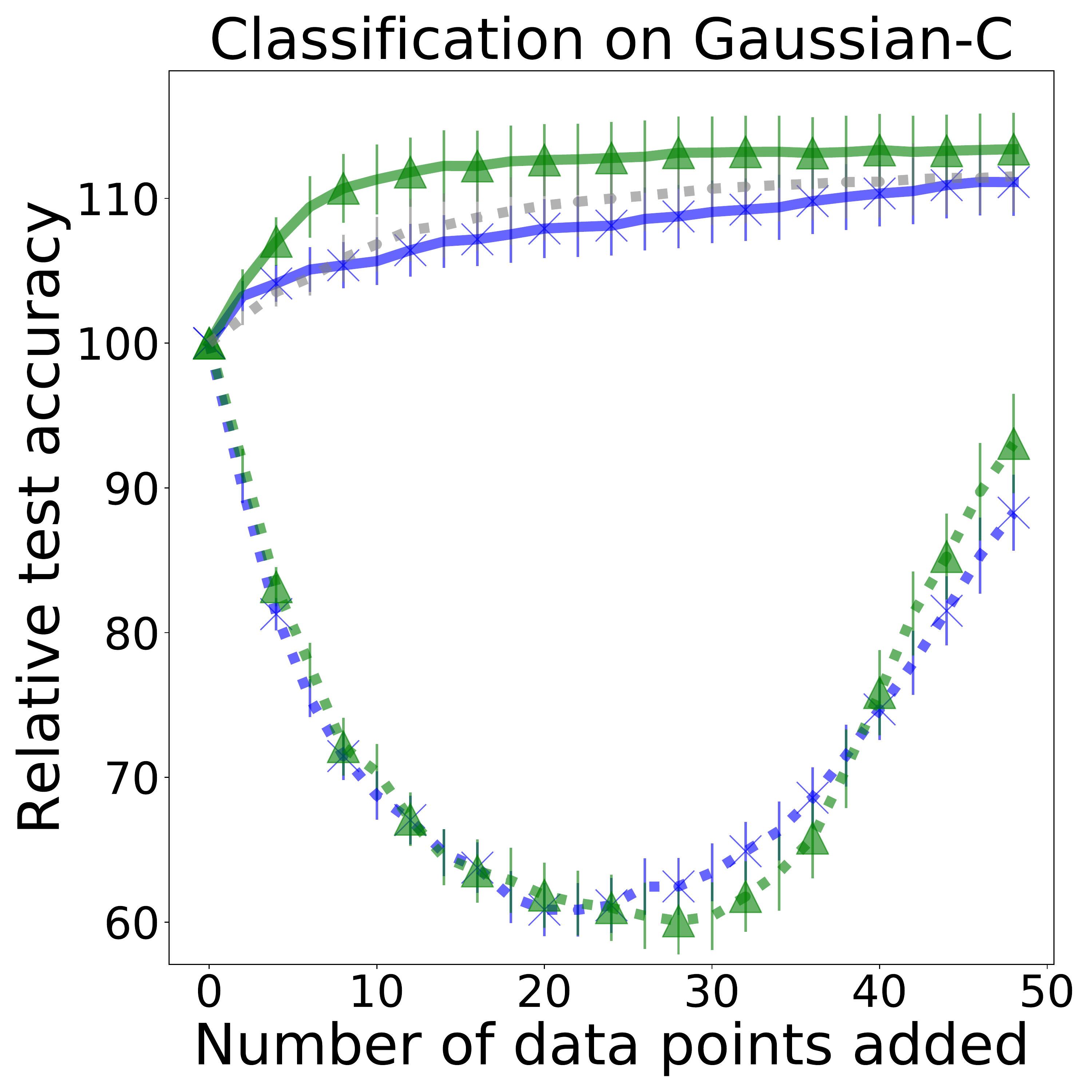}
    \includegraphics[width=1.55in,height=1.55in]{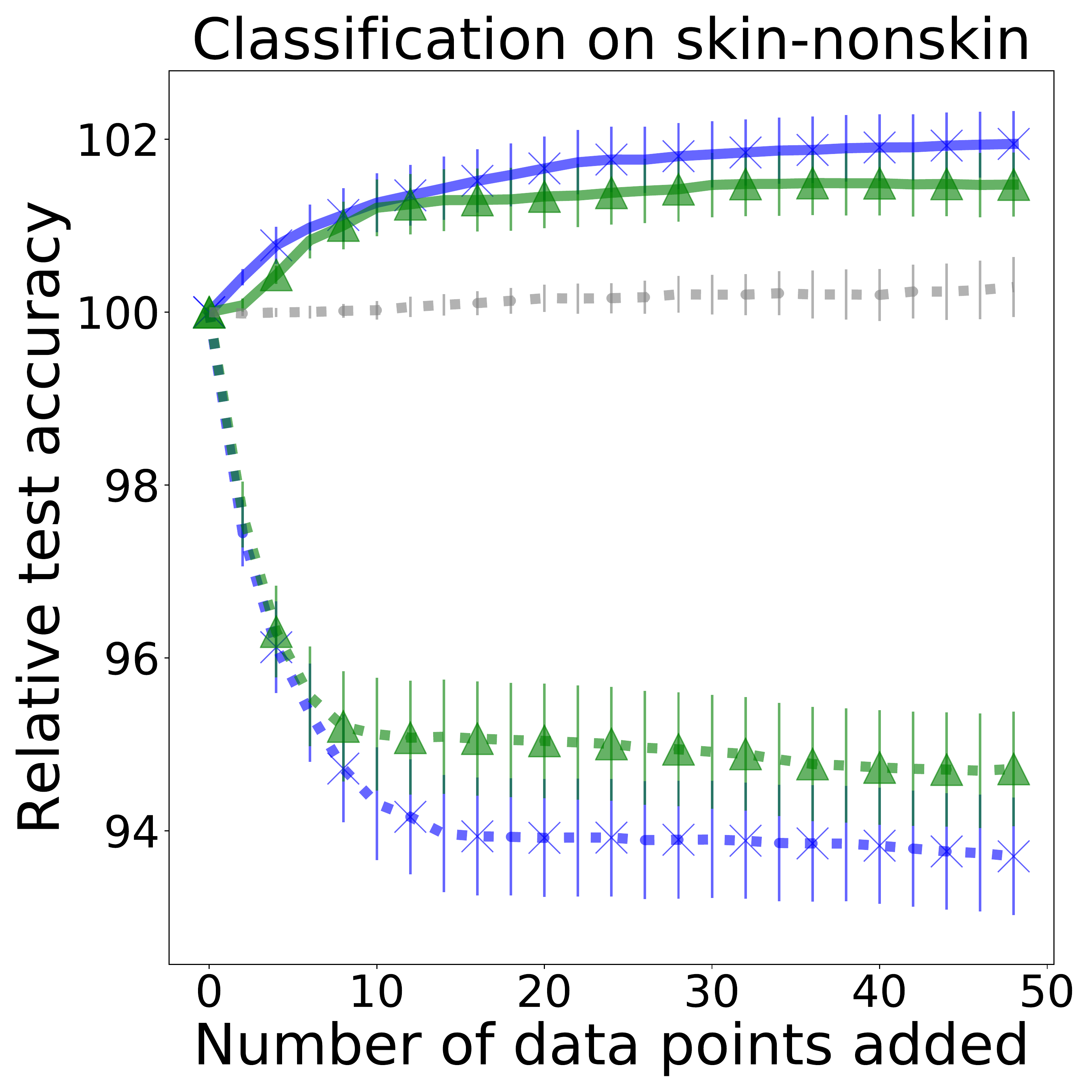}
    \includegraphics[width=1.55in,height=1.55in]{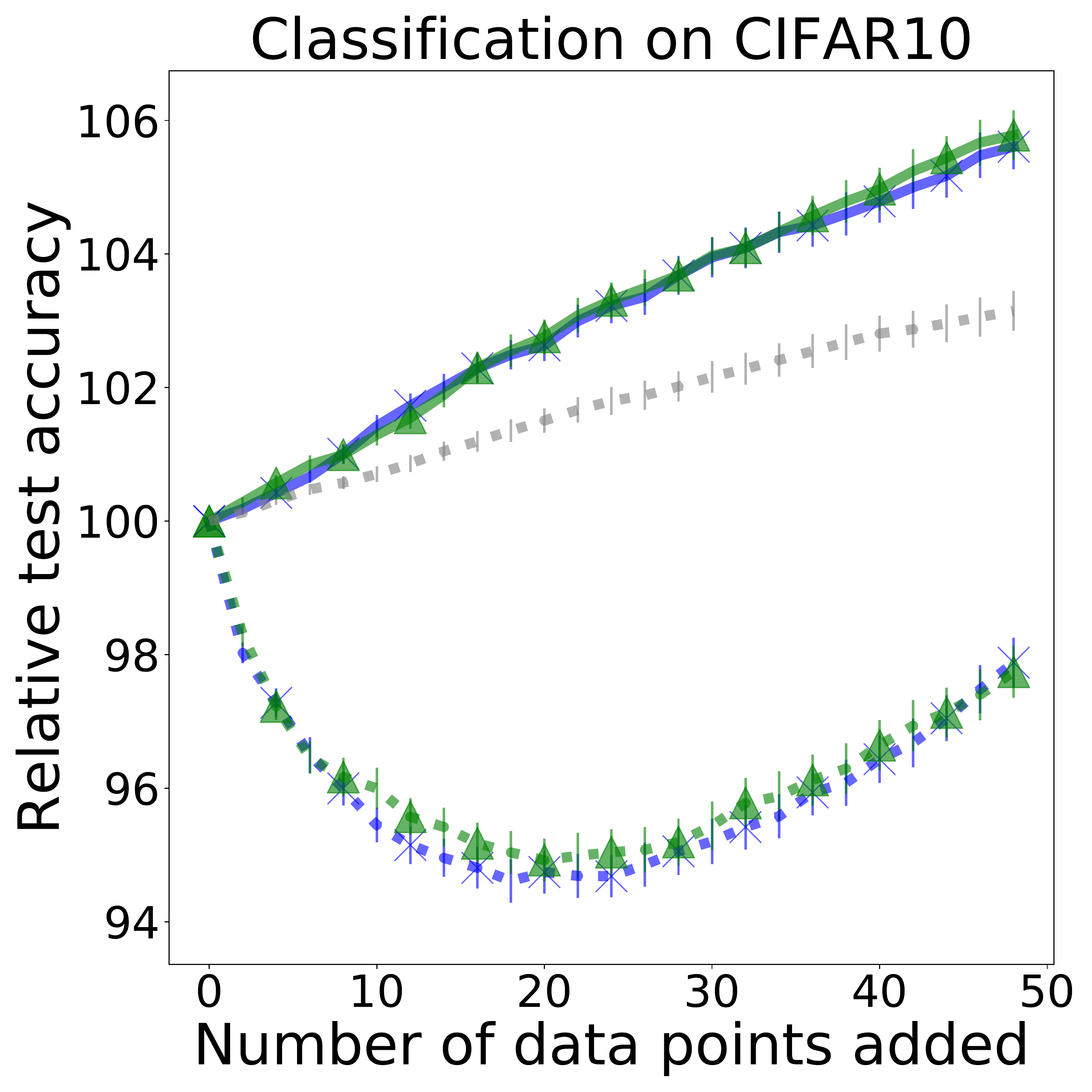}
    \includegraphics[width=1.55in,height=1.55in]{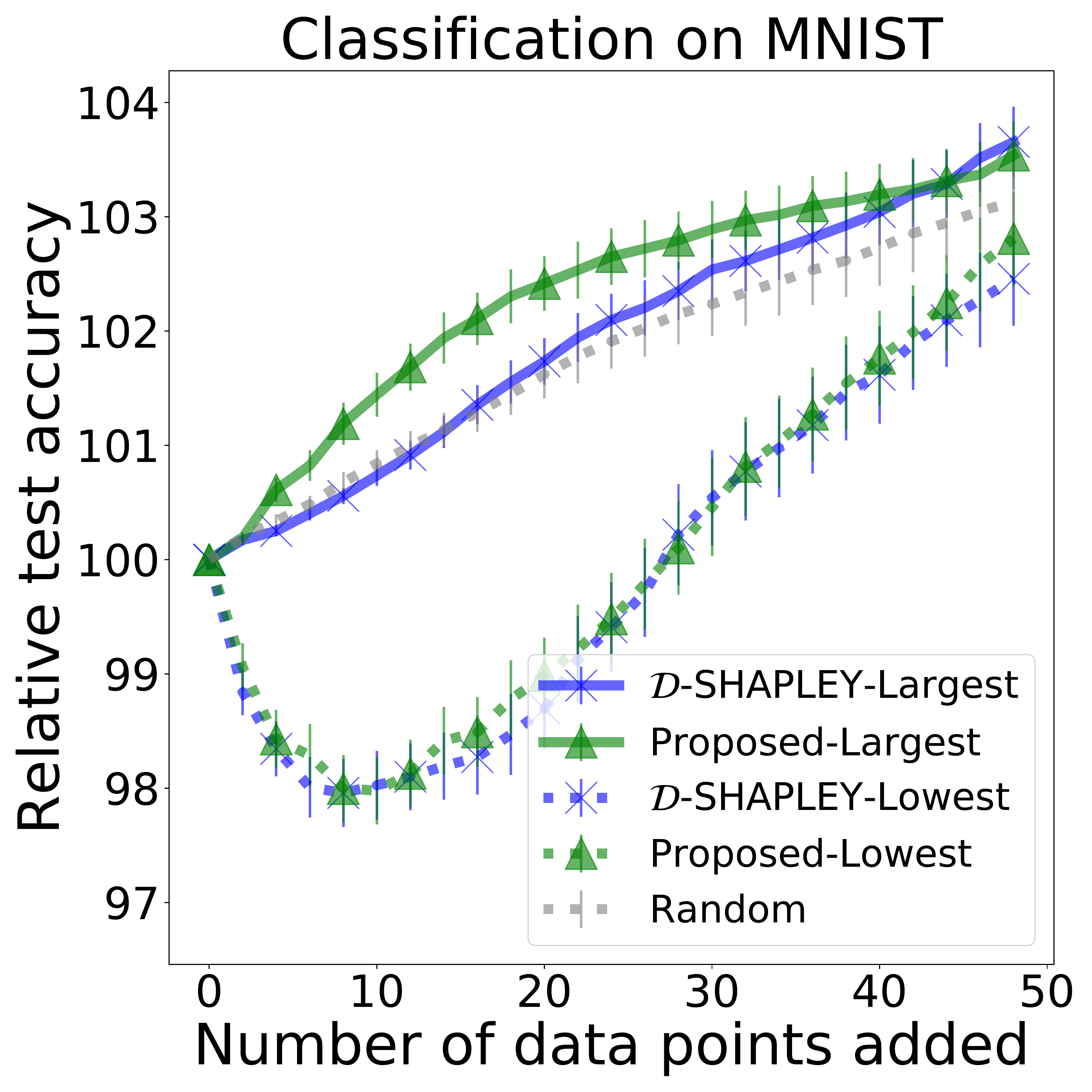}\\
    \includegraphics[width=1.55in,height=1.55in]{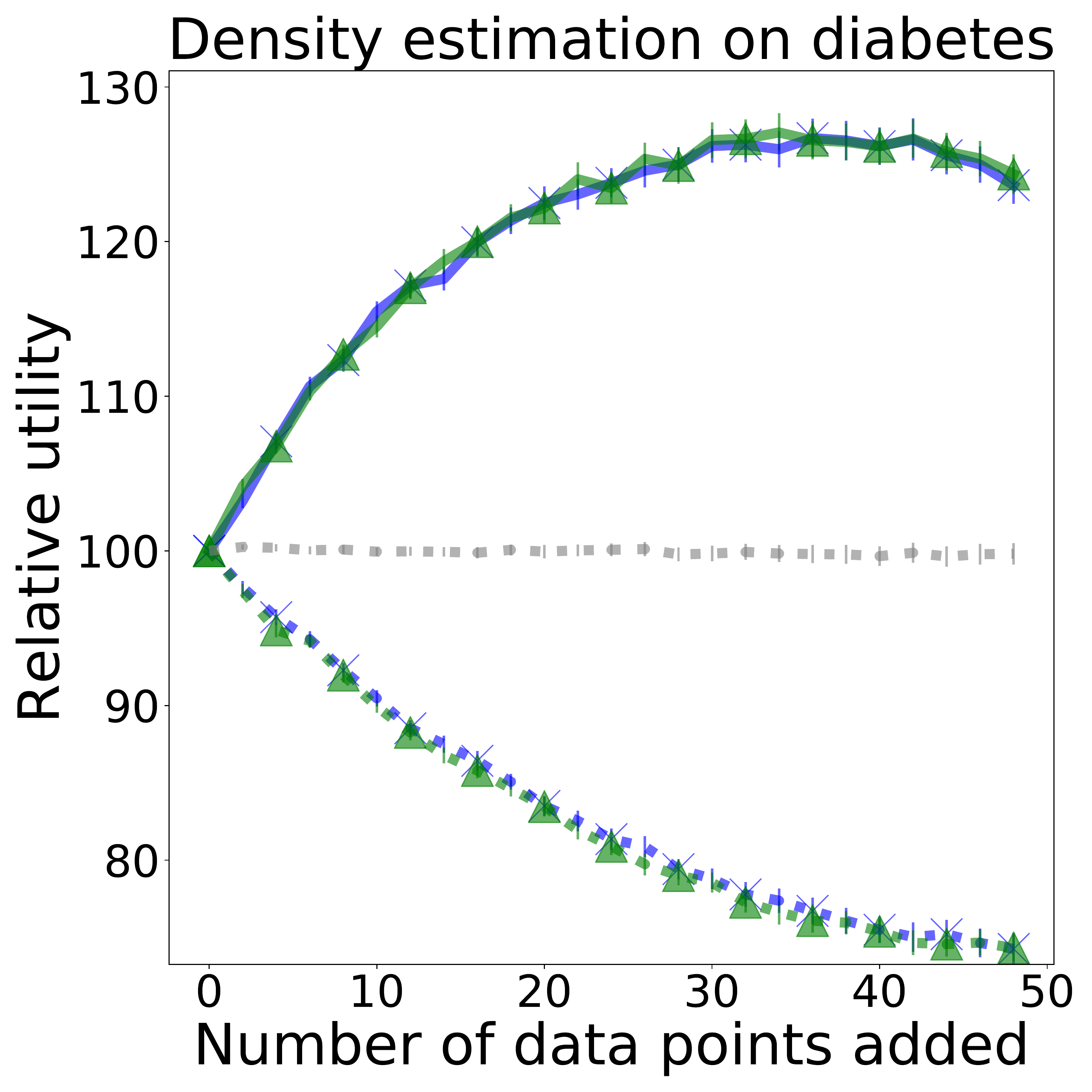}
    \includegraphics[width=1.55in,height=1.55in]{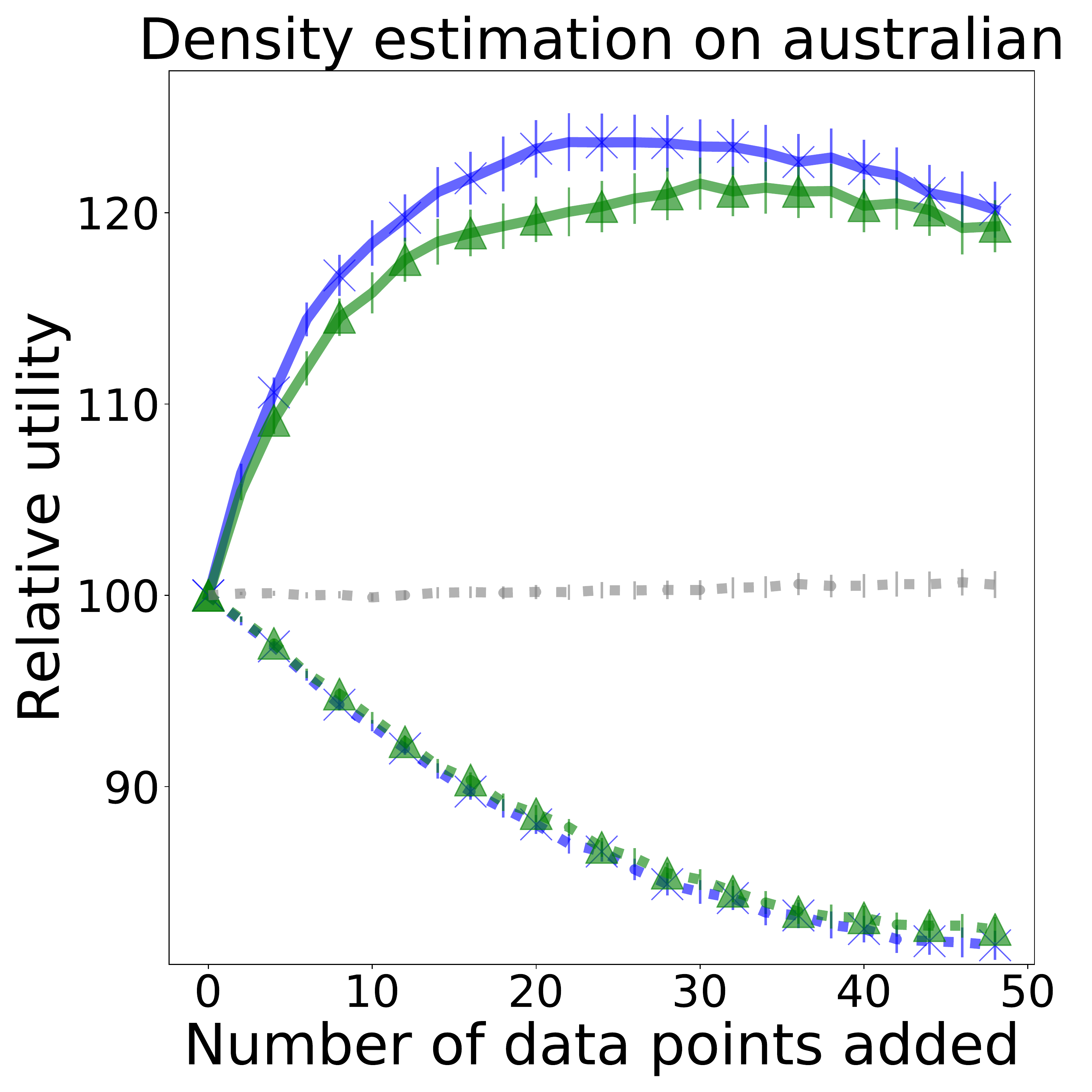}
    \includegraphics[width=1.6in,height=1.55in]{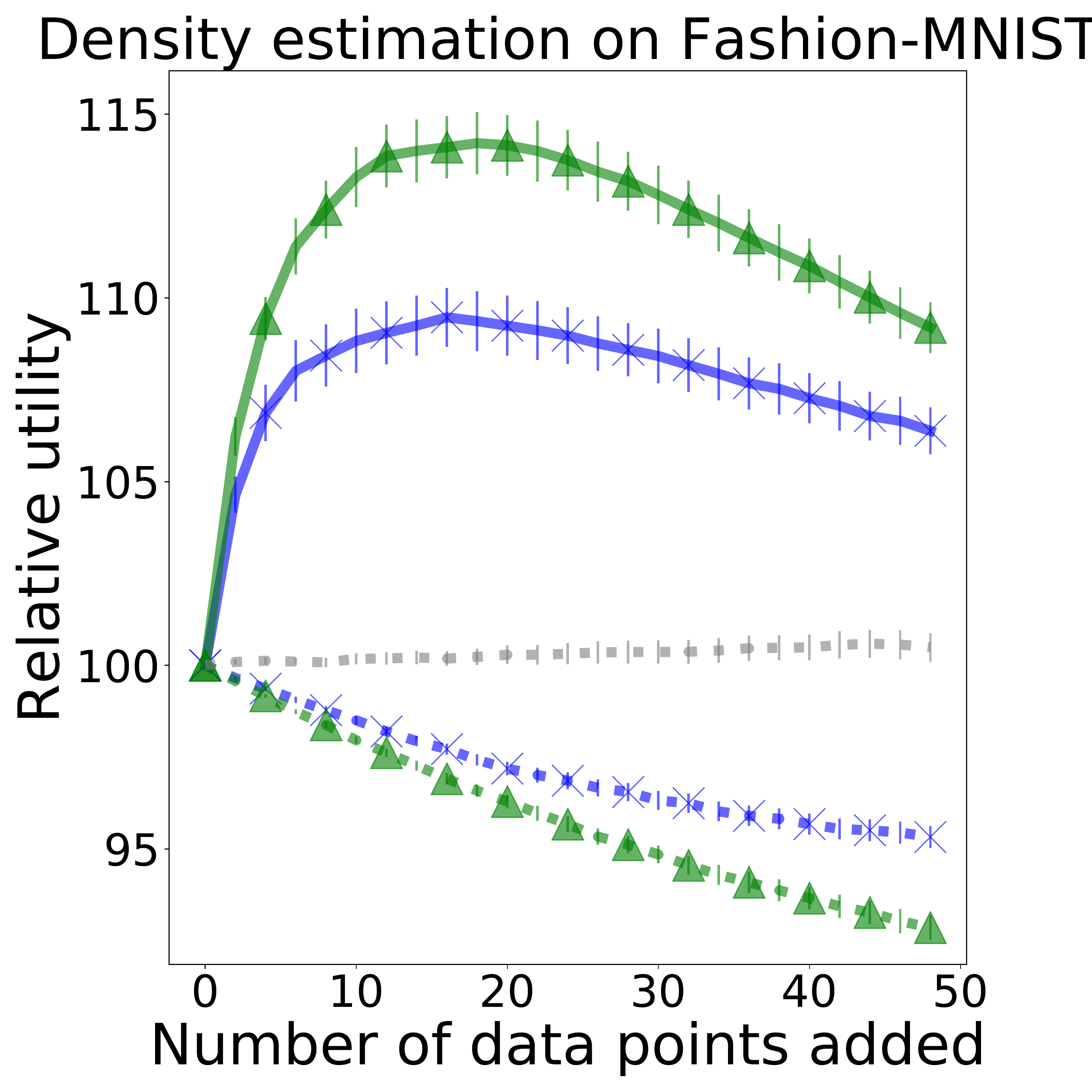}
    \includegraphics[width=1.55in,height=1.55in]{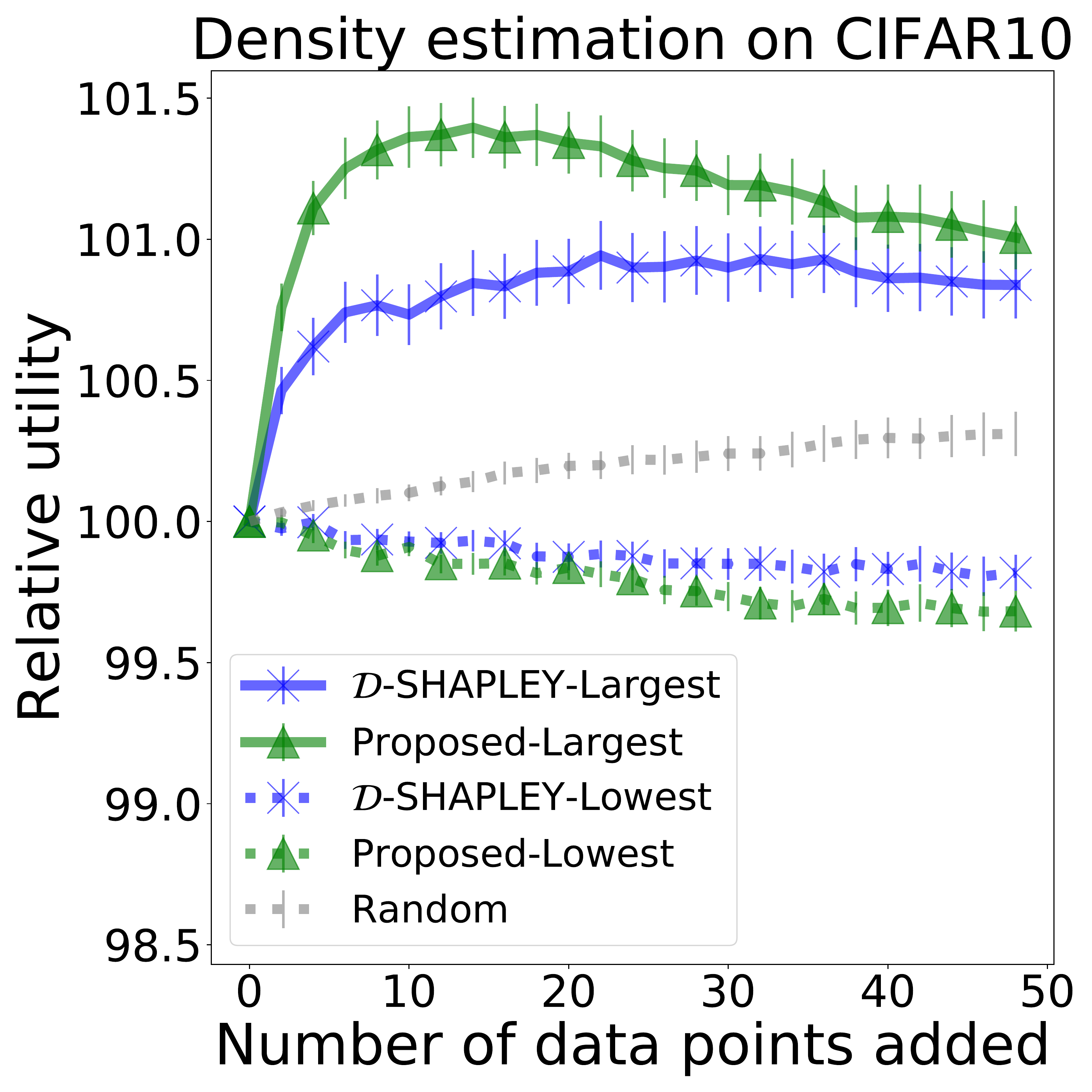}
    \caption{Relative utility and its standard error bar (in \%) as a function of the number of data added in (top) linear regression, (middle) binary classification, and (bottom) non-parametric density estimation settings. We examine the state-of-the-art \texttt{$\mathcal{D}$-SHAPLEY} (blue), random order (gray), and our proposed algorithms (green). The solid and dashed curves correspond to adding points with the largest and smallest values first, respectively. The results are based on 50 repetitions.}
    \label{fig:relative_utility_various_cases}
\end{figure*}

\paragraph{Point addition experiment}
We demonstrate the empirical effectiveness of our DShapley approach by running point addition experiments, proposed by \citet{ghorbani2020}. Given a model and a dataset to be valued, we recursively add points given order (e.g. from largest to lowest values), retrain the model with the remained dataset, and observe how the utility changes on the held-out test dataset.
We compare the three methods: (i) the random order, (ii) the order based on \texttt{$\mathcal{D}$-SHAPLEY}, and (iii) the order based on the proposed methods. 
For \texttt{$\mathcal{D}$-SHAPLEY} and the proposed algorithms, we consider the two different types of orders. One is from largest to lowest, denoted by `Largest', and the other is from lowest to largest, denoted by `Lowest'. In the case of the `Largest' order, they are expected to capture points that help improve performance and show a steeper performance boost than the random order. Similarly, in the case of the `Lowest' order, DShapley is expected to capture outliers first and cause performance degradation. 
As before, we use Alg.~\ref{alg:gaussian_inputs_lse_simple}, Alg.~\ref{alg:DSV_IRLS}, and Equation \eqref{eqn:DSV_density_estimator} for linear regression, binary classification, and non-parametric density estimation problems, respectively.

Figure~\ref{fig:relative_utility_various_cases} shows point addition experiments in linear regression, binary classification, and non-parametric density estimation problems.
As we anticipated, the proposed methods and \texttt{$\mathcal{D}$-SHAPLEY} show reasonable curves; adding data with larger DShapley leads to a greater performance increase than random addition. Also, adding data with lower DShapley causes performance degradation.
Moreover, the proposed methods perform similar to or sometimes better than \texttt{$\mathcal{D}$-SHAPLEY}. 
This phenomenon is because \texttt{$\mathcal{D}$-SHAPLEY} repeatedly evaluates the utility function on a random set, and the utility is unstable when the size of the set is small and it affects instability of DShapley estimation.
In contrast, the proposed methods can avoid such instability and provide reasonable values.

\section{Concluding remarks}
In this work, we derive the first computationally tractable expressions for DShapley for the linear regression, binary classification, and non-parametric density estimation problems. 
The proposed forms provide new mathematical insights, and lead to  efficient algorithms which we demonstrated on large datasets (e.g. $10^5$ data points in $\mathbb{R}^{1000}$) and models such as CNN. We validate our results on several commonly used datasets.

\section*{Acknowledgements}
Y.K. and J.Z. are supported by NSF CCF 1763191, NSF CAREER 1942926, NIH P30AG059307, NIH U01MH098953 and grants from the Silicon Valley Foundation and the Chan-Zuckerberg Initiative. M.A.R. is supported by Stanford University and NIH 5U01HG009080. Y.K. is also partially supported by NIH R01HG010140. We thank anonymous reviewers for helpful comments.
\bibliographystyle{apalike}
\bibliography{ref}

\appendix

\onecolumn
\aistatstitle{Appendix: Efficient Computation and Analysis of Distributional Shapley Values}

\section{Proofs}
\label{s:proofs}
\subsection{Proof of Proposition \ref{proposition:ridge}}
\begin{proof}[Proof of Proposition \ref{proposition:ridge}]
To this end, we fix $S$ and $\gamma$.
The ridge estimator based on $S$ and $S \cup \{ (x^*, y^*) \}$ are given by
\begin{align*}
    \hat{\beta}_{S,\gamma}  = A_{S, \gamma} ^{-1} X_S ^T Y_S,
\end{align*}
and 
\begin{align*}
\hat{\beta}_{S \cup \{ (x^*, y^*) \}, \gamma}  = (X_{S \cup \{ (x^*, y^*) \}} ^T X_{S \cup \{ (x^*, y^*) \}} + \gamma I_p )^{-1} X_{S \cup \{ (x^*, y^*) \}} ^T  Y_{S \cup \{ (x^*, y^*) \}},
\end{align*}
respectively.
By Sherman-Morrison formula, 
\begin{align*}
    (x^* x^{*T} + A_{S,\gamma} )^{-1} = A_{S,\gamma} ^{-1} - \frac{ A_{S,\gamma} ^{-1} x^* x^{*T} A_{S,\gamma} ^{-1} }{1+ x^{*T} A_{S,\gamma} ^{-1} x^* },
\end{align*}
and
\begin{align*}
\hat{\beta}_{S \cup \{ (x^*, y^*) \}, \gamma}  &= \hat{\beta}_{S,\gamma}  + A_{S,\gamma} ^{-1} x^{*} y^* - \frac{ A_{S,\gamma} ^{-1} x^* x^{*T} \hat{\beta}_{S,\gamma}  }{1+ x^{*T} A_{S,\gamma} ^{-1} x^* } - \frac{ A_{S,\gamma} ^{-1} x^* x^{*T} A_{S,\gamma} ^{-1} x^{*} y^* }{1+ x^{*T} A_{S,\gamma} ^{-1} x^* } \\
&= \hat{\beta}_{S,\gamma}  + \underbrace{\frac{ A_{S,\gamma} ^{-1} x^{*} (y^*-x^{*T} \hat{\beta}_{S,\gamma} )}{1+x^{*T} A_{S,\gamma} ^{-1} x^*}}_{=:f_{\gamma} (X_S)}.
\end{align*}

Since $U_{q,\gamma} (S) = (C_{\mathrm{lin}} - \int  (y - x^T \hat{\beta}_{S,\gamma}  )^2 dP_{X,Y}(x,y) ) \mathds{1}(|S| \geq q) = (C_{\mathrm{lin}}  -\sigma^2 - (\hat{\beta}_{S,\gamma}  -\beta)^T \Sigma_X (\hat{\beta}_{S,\gamma} -\beta))\mathds{1}(|S| \geq q)$, for $j-1 \geq q$, we have
\begin{align*}
    &\mathbb{E}_{S \sim P_{X,Y}^{j-1}} [U_{q, \gamma} (S \cup \{ (x^*, y^*) \})] \\
    &= C_{\mathrm{lin}}  -\sigma^2 - \mathbb{E}_{S \sim P_{X,Y}^{j-1}} [(\hat{\beta}_{S \cup \{ (x^*, y^*) \}, \gamma}  -\beta)^T \Sigma_X (\hat{\beta}_{S \cup \{ (x^*, y^*) \}, \gamma}  -\beta)] \\
    &= \mathbb{E}_{S \sim P_{X,Y}^{j-1}} [U_{q, \gamma} (S)] - \mathbb{E}_{S \sim P_{X,Y}^{j-1}} [ f_{\gamma} (X_S)^T \Sigma_X f_{\gamma} (X_S)] -2\mathbb{E}_{S \sim P_{X,Y}^{j-1}} [ f_{\gamma} (X_S)^T \Sigma_X (\hat{\beta}_{S,\gamma} -\beta) ] .
\end{align*}
Therefore, 
\begin{align*}
&\mathbb{E}_{S \sim P_{X,Y}^{j-1}} [U_{q, \gamma}  (S \cup \{ (x^*, y^*) \}) - U_{q, \gamma} (S)] \\
&= -( \mathbb{E}_{S \sim P_{X,Y}^{j-1}} [ f_{\gamma} (X_S)^T \Sigma_X f_{\gamma} (X_S)] +2 \mathbb{E}_{S \sim P_{X,Y}^{j-1}} [ f_{\gamma} (X_S)^T \Sigma_X (\hat{\beta}_{S, \gamma} -\beta) ]),
\end{align*}
and thus for $q \geq p+1$, DShapley is
\begin{align*}
&\nu((x^*, y^*); U_{q, \gamma}, P_{X,Y},m) \\
&= \frac{1}{m} \sum_{j=1} ^m \mathbb{E}_{S \sim P_{X,Y}^{j-1}} [ U_{q, \gamma}  (S \cup \{ (x^*, y^*) \})-U_{q, \gamma}  (S)]  \\
&= (C_{\mathrm{lin}}  -\sigma^2 - \mathbb{E}_{S \sim P_{X,Y}^{q-1}} [(\hat{\beta}_{S, \gamma}  -\beta)^T \Sigma_X (\hat{\beta}_{S, \gamma} -\beta)]) \\
&-\frac{1}{m} \sum_{j=q} ^m \left( \mathbb{E}_{S \sim P_{X,Y}^{j-1}} [ f_{\gamma} (X_S)^T \Sigma_X f_{\gamma} (X_S)] +2 \mathbb{E}_{S  \sim P_{X,Y}^{j-1}} [ f_{\gamma} (X_S)^T \Sigma_X (\hat{\beta}_{S, \gamma} -\beta) ] \right). 
\end{align*}

[Step 1] Computation of $\mathbb{E} [ f_{\gamma} (X_S)^T \Sigma_X f_{\gamma} (X_S) \mid X_S]$.\\
We set $e_{S, \gamma} ^* = y^* - x^{*T} \mathbb{E} [\hat{\beta}_{S, \gamma}  \mid X_S] = y^* - x^{*T} A_{S, \gamma} ^{-1} (X_S ^T X_S) \beta $, then
\begin{align*}
\mathbb{E}[ f_{\gamma} (X_S) \mid X_S ] = \frac{ A_{S, \gamma} ^{-1} x^{*} e_{S, \gamma} ^* }{1+x^{*T} A_{S, \gamma} ^{-1} x^*},
\end{align*}
and $\mathrm{Cov}[\hat{\beta}_{S, \gamma} \mid X_S ] = A_{S, \gamma} ^{-1} (X_S ^T X_S) A_{S, \gamma} ^{-1} \sigma^2 = A_{S, \gamma} ^{-1} (A_{S, \gamma} -\gamma I_p )A_{S, \gamma} ^{-1} \sigma^2  = A_{S, \gamma} ^{-1} (I_p -\gamma A_{S, \gamma} ^{-1}) \sigma^2 = (A_{S, \gamma} ^{-1} - \gamma A_{S, \gamma} ^{-2}) \sigma^2 =: M_{S, \gamma} \sigma^2$ gives

\begin{align*}
\mathrm{Cov}[f_{\gamma} (X_S) \mid X_S ] = \frac{ A_{S, \gamma} ^{-1} x^{*} x^{*T} M_{S, \gamma} x^{*} x^{*T} A_{S, \gamma} ^{-1} }{(1+x^{*T} A_{S, \gamma} ^{-1} x^*)^2} \sigma^2.
\end{align*}
Thus, 
\begin{align*}
    \mathbb{E} [ f_{\gamma} (X_S)^T \Sigma_X f_{\gamma} (X_S) \mid X_S ] &= \frac{x^{*T} A_{S, \gamma} ^{-1} \Sigma_X A_{S, \gamma} ^{-1} x^{*} }{(1+x^{*T} A_{S, \gamma} ^{-1} x^*)^2} e_{S, \gamma} ^{*2} + \frac{x^{*T} A_{S, \gamma} ^{-1} \Sigma_X A_{S, \gamma} ^{-1} x^{*} }{(1+x^{*T} A_{S, \gamma} ^{-1} x^*)^2} x^{*T} M_{S, \gamma} x^{*} \sigma^{2}.
\end{align*}
Since
\begin{align*}
    e_{S, \gamma} ^* = e^* +  x^{*T}(\beta - A_{S, \gamma} ^{-1} (X_S ^T X_S) \beta ) = e^* + \gamma x^{*T} A_{S, \gamma} ^{-1} \beta,
\end{align*}
and $M_{S, \gamma} = A_{S, \gamma} ^{-1}-\gamma A_{S, \gamma} ^{-2}$, we have
\begin{align*}
    \mathbb{E} [ f_{\gamma} (X_S)^T \Sigma_X f_{\gamma} (X_S) \mid X_S ] &= \frac{x^{*T} A_{S, \gamma} ^{-1} \Sigma_X A_{S, \gamma} ^{-1} x^{*} }{(1+x^{*T} A_{S, \gamma} ^{-1} x^*)^2} ( \frac{e ^{*2}}{\sigma^{2}} +  x^{*T} A_{S, \gamma} ^{-1} x^{*} ) \sigma^{2} + h_1(\gamma),
\end{align*}
where $h_1(\gamma)$ is some explicit term such that $\lim_{\gamma \to 0+}  h_1 (\gamma) / (\gamma \log(\gamma))$ and $h_1(0)=0$.

[Step 2] Computation of $\mathbb{E} [ f_{\gamma} (X_S)^T \Sigma_X (\hat{\beta}_{S, \gamma} -\beta) \mid X_S ]$.

\begin{align*}
    &\mathbb{E} [ f_{\gamma} (X_S)^T \Sigma_X (\hat{\beta}_{S, \gamma} -\beta) \mid X_S ]\\
    &= \mathbb{E} [ \frac{  (y^*-x^{*T} \hat{\beta}_{S, \gamma} ) x^{*T}  A_{S, \gamma} ^{-1} \Sigma_X (\hat{\beta}_{S, \gamma} -\beta) }{ 1+x^{*T} A_{S, \gamma} ^{-1} x^* } \mid X_S ] \\
    &= -\gamma \frac{ e^{*} x^{*T}  A_{S, \gamma} ^{-1} \Sigma_X A_{S, \gamma} ^{-1} \beta }{ 1+x^{*T} A_{S, \gamma} ^{-1} x^*} - \mathbb{E}_{S} [ \frac{  (\hat{\beta}_{S, \gamma} -\beta )^T x^{*} x^{*T}  A_{S, \gamma} ^{-1} \Sigma_X (\hat{\beta}_{S, \gamma} -\beta) }{ 1+x^{*T} A_{S, \gamma} ^{-1} x^* } \mid X_S ] \\
    &= -\gamma \frac{ e^{*} x^{*T}  A_{S, \gamma} ^{-1} \Sigma_X A_{S, \gamma} ^{-1} \beta  }{ 1+x^{*T} A_{S, \gamma} ^{-1} x^*} - \gamma^{2} \frac{ \beta^T A_{S, \gamma} ^{-1} x^{*} x^{*T} A_{S, \gamma} ^{-1} \Sigma_X A_{S, \gamma} ^{-1} \beta }{ 1+x^{*T} A_{S, \gamma} ^{-1} x^* } - \frac{ x^{*T} A_{S, \gamma} ^{-1} \Sigma_X M_{S, \gamma} x^{*} }{ 1+x^{*T} A_{S, \gamma} ^{-1} x^* } \sigma^2 \\
    &= - \frac{ x^{*T} A_{S, \gamma} ^{-1} \Sigma_X A_{S, \gamma} ^{-1} x^{*} }{ 1+x^{*T} A_{S, \gamma} ^{-1} x^* } \sigma^2 + h_2(\gamma).
\end{align*}
where $h_2(\gamma)$ is some explicit term such that $\lim_{\gamma \to 0+} h_2(\gamma) / (\gamma \log(\gamma))=0$ and $h_2(0)=0$.

Hence, by setting $C_{\mathrm{lin}} = \sigma^2 + \mathbb{E}_{S \sim P_{X,Y}^{q-1}} [(\hat{\beta}_{S, \gamma} -\beta)^T \Sigma_X (\hat{\beta}_{S, \gamma} -\beta)]$, we have
\begin{align}
& \nu((x^*, y^*); U_{q, \gamma} , P_{X,Y},m) \notag \\
&= C_{\mathrm{lin}} - \sigma^2 - \mathbb{E}_{S \sim P_{X,Y}^{q-1}} [(\hat{\beta}_{S, \gamma} -\beta)^T \Sigma_X (\hat{\beta}_{S, \gamma} -\beta)] \notag \\
&- \frac{1}{m} \sum_{j=q} ^m  \mathbb{E}_{X_S \sim P_X ^{j-1}} \Bigg[ \frac{x^{*T} A_{S, \gamma} ^{-1} \Sigma_X A_{S, \gamma} ^{-1} x^{*} }{(1+x^{*T} A_{S, \gamma} ^{-1} x^*)^2} ( e ^{*2}- (2 + x^{*T} A_{S, \gamma} ^{-1} x^{*})\sigma ^{2} ) \Bigg] + h(\gamma), \label{eqn:dsv_constant_form} \\
&= - \frac{1}{m} \sum_{j=q} ^m  \mathbb{E}_{X_S \sim P_X ^{j-1}} \Bigg[ \frac{x^{*T} A_{S, \gamma} ^{-1} \Sigma_X A_{S, \gamma} ^{-1} x^{*} }{(1+x^{*T} A_{S, \gamma} ^{-1} x^*)^2} ( e ^{*2}- (2 + x^{*T} A_{S, \gamma} ^{-1} x^{*})\sigma ^{2} ) \Bigg] + h(\gamma), \notag \\
&= \frac{1}{m} \sum_{j=q} ^m  \mathbb{E}_{X_S \sim P_X ^{j-1}} \Bigg[ \frac{x^{*T} A_{S, \gamma} ^{-1} \Sigma_X A_{S, \gamma} ^{-1} x^{*} }{(1+x^{*T} A_{S, \gamma} ^{-1} x^*)^2} ( (2 + x^{*T} A_{S, \gamma} ^{-1} x^{*})\sigma ^{2} - e ^{*2} ) \Bigg] + h(\gamma), \notag
\end{align}
for some $h(\gamma)$ such that $\lim_{\gamma \to 0+} h(\gamma) / (\gamma \log(\gamma)) =0$ and $h(0)=0$.
\end{proof}

\subsection{Proof of Theorem \ref{thm:gaussian_inputs_lse}}

\begin{proof}[Proof of Theorem \ref{thm:gaussian_inputs_lse}]
By plugging $\gamma=0$ into Equation \eqref{eqn:dsv_constant_form}, for $q \geq p+3$, DShapley is given by
\begin{align}
&\nu( (x^*,y^*); U_{q,0}, P_{X,Y}, m) \notag \\
&= C_{\mathrm{lin}} - \sigma^2 - \mathbb{E}_{S \sim P_{X,Y}^{q-1}} [(\hat{\beta}_{S, \gamma} -\beta)^T \Sigma_X (\hat{\beta}_{S, \gamma} -\beta)] \notag \\
&+ \frac{\sigma ^{2}}{m}  \sum_{j=q} ^m \Bigg( (1 - \frac{e^{*2}}{\sigma^{2}} ) \mathbb{E}_{X_S \sim P_X ^{j-1}} \Bigg[ \frac{\tilde{x}^{*T} (\tilde{X}_S ^T \tilde{X}_S)^{-2} \tilde{x}^{*} }{(1+\tilde{x}^{*T} (\tilde{X}_S ^T \tilde{X}_S)^{-1} \tilde{x}^*)^2} \Bigg] +  \mathbb{E}_{X_S \sim P_X ^{j-1}} \Bigg[ \frac{\tilde{x}^{*T} (\tilde{X}_S ^T \tilde{X}_S)^{-2} \tilde{x}^{*}}{1+\tilde{x}^{*T} (\tilde{X}_S ^T \tilde{X}_S)^{-1} \tilde{x}^*} \Bigg] \Bigg),
\label{eq:shapley_lse_reformulation}
\end{align}
where $\tilde{X}_S =  X_S \Sigma_X ^{-1/2}$ and $\tilde{x}^* = \Sigma_X ^{-1/2} x^*$, \textit{i.e.}, a normalized version.
Note that $(\tilde{X}_S ^T \tilde{X}_S)^{-1}$ follows an inverse-Wishart distribution and its mean is $I_p/(q-1-p-1)$.
Therefore, 
\begin{align*}
    -\frac{\sigma^2}{m} \mathrm{tr}(\mathbb{E}_{X_S \sim P_X ^{q-1}} [(\tilde{X}_S ^T \tilde{X}_S)^{-1} ] )  = -\frac{\sigma^2}{m} \frac{p}{q-p-2}.
\end{align*}
Now it is enough to compute the following expectations:
\begin{align*}
    \mathbb{E}_{X_S \sim P_X ^{j-1}}[\frac{\tilde{x}^{*T} (\tilde{X}_S ^T \tilde{X}_S)^{-2} \tilde{x}^{*} }{(1+\tilde{x}^{*T} (\tilde{X}_S ^T \tilde{X}_S)^{-1} \tilde{x}^*)^2}] \quad \mathrm{ and } \quad \mathbb{E}_{X_S \sim P_X ^{j-1}}[\frac{\tilde{x}^{*T} (\tilde{X}_S ^T \tilde{X}_S)^{-2} \tilde{x}^{*}}{1+\tilde{x}^{*T} (\tilde{X}_S ^T \tilde{X}_S)^{-1} \tilde{x}^*}].
\end{align*}

[Step 1]
For any $p \times p$ orthogonal matrix $\Gamma$, we have $\Gamma (\tilde{X}_S ^T \tilde{X}_S ) \Gamma^T \sim W_p (|S|, I_p)$ due to $\tilde{X}_S ^T \tilde{X}_S \sim W_p (|S|, I_p)$.
We choose an orthogonal matrix $\Gamma$ with the first column is $(\tilde{x}^{*T}\tilde{x}^{*})^{-1/2} \tilde{x}^{*}$ and let $V := \Gamma (\tilde{X}_S ^T \tilde{X}_S ) \Gamma^T$.
Then, $\tilde{x}^{*T} (\tilde{X}_S ^T \tilde{X}_S)^{-1} \tilde{x}^{*} = (\Gamma \tilde{x}^{*})^T V^{-1} (\Gamma \tilde{x}^{*}) = \tilde{x}^{*T}\tilde{x}^{*} v^{11}$ where $V^{-1} = (v^{ij})$.
Similarly, we obtain $\tilde{x}^{*T} (\tilde{X}_S ^T \tilde{X}_S)^{-2} \tilde{x}^{*} = \tilde{x}^{*T}\tilde{x}^{*} \sum_{j=1} ^p (v^{1j})^2$.

Now we let $V = T T^T$ where $T$ is an upper triangular matrix with positive diagonal elements as
\begin{align*}
    T=
  \left( {\begin{array}{cc}
   t_{11} & \mathbf{t}^T \\
   0 & T_{22} \\
  \end{array} } \right).
\end{align*}
Then, 
\begin{align*}
    T^{-1}=
  \left( {\begin{array}{cc}
   t_{11} ^{-1} & -t_{11} ^{-1} \mathbf{t}^T T_{22} ^{-1} \\
   0 & T_{22} ^{-1} \\
  \end{array} } \right), \quad 
  V^{-1} =
  \left( {\begin{array}{cc}
   t_{11} ^{-2} & -t_{11} ^{-2} \mathbf{t}^T T_{22} ^{-1} \\
   -t_{11} ^{-2} (T_{22}^T) ^{-1} \mathbf{t} & (T_{22} T_{22}^T )^{-1} + t_{11} ^{-2} ( T_{22}^T )^{-1} \mathbf{t} \mathbf{t}^T T_{22}^{-1}\\
  \end{array} } \right).
\end{align*}
Therefore, 
\begin{align*}
    \frac{\tilde{x}^{*T} (\tilde{X}_S ^T \tilde{X}_S)^{-2} \tilde{x}^{*} }{(1+\tilde{x}^{*T} (\tilde{X}_S ^T \tilde{X}_S)^{-1} \tilde{x}^*)^2} = \frac{ \tilde{x}^{*T}\tilde{x}^{*} ( t_{11}^{-4}+ t_{11}^{-4} \mathbf{t}^T(T_{22} ^T T_{22})^{-1} \mathbf{t}) }{( 1+ \tilde{x}^{*T}\tilde{x}^{*} t_{11} ^{-2} )^2} = \frac{ \tilde{x}^{*T}\tilde{x}^{*} ( 1+ \mathbf{t}^T(T_{22} ^T T_{22})^{-1} \mathbf{t}) }{( \tilde{x}^{*T}\tilde{x}^{*} + t_{11} ^2 )^2}.
\end{align*}
Due to \citet[Theorem 3.3.5]{gupta1999}, $t_{11}^2$ is independent to $\mathbf{t}^T (T_{22} ^T T_{22})^{-1} \mathbf{t}$ with $t_{11}^2 \sim \chi_{|S|-p+1} ^2$.
Furthermore, by \citet[Theorem 3.3.28]{gupta1999}, $\mathbf{t}^T(T_{22} ^T T_{22})^{-1} \mathbf{t} \sim \frac{p-1}{|S|-p+2} F_{p-1,|S|-p+2}$.
That is,
\begin{align*}
    \mathbb{E}_{X_S \sim P_X ^{j-1}}[\frac{\tilde{x}^{*T} (\tilde{X}_S ^T \tilde{X}_S)^{-2} \tilde{x}^{*} }{(1+\tilde{x}^{*T} (\tilde{X}_S ^T \tilde{X}_S)^{-1} \tilde{x}^*)^2}] &= \tilde{x}^{*T}\tilde{x}^{*} \mathbb{E}[(1+ \mathbf{t}^T(T_{22} ^T T_{22})^{-1} \mathbf{t})] \mathbb{E}[ \frac{1}{( \tilde{x}^{*T}\tilde{x}^{*} + t_{11} ^2 )^2}] \\
    &= \tilde{x}^{*T}\tilde{x}^{*} \frac{|S|-1}{|S|-p} \mathbb{E}[ \frac{1}{( \tilde{x}^{*T}\tilde{x}^{*} + t_{11} ^2 )^2}].
\end{align*}

[Step 2] Similarly, we have
\begin{align*}
    \frac{ \tilde{x}^{*T} (\tilde{X}_S ^T \tilde{X}_S)^{-2} \tilde{x}^{*} }{1+\tilde{x}^{*T} (\tilde{X}_S ^T \tilde{X}_S)^{-1} \tilde{x}^*} &= \frac{\tilde{x}^{*T}\tilde{x}^{*} ( t_{11}^{-4}+ t_{11}^{-4} \mathbf{t}^T(T_{22} ^T T_{22})^{-1} \mathbf{t}) }{ 1+ \tilde{x}^{*T}\tilde{x}^{*} t_{11} ^{-2} } \\
    &= (1+ \mathbf{t}^T(T_{22} ^T T_{22})^{-1} \mathbf{t} ) \left( \frac{1}{ t_{11} ^2} - \frac{1}{ t_{11} ^2 + \tilde{x}^{*T}\tilde{x}^{*} } \right),
\end{align*}
and
\begin{align*}
     \mathbb{E}_{X_S \sim P_X ^{j-1}}[\frac{\tilde{x}^{*T} (\tilde{X}_S ^T \tilde{X}_S)^{-2} \tilde{x}^{*}}{1+\tilde{x}^{*T} (\tilde{X}_S ^T \tilde{X}_S)^{-1} \tilde{x}^*}] &= \frac{|S|-1}{|S|-p} \mathbb{E}[  \left( \frac{1}{ t_{11} ^2} - \frac{1}{ t_{11} ^2 + \tilde{x}^{*T}\tilde{x}^{*} } \right)] \\
     &= \frac{|S|-1}{|S|-p} \left( \frac{1}{ |S|-p-1} - \mathbb{E}[\frac{1}{ t_{11} ^2 + \tilde{x}^{*T}\tilde{x}^{*} }] \right). 
\end{align*}

[Step 3]
Therefore, for any $q \geq p+3$ and Chi-squared distributions $T_j \sim \chi_{j-p+1} ^2$ (or equivalently Gamma distributions $T_j \sim \mathrm{Gamma}((j-p+1)/2, 1/2)$), we have
\begin{align*}
&\nu( (x^*, y^*); U_{q,0}, P_{X,Y}, m) \\
&= C_{\mathrm{lin}} - \sigma^2 - \mathbb{E}_{S \sim P_{X,Y}^{q-1}} [(\hat{\beta}_{S, \gamma} -\beta)^T \Sigma_X (\hat{\beta}_{S, \gamma} -\beta)] \\
&+ \frac{\sigma ^{2}}{m} \sum_{j=q} ^m \Bigg(   (1-\frac{e^{*2}}{\sigma^{2}})  \frac{j-1}{j-p} \mathbb{E}[ \frac{ \tilde{x}^{*T}\tilde{x}^{*} }{( \tilde{x}^{*T}\tilde{x}^{*} + T_j )^2}] + \frac{j-1}{j-p} \left( \frac{1}{j-p-1} - \mathbb{E}[\frac{1}{  \tilde{x}^{*T}\tilde{x}^{*}+T_j }] \right) \Bigg)
\end{align*}
By setting 
\begin{align*}
    C_{\mathrm{lin}} = \sigma^2 + \mathbb{E}_{S \sim P_{X,Y}^{q-1}} [(\hat{\beta}_{S, \gamma} -\beta)^T \Sigma_X (\hat{\beta}_{S, \gamma} -\beta)] - \frac{\sigma ^{2}}{m} \sum_{j=q} ^m \frac{j-1}{j-p}  \frac{1}{j-p-1},
\end{align*}
we have
\begin{align*}
&\nu( (x^*, y^*); U_{q,0}, P_{X,Y}, m) \\
&= \frac{\sigma ^{2}}{m} \sum_{j=q} ^m \Bigg(   (1-\frac{e^{*2}}{\sigma^{2}})  \frac{j-1}{j-p} \mathbb{E}[ \frac{ \tilde{x}^{*T}\tilde{x}^{*} }{( \tilde{x}^{*T}\tilde{x}^{*} + T_j )^2}] - \frac{j-1}{j-p}  \mathbb{E}[\frac{1}{  \tilde{x}^{*T}\tilde{x}^{*}+T_j }] \Bigg)\\
&= - \frac{1}{m} \sum_{j=q} ^m  \mathbb{E} \left[ \frac{j-1}{j-p} \frac{ \left(  x^{*T} \Sigma_X ^{-1} x^* e^{*2} + T_j \sigma^{2} \right) }{ ( x^{*T} \Sigma_X ^{-1} x^* + T_j)^2  } \right].
\end{align*}
\end{proof}

\subsection{Proof of Theorem \ref{thm:sub_gaussian_inputs_ridge}}

To begin, we first define some notations and a useful lemma.
Let $\lambda_{\mathrm{min}}(A)$ and $\lambda_{\mathrm{max}}(A)$ be the smallest and largest singular values of a matrix $A$.
For a sub-Gaussian random variable $X$, we denote its sub-Gaussian norm by $\norm{X}_{\psi_2} := \sup_{p \geq 1} p^{-1/2} (\mathbb{E}|X|^p)^{1/p}$.
For a sub-Gaussian random vector $X$, we denote its sub-Gaussian norm by $\norm{X}_{\psi_2} := \sup_{ x^T x =1  } \norm{\langle X, x \rangle}_{\psi_2}$. 
Lastly, we quote the non-asymptotic eigenvalue bounds by \citet[Theorem 5.39]{vershynin2010}.
\begin{lemma}
Suppose that $\tilde{X}_S$ is a matrix whose rows are independent sub-Gaussian isotropic random vectors in $\mathbb{R}^p$, then for every $t \geq 0$, with probability at least $1-2 \exp(-ct^2)$ one has
\begin{align*}
\sqrt{|S|}(1-\delta_{|S|}) = \sqrt{|S|}-C \sqrt{p}-t \leq \lambda_{\mathrm{min}} (\tilde{X}_S) \leq \lambda_{\mathrm{max}} (\tilde{X}_S) \leq \sqrt{|S|} + C \sqrt{p} + t = \sqrt{|S|}(1+\delta_{|S|}),
\end{align*}
where $\delta_{|S|} = (C\sqrt{p}+ t) / \sqrt{|S|}$ and $C$, $c$ are two constants depending only on the sub-Gaussian norm.
\label{lemma:eigenvalue_bounds}
\end{lemma}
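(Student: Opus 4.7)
The plan is to follow the classical $\epsilon$-net plus Bernstein concentration argument. Let $N := |S|$ and $A := \tilde{X}_S$. The conclusion is equivalent to showing that the operator norm of the symmetric matrix $W := \frac{1}{N} A^T A - I_p$ is at most $\max(\delta_N, \delta_N^2)$ with high probability, where $\delta_N := (C\sqrt{p} + t)/\sqrt{N}$; this reduction uses the identity $\|W\|_{\mathrm{op}} = \sup_{x \in S^{p-1}} \bigl|\tfrac{1}{N}\|Ax\|^2 - 1\bigr|$ and the spectral mapping that converts eigenvalue bounds on $A^T A/N$ into singular-value bounds on $A$.

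For a fixed unit vector $x \in S^{p-1}$, the scalars $Y_i := \langle X_i, x \rangle$ are sub-Gaussian with norm at most $K := \|X_i\|_{\psi_2}$, and by isotropy $\mathbb{E}[Y_i^2] = x^T I_p x = 1$. Thus each $Y_i^2 - 1$ is centered and sub-exponential with norm of order $K^2$. Bernstein's inequality for independent centered sub-exponential summands then gives, for every $\epsilon \geq 0$,
\[ P\left( \left|\tfrac{1}{N}\|Ax\|^2-1\right|\geq \epsilon \right) \leq 2\exp\left(-c_0 N \min(\epsilon^2,\epsilon)\right), \]
where $c_0$ depends only on $K$.

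To pass from pointwise to uniform control, I would take a $\tfrac{1}{4}$-net $\mathcal{N}$ of the sphere $S^{p-1}$ with $|\mathcal{N}| \leq 9^p$ and use the standard approximation bound $\|W\|_{\mathrm{op}} \leq 2 \sup_{x \in \mathcal{N}} |x^T W x|$ valid for any symmetric $W$. A union bound over $\mathcal{N}$ at level $\epsilon = \tfrac{1}{2}\max(\delta_N, \delta_N^2)$ yields
\[ P\left(\|W\|_{\mathrm{op}} \geq \max(\delta_N, \delta_N^2)\right) \leq 2 \cdot 9^p \exp\left(-c_0 N \min(\epsilon^2,\epsilon)\right). \]

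Finally, I would choose the absolute constant $C$ large enough that $p \log 9 \leq \tfrac{1}{2} c_0 N \min(\epsilon^2,\epsilon)$ whenever $\delta_N \geq C\sqrt{p/N}$; this absorbs the $9^p$ factor into the exponent, leaving a residual tail of the form $2\exp(-c t^2)$ after expanding $\min(\epsilon^2,\epsilon)$ in terms of $t^2/N$. Converting $\|W\|_{\mathrm{op}} \leq \max(\delta_N, \delta_N^2)$ back to singular values via $1-\max(\delta_N, \delta_N^2) \leq \lambda_{\min}(A^T A)/N$ and $\lambda_{\max}(A^T A)/N \leq 1+\max(\delta_N, \delta_N^2)$, and taking square roots using $\sqrt{1-u} \geq 1-u$ and $\sqrt{1+u} \leq 1+u$ for $u \in [0,1]$, gives the stated inequalities. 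The main obstacle is the constant-tracking step: tuning $C$ and $c$ so that the net cardinality is absorbed cleanly and the exponent becomes $-c t^2$ uniformly in $t \geq 0$ without splitting into separate regimes; the underlying probabilistic content—a single Bernstein inequality applied to sub-exponential squares—is otherwise routine.
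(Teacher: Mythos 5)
The paper does not prove this lemma at all: it is quoted verbatim as \citet[Theorem 5.39]{vershynin2010}, so there is no in-paper argument to compare against, and your proposal is precisely the standard proof of that cited theorem --- reduce to bounding $\norm{\tfrac{1}{N}A^TA - I_p}_{\mathrm{op}}$ by $\max(\delta_N,\delta_N^2)$, control $\abs{\tfrac{1}{N}\norm{Ax}^2-1}$ for fixed unit $x$ via Bernstein's inequality for centered sub-exponential variables, union-bound over a $\tfrac14$-net of cardinality at most $9^p$, and absorb the $9^p$ by choosing $C$ large. The argument is correct; the one step you state loosely is the conversion from eigenvalue bounds on $A^TA/N$ to singular-value bounds on $A$, which is cleanest via the elementary inequality $\max(\abs{z-1},\abs{z-1}^2)\le\abs{z^2-1}$ for $z\ge 0$ (Vershynin's Lemma 5.36) rather than $\sqrt{1\pm u}$ estimates, though the conclusion is unaffected.
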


\begin{proof}[Proof of Theorem \ref{thm:sub_gaussian_inputs_ridge}]
[Step 1]
We provide a proof for the upper bound only, but the similar procedure can show the lower bound.
To this end, we fix $S$ and let $\tilde{X}_S = X_S \Sigma_X ^{-1/2}$, $\tilde{x}^* = \Sigma_X ^{-1/2} x^*$, and $\tilde{A}_{\gamma} = (\tilde{X}_S ^T \tilde{X}_S+ \gamma \Sigma_X^{-1} )$.
Then, we have
\begin{align}
    \frac{x^{*T} A_{\gamma} ^{-1} \Sigma_X A_{\gamma} ^{-1} x^{*} }{ 1+x^{*T} A_{\gamma} ^{-1} x^*} \frac{ (2 + x^{*T} A_{\gamma} ^{-1} x^{*})\sigma ^{2} - e ^{*2} }{1+x^{*T} A_{\gamma} ^{-1} x^*} = \frac{(\tilde{x}^{*T} \tilde{A}_{\gamma} ^{-2} \tilde{x}^{*}) (2 + \tilde{x}^{*T} \tilde{A}_{\gamma} ^{-1} \tilde{x}^{*}) \sigma ^{2} - e ^{*2} }{(1 + \tilde{x}^{*T} \tilde{A}_{\gamma} ^{-1} \tilde{x}^*)^2}.
    \label{eq:ridge_part}
\end{align}
Due to $\lambda_{\mathrm{max}}(AB) \leq \lambda_{\mathrm{max}}(A)\lambda_{\mathrm{max}}(B)$, we have
\begin{align*}
&\frac{(\tilde{x}^{*T} \tilde{A}_{\gamma} ^{-2} \tilde{x}^{*}) (2 + \tilde{x}^{*T} \tilde{A}_{\gamma} ^{-1} \tilde{x}^{*}) \sigma ^{2} - e ^{*2} }{(1 + \tilde{x}^{*T} \tilde{A}_{\gamma} ^{-1} \tilde{x}^*)^2} \\
&\leq \frac{ \tilde{x}^{*T} \tilde{x}^{*} \lambda_{\mathrm{max}}( \tilde{A}_{\gamma} ^{-2}) (2 + \tilde{x}^{*T} \tilde{x}^{*} \lambda_{\mathrm{max}}( \tilde{A}_{\gamma} ^{-1})) }{ (1+\tilde{x}^{*T} \tilde{x}^* \lambda_{\mathrm{min}}(\tilde{A}_{\gamma} ^{-1}) )^2} \sigma^2 - \frac{1}{ (1+\tilde{x}^{*T} \tilde{x}^* \lambda_{\mathrm{max}}(\tilde{A}_{\gamma} ^{-1}) )^2} e^{*2}.
\end{align*}
Since $|y_i| \leq B_{\mathcal{Y}} $ and $\hat{\beta}_{S} ^{\mathrm{R}} = \mathrm{argmin}_{\beta} (Y_S - X_S \beta)^T (Y_S - X_S \beta) + \gamma \norm{\beta}_2 ^2$, we obtain boundedness of $\norm{ \hat{\beta}_{S} ^{\mathrm{R}}}_2 ^2$, \textit{i.e.}, $\norm{ \hat{\beta}_{S} ^{\mathrm{R}}}_2 ^2 \leq \gamma^{-1} Y_S ^T Y_S \leq \gamma^{-1} m B_{\mathcal{Y}} ^2$ for any $S \subseteq \mathcal{X} \times \mathcal{Y}$.
That means, $U_q ^{\mathrm{R}}(S)$ is bounded, and thus Equation \eqref{eq:ridge_part} is bounded as well.
Let say the bound is $C_{\mathrm{bdd}}$.

[Step 2] Using Lemma \ref{lemma:eigenvalue_bounds} with $t_{|S|} = \sqrt{\frac{\log(|S| m^{1/2})}{c}}$, the following holds with probability at least $1-2/(|S|m^{1/2})$.
\begin{align*}
\sqrt{|S|}(1-\delta_{|S|}) = \sqrt{|S|}-C \sqrt{p}-t \leq \lambda_{\mathrm{min}} (\tilde{X}_S) \leq \lambda_{\mathrm{max}} (\tilde{X}_S) \leq \sqrt{|S|} + C \sqrt{p} + t = \sqrt{|S|}(1+\delta_{|S|}),
\end{align*}
where $\delta_{|S|} = (C\sqrt{p}+\sqrt{\frac{\log(|S|m)}{2c}}) / \sqrt{|S|}$.
We denote the set where the inequalities hold by $\Omega_{|S|}$ and we obtain the following bounds.
\begin{align*}
& \mathbb{E}_{X_S \sim P_X ^{j-1}} \Bigg[ \frac{(\tilde{x}^{*T} \tilde{A}_{\gamma} ^{-2} \tilde{x}^{*}) (2 + \tilde{x}^{*T} \tilde{A}_{\gamma} ^{-1} \tilde{x}^{*}) \sigma ^{2} - e ^{*2} }{(1 + \tilde{x}^{*T} \tilde{A}_{\gamma} ^{-1} \tilde{x}^*)^2}  \Bigg] \\
&\leq \int_{\Omega_{|S|}} \frac{  \tilde{x}^{*T} \tilde{x}^{*} \lambda_{\mathrm{max}}( \tilde{A}_{\gamma} ^{-2}) (2 + \tilde{x}^{*T} \tilde{x}^{*} \lambda_{\mathrm{max}}( \tilde{A}_{\gamma} ^{-1})) }{ (1+\tilde{x}^{*T} \tilde{x}^* \lambda_{\mathrm{min}}(\tilde{A}_{\gamma} ^{-1}) )^2} \sigma^2 dP - \int_{\Omega_{|S|}} \frac{1}{ (1+\tilde{x}^{*T} \tilde{x}^* \lambda_{\mathrm{max}}(\tilde{A}_{\gamma} ^{-1}) )^2} e^{*2} dP \\
&+ \int_{ \Omega^c } C_{\mathrm{bdd}} dP \\
&\leq \frac{ \tilde{x}^{*T} \tilde{x}^{*} ( |S|(1-\delta_{|S|})^2 + \gamma \lambda_{\mathrm{min}}(\Sigma_X^{-1}))^{-2} }{ (1+\tilde{x}^{*T} \tilde{x}^* ( |S|(1+\delta_{|S|})^2 + \gamma \lambda_{\mathrm{max}}(\Sigma_X^{-1}) )^{-1})^2} \left( 2 + \tilde{x}^{*T} \tilde{x}^{*} ( |S|(1-\delta_{|S|})^2 + \gamma \lambda_{\mathrm{min}}(\Sigma_X^{-1}))^{-1} \right) \sigma^2 \\
&- \frac{e^{*2}}{ (1+\tilde{x}^{*T} \tilde{x}^* ( |S|(1-\delta_{|S|})^2 + \gamma \lambda_{\mathrm{min}}(\Sigma_X^{-1}) )^{-1})^2} + C_{\mathrm{bdd}} P(\Omega_{|S|} ^c),
\end{align*}
where the second inequality is due to $\lambda_{\mathrm{min}}(A+B) \geq \lambda_{\mathrm{min}} (A)+\lambda_{\mathrm{min}} (B) $ and $\lambda_{\mathrm{max}}(A+B) \leq \lambda_{\mathrm{max}} (A)+\lambda_{\mathrm{max}} (B)$.
Hence,
\begin{align*}
&\nu((x^*, y^*); U_{q,\gamma}, P_{X,Y},m) \\
&\leq \frac{1}{m} \sum_{j=q-1} ^{m-1}  \frac{ \tilde{x}^{*T} \tilde{x}^{*} ( j(1-\delta_{j})^2 + \gamma \lambda_{\mathrm{min}}(\Sigma_X^{-1}))^{-2} }{ (1+\tilde{x}^{*T} \tilde{x}^* ( j(1+\delta_{j})^2 + \gamma \lambda_{\mathrm{max}}(\Sigma_X^{-1}) )^{-1})^2} \left( 2 + \tilde{x}^{*T} \tilde{x}^{*} ( j(1-\delta_{j})^2 + \gamma \lambda_{\mathrm{min}}(\Sigma_X^{-1}))^{-1} \right) \sigma^2 \\
&- \frac{1}{m} \sum_{j=q-1} ^{m-1} \frac{ e^{*2} }{ (1+\tilde{x}^{*T} \tilde{x}^* (j(1-\delta_{j})^2 +  \gamma \lambda_{\mathrm{min}}(\Sigma_X^{-1}) )^{-1})^2} + \frac{C_{\mathrm{bdd}} }{m} \sum_{j=q-1} ^{m-1} P(\Omega_{j}^c) + h(\gamma) \\
&= \frac{1}{m} \sum_{j=q-1} ^{m-1}  \frac{ \tilde{x}^{*T} \tilde{x}^{*} \Lambda_{\mathrm{upper}}^{2} (j)  }{ (1+\tilde{x}^{*T} \tilde{x}^* \Lambda_{\mathrm{lower}} (j) )^2} \left( 2 + \tilde{x}^{*T} \tilde{x}^{*} \Lambda_{\mathrm{upper}} (j) \right) \sigma^2 \\
&- \frac{1}{m} \sum_{j=q-1} ^{m-1} \frac{ e^{*2} }{ (1+\tilde{x}^{*T} \tilde{x}^* \Lambda_{\mathrm{upper}} (j) )^2} + \frac{C_{\mathrm{bdd}} }{m} \sum_{j=q-1} ^{m-1} P(\Omega_{j}^c) + h(\gamma),
\end{align*}
where $\Lambda_{\mathrm{upper}}(j) := (j(1-\delta_{j})^2 + \gamma \lambda_{\mathrm{min}}(\Sigma_X^{-1}))^{-1}$ and $\Lambda_{\mathrm{lower}}(j) := (j(1+\delta_{j})^2 + \gamma \lambda_{\mathrm{max}}(\Sigma_X^{-1}))^{-1}$ for $j \in \mathbb{N}$.
Lastly, $\frac{1}{m} \sum_{j=q-1} ^{m-1} P(\Omega_{j}^c) = \frac{1}{m} \sum_{j=q-1} ^{m-1} \frac{2}{j\sqrt{m}} \leq 4 \frac{\log(m)}{m^{3/2}}$ concludes a proof.
\end{proof}

\begin{remark}
It is noteworthy that the eigenvalues of $A_{S,\gamma}^{-1}$ are contained in $[\Lambda_{\mathrm{lower}}(j),$ and $\Lambda_{\mathrm{upper}}(j)]$ with high probability.
By Lemma \ref{lemma:eigenvalue_bounds}, on $\Omega_{j}$, we have
\begin{align*}
    j(1-\delta_{j})^2 + \gamma \lambda_{\mathrm{min}}(\Sigma_X^{-1}) \leq \lambda_{\mathrm{min}}(A_{S,\gamma}) \leq \lambda_{\mathrm{max}}(A_{S,\gamma}) \leq j(1+\delta_{j})^2 + \gamma \lambda_{\mathrm{max}}(\Sigma_X^{-1}),
\end{align*}
and thus
\begin{align*}
    \Lambda_{\mathrm{lower}}(j) \leq \lambda_{\mathrm{min}}(A_{S,\gamma} ^{-1}) \leq \lambda_{\mathrm{max}}(A_{S,\gamma} ^{-1}) \leq \Lambda_{\mathrm{upper}}(j).
\end{align*}
\end{remark}

\subsection{Proof of Corollary \ref{cor:gaussian_inputs_binary}}
We first provide a detailed version of Corollary \ref{cor:gaussian_inputs_binary}.

\begin{corollary}
[DShapley in binary classification; a detailed version]
Assume $\mathbb{E}[Y \mid X]= \mathrm{logit}^{-1}(X ^T \beta)$ and $X$ are sub-Gaussian in $\mathbb{R}^p$ with $\mathbb{E}(X X^T) = \Sigma_X$. 
For a point $(x^*, y^*)$, let $\pi^* = \mathrm{logit}^{-1} (x^{*T}\beta)$,  $w^*=\pi^* (1-\pi^*)$, and $z^* = x^{*T}\beta + (y^*-\pi^*)/w^*$.
Then, for any $q \geq p+3$ and some fixed constant $C_{\mathrm{lin}}$, DShapley of a point $\left( (w^*)^{1/2}x^*, (w^*)^{1/2}z^* \right)$ has the following upper and lower bounds.
\begin{align*}
&\frac{1}{m} \sum_{j=q-1} ^{m-1}  \frac{ w^* {x}^{*T} \tilde{\Sigma}_X ^{-1} {x}^{*} \tilde{\Lambda}_{\mathrm{lower}}^{2} (j) }{ (1+ w^* {x}^{*T} \tilde{\Sigma}_X ^{-1} {x}^{*} \tilde{\Lambda}_{\mathrm{upper}} (j) )^2} \left( ( 2 + w^* {x}^{*T} \tilde{\Sigma}_X ^{-1} {x}^{*} \tilde{\Lambda}_{\mathrm{lower}} (j) )  - \tilde{\Lambda}_{\mathrm{ratio}} ^{-1} (j) e_{\mathrm{b}}^{*2} \right) \\
&\leq \nu \left( \left( (w^*)^{1/2}x^*, (w^*)^{1/2}z^* \right); U_{q,0}, P_{\tilde{X},\tilde{Z}}, m \right) + o\left(\frac{1}{m}\right) \\
&\leq \frac{1}{m} \sum_{j=q-1} ^{m-1}  \frac{ w^* {x}^{*T} \tilde{\Sigma}_X ^{-1} {x}^{*} \tilde{\Lambda}_{\mathrm{upper}}^{2} (j) }{ (1+ w^* {x}^{*T} \tilde{\Sigma}_X ^{-1} {x}^{*} \tilde{\Lambda}_{\mathrm{lower}} (j) )^2} \left( ( 2 + w^* {x}^{*T} \tilde{\Sigma}_X ^{-1} {x}^{*} \tilde{\Lambda}_{\mathrm{upper}} (j) )  - \tilde{\Lambda}_{\mathrm{ratio}} (j) e_{\mathrm{b}}^{*2} \right),
\end{align*}
where $e_{\mathrm{b}} ^{*2} := (w^*)^{-1}(y^* -\pi^*)^2$, the function $h$ is defined in Proposition \ref{proposition:ridge}, $\tilde{\Sigma}_X := \mathbb{E}[wXX^T]$, $\tilde{\Lambda}_{\mathrm{upper}}(j) := (j(1-\delta_{j})^2)^{-1}$,  $\tilde{\Lambda}_{\mathrm{lower}}(j) := (j(1+\delta_{j})^2)^{-1}$, and $\delta_{j} = (C\sqrt{p}+\sqrt{\frac{\log(jm)}{2c}}) / \sqrt{j}$ for $j \in \mathbb{N}$ and certain constants $c$, $C$ as in the proof of Theorem \ref{thm:sub_gaussian_inputs_ridge}. Lastly,
\begin{align*}
\tilde{\Lambda}_{\mathrm{ratio}}(j) = \left( \frac{ 1+ w^* {x}^{*T} \tilde{\Sigma}_X ^{-1} {x}^{*} \tilde{\Lambda}_{\mathrm{lower}} (j) }{ 1 + w^* {x}^{*T} \tilde{\Sigma}_X ^{-1} {x}^{*} \tilde{\Lambda}_{\mathrm{upper}} (j) } \right)^2.
\end{align*}
\label{cor:gaussian_inputs_binary_full}
\end{corollary}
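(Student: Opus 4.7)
The plan is to reduce the binary classification setting to the linear regression setting of Theorem \ref{thm:sub_gaussian_inputs_ridge} via the IRLS transformation, and then substitute the resulting quantities into the bounds proved there. Since Corollary \ref{cor:gaussian_inputs_binary_full} uses the least squares estimator ($\gamma=0$) on the transformed data, the proof amounts to (i)~verifying that $(\tilde{X},\tilde{Z})=(w^{1/2}X,w^{1/2}Z)$ satisfies all hypotheses of Theorem \ref{thm:sub_gaussian_inputs_ridge}, and (ii)~rewriting the bounds in terms of the original $(x^*,y^*)$.

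First I would check the linear model structure. Using $Z=X^T\beta+(Y-\pi)/w$ with $\mathbb{E}[Y\mid X]=\pi$ and $\mathrm{Var}(Y\mid X)=\pi(1-\pi)=w$, a direct calculation gives $\mathbb{E}[Z\mid X]=X^T\beta$ and $\mathrm{Var}(Z\mid X)=1/w$, so for the rescaled variables
\begin{align*}
\mathbb{E}[\tilde{Z}\mid \tilde{X}] = w^{1/2} X^T\beta = \tilde{X}^T\beta, \qquad \mathrm{Var}(\tilde{Z}\mid \tilde{X}) = w\cdot\mathrm{Var}(Z\mid X) = 1.
\end{align*}
Thus the transformed problem fits the ridge regression assumptions with noise variance $\sigma^2=1$. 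Next, since $w=\pi(1-\pi)\le 1/4$, the projection $\langle \tilde{X},u\rangle = w^{1/2}\langle X,u\rangle$ is pointwise bounded in absolute value by $|\langle X,u\rangle|/2$, so sub-Gaussianity of $X$ transfers to $\tilde{X}$ with no worse constants. Finally $\mathbb{E}[\tilde{X}\tilde{X}^T]=\mathbb{E}[wXX^T]=\tilde{\Sigma}_X$ by definition.

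Second, I would translate the two key quantities appearing in Theorem \ref{thm:sub_gaussian_inputs_ridge} into the binary-classification notation. For the Mahalanobis distance,
\begin{align*}
\tilde{x}^{*T}\tilde{\Sigma}_X^{-1}\tilde{x}^{*} = w^{*}\,x^{*T}\tilde{\Sigma}_X^{-1}x^{*},
\end{align*}
and for the squared error at the transformed point,
\begin{align*}
(\tilde{z}^{*}-\tilde{x}^{*T}\beta)^{2} = w^{*}(z^{*}-x^{*T}\beta)^{2} = w^{*}\left(\tfrac{y^{*}-\pi^{*}}{w^{*}}\right)^{2} = \tfrac{(y^{*}-\pi^{*})^{2}}{w^{*}} = e_{\mathrm{b}}^{*2}.
\end{align*}
With $\gamma=0$ the $\gamma\lambda_{\min/\max}(\Sigma_X^{-1})$ terms in the definitions of $\Lambda_{\mathrm{lower}},\Lambda_{\mathrm{upper}}$ (and hence the $h(\gamma)$ remainder) vanish, leaving $\tilde{\Lambda}_{\mathrm{upper}}(j)=(j(1-\delta_j)^{2})^{-1}$ and $\tilde{\Lambda}_{\mathrm{lower}}(j)=(j(1+\delta_j)^{2})^{-1}$. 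Plugging all of the above into the bounds of Theorem \ref{thm:sub_gaussian_inputs_ridge} yields the claimed upper and lower bounds, with the $O(\log m /m^{3/2})$ residual from the sub-Gaussian eigenvalue tail absorbed into the $o(1/m)$ term.

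The main obstacle is that Theorem \ref{thm:sub_gaussian_inputs_ridge} was stated under a boundedness hypothesis on $\mathcal{Y}$, whereas the transformed response $\tilde{Z}$ carries a $1/w^{1/2}$ factor that can blow up when $\pi$ is near $0$ or $1$. To handle this, I would truncate by restricting attention to the high-probability event where $|X^T\beta|$ is controlled (using the sub-Gaussian tails of $X$), so that $w$ is bounded away from zero on that event; on its complement, the contribution to the marginal is polynomially small in $m$ and is absorbed into the $o(1/m)$ term, mirroring how Theorem \ref{thm:sub_gaussian_inputs_ridge}'s proof folded $C_{\mathrm{bdd}}\,P(\Omega_j^{c})$ into lower-order terms.
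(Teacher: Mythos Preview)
Your proposal is correct and follows essentially the same approach as the paper: verify that $(\tilde X,\tilde Z)=(w^{1/2}X,w^{1/2}Z)$ satisfies the hypotheses of Theorem~\ref{thm:sub_gaussian_inputs_ridge} (linear conditional mean $\tilde X^T\beta$, unit conditional variance, sub-Gaussianity of $\tilde X$ from $w\le 1$, second moment $\tilde\Sigma_X$), then apply that theorem with $\gamma=0$ and substitute $\tilde x^{*T}\tilde\Sigma_X^{-1}\tilde x^{*}=w^{*}x^{*T}\tilde\Sigma_X^{-1}x^{*}$ and $e^{*2}=e_{\mathrm b}^{*2}$. Your write-up is actually more thorough than the paper's three-line proof, which invokes Theorem~\ref{thm:sub_gaussian_inputs_ridge} directly and does not discuss the boundedness-of-$\mathcal{Y}$ hypothesis that you flag and propose to handle by truncation.
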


\begin{proof}[Proof of Corollary \ref{cor:gaussian_inputs_binary}]
Since $\mathbb{E}[ Y \mid X]= \mathrm{logit}^{-1}(X ^T \beta)=\pi$, we have $\mathbb{E}[ w^{1/2} Z \mid w^{1/2} X] = w^{1/2} X ^T \beta$ and $\mathrm{Var}[ w^{1/2} Z \mid w^{1/2} X] = 1$.
Furthermore, by the definition of sub-Gaussian, $w^{1/2}X$ is also sub-Gaussian with $\tilde{\Sigma}_X$ because $w \leq 1$ and $X$ are sub-Gaussian with $\mathbb{E}(XX^T)=\Sigma_X$.
With the notations, Theorem \ref{thm:sub_gaussian_inputs_ridge} with $\gamma=0$ gives the upper and lower bounds.
\end{proof}

\subsection{Proof of Theorem \ref{thm:shapley_density}}
\begin{proof}[Proof of Theorem \ref{thm:shapley_density}]
Let $S^* = \{z_1 ^*, \dots, z_n ^*\}$.
A simple algebra gives
$\hat{p}_{S \cup S^* } (z) = \frac{1}{|S|+n} (\sum_{j=1} ^n k(z, z_j ^*) + |S|\hat{p}_{S} (z) ) = \frac{1}{|S|+n} \sum_{j=1} ^n k(z, z_j ^*) + \frac{|S|}{|S|+n} \hat{p}_{S} (z) = \hat{p}_{S} (z)  + \frac{n}{|S|+n} ( \frac{1}{n} \sum_{j=1} ^n k(z, z_j ^*) - \hat{p}_{S} (z) )$.
Note that $\frac{1}{n} \sum_{j=1} ^n k(z, z_j ^*) = \hat{p}_{S^*} (z)$.
For $|S| \geq 1$, we have
\begin{align*}
    &U(S \cup S^*) - U(S) \\
    &= -\int (p(z) - \hat{p}_{S \cup S^* } (z) )^2 - (p(z) - \hat{p}_S (z) )^2  dz \\
    &= -\int  \left(p(z) - \hat{p}_S(z) - \frac{n}{|S|+n} \left( \hat{p}_{S^*} (z) - \hat{p}_S(z) \right) \right)^2 - (p(z) - \hat{p}_S (z) )^2  dz \\
    &= -\int  \frac{n^2}{(|S|+n)^2} \left( \hat{p}_{S^*} (z) -\hat{p}_S(z) \right)^2  -\frac{2n}{|S|+n} \left\{ (p(z) - \hat{p}_S(z)) \left( \hat{p}_{S^*} (z) - \hat{p}_S(z) \right) \right\} dz.
\end{align*}
Furthermore,
\begin{align}
    ( \hat{p}_{S^*} (z) - \hat{p}_S(z) )^2
    &= ( \hat{p}_{S^*} (z) - p(z) + p(z) -\hat{p}_S(z) )^2  \notag  \\
    &= ( \hat{p}_{S^*} (z) - p(z))^2 + (p(z) -\hat{p}_{S} (z))^2 + 2( \hat{p}_{S^*} (z) - p(z))(p(z) -\hat{p}_{S} (z)),
    \label{eq:part1_set}
\end{align}
and
\begin{align}
    (p(z) - \hat{p}_S(z)) (\hat{p}_{S^*} (z) - \hat{p}_{S} (z))  &= (p(z) - \hat{p}_S(z)) (\hat{p}_{S^*} (z) - p(z) + p(z) -\hat{p}_{S} (z)) \notag \\
    &= (p(z) - \hat{p}_S(z)) (\hat{p}_{S^*} (z) -p(z)) + (p(z) - \hat{p}_S(z))^2.
    \label{eq:part2_set}
\end{align}
Equations \eqref{eq:part1_set} and \eqref{eq:part2_set} give
\begin{align*}
    \mathbb{E}[U(S \cup S^*) - U(S)] &= - \frac{ n^2 }{(|S|+n)^2} \int ( \hat{p}_{S^*} (z)- p(z))^2  dz \\
    &+ \frac{ n^2 + 2n|S| }{(|S|+n)^2} \int \mathbb{E}[(p(z) -\hat{p}_{S} (z))^2]  dz \\
    &+ \frac{  2n|S| }{(|S|+n)^2} \int (\hat{p}_{S^*} (z)- p(z)) \mathbb{E}[p(z) -\hat{p}_{S} (z)] dz.
\end{align*}

We can decompose $\mathbb{E}[U(S \cup S^*) - U(S)]$ into two terms by dependency of $S^*$. 
To be more specific, $\mathbb{E}[U(S \cup S^*) - U(S)] = h_1(S^*, |S|)+ h_2(|S|^*, |S|)$ where
\begin{align*}
    h_1(S^*, |S|) &= - \frac{ n^2 }{(|S|+n)^2}  \int ( \hat{p}_{S^*} (z)- p(z))^2 dz  + \frac{ 2n|S| }{(|S|+n)^2} \int \hat{p}_{S^*} (z) \mathbb{E}[p(z) -\hat{p}_{S} (z)] dz.
\end{align*}   
Also,
\begin{align*}
    h_2 (n,|S|) &= \frac{ n^2 +2n |S| }{ (|S|+n)^2 } \int \mathbb{E}[(p(z) -\hat{p}_{S} (z))^2] dz - \frac{ 2 n|S| }{ (|S|+n)^2 } \int p(z) \mathbb{E}[p(z) -\hat{p}_{S} (z)] dz\\
    &= \frac{ n^2 +2n |S| }{ (|S|+n)^2 } \int \mathbb{E}[(p(z) -\hat{p}_{S} (z))^2] dz - \frac{ 2 n|S| }{ (|S|+n)^2 } \int p(z) (p(z) - \mathbb{E}[k(z, Z)]) dz.
\end{align*}
Therefore, by \citet[Theorem 2.3]{ghorbani2020}, we have
\begin{align}
    \nu(S^* ; U, P, m) &= \frac{1}{m} \sum_{j=1} ^m \mathbb{E}_{S \sim P ^{j-1}} [U (S \cup S^* )-U (S)] \notag \\
    &= - \frac{1}{m} \sum_{j=1} ^m  \frac{ n^2 }{(j+n-1)^2} \int ( \hat{p}_{S^*} (z)- p(z))^2 dz \notag \\
    &+  \frac{1}{m} \sum_{j=2} ^m \frac{ 2n (j-1) }{(j+n-1)^2} \int \hat{p}_{S^*} (z) (p(z) - \mathbb{E}[k(z, Z)]) dz + C_0(n,m) \notag \\
    &= -A(n,m)  \int ( \hat{p}_{S^*} (z)- p(z))^2 dz + B(n,m) g(S^*) + C_0(n,m) \label{eqn:full_dsv_density},
\end{align}
and
\begin{align}
    C_0(n,m) &= \frac{1}{m} C_{\mathrm{den}} + \frac{1}{m} \sum_{j=2} ^m h_2 (n,j-1).
    \label{eqn:density_constant}
\end{align}
Hence, it concludes a proof by choosing the constant $C_{\mathrm{den}}$ as follows.
\begin{align}
    C_{\mathrm{den}} = - \sum_{j=2} ^m h_2 (n,j-1). \label{eqn:density_constant_full}
\end{align}
\end{proof}

\section{Details for Examples in Section \ref{s:dsv_density}}
\subsection{Details for Example \ref{exp:two_set}}
\begin{proof}[Proof of Example \ref{exp:two_set}]
A key idea is to develop Equation \eqref{eqn:full_dsv_density}.

[Step 1]
In this step we compute 
\begin{align*}
    h_2 (n,|S|) &= \frac{ n^2 +2n |S| }{ (|S|+n)^2 } \int \mathbb{E}[(p(z) -\hat{p}_{S} (z))^2] dz - \frac{ 2 n|S| }{ (|S|+n)^2 } \int p(z) (p(z) - \mathbb{E}[k(z, Z)]) dz.
\end{align*}
We first compute the term $\int \mathbb{E}[(p(z) -\hat{p}_{S} (z))^2] dz$.
Note that $\hat{p}_{S} ^2 (z) =\frac{1}{|S|^2} (\sum_{z_i \in S} k(z, z_i) ^2 + \sum_{i \neq j : z_i, z_j \in S} k(z, z_i) k(z, z_j))$.
We have
\begin{align*}
\mathbb{E}[ \hat{p}_{S} (z) ] = \mathbb{E}[k (z, Z)] = \begin{cases}
\frac{1}{2}+\frac{z}{h} & 0 \leq z \leq h/2,\\
1 & h/2 \leq z \leq 1-h/2, \\
\frac{1}{2}+\frac{1-z}{h} & 1-h/2 \leq z \leq 1,
\end{cases}    
\end{align*}
and due to $p(z)=1$,
\begin{align*}
p(z) - \mathbb{E}[k (z, Z)] = \begin{cases}
\frac{1}{2}-\frac{z}{h} & 0 \leq z \leq h/2,\\
0 & h/2 \leq z \leq 1-h/2, \\
\frac{1}{2}-\frac{1-z}{h} & 1-h/2 \leq z \leq 1.
\end{cases}    
\end{align*}
Since $S$ are randomly sampled, we have
\begin{align*}
    \mathbb{E}[ \hat{p}_{S} ^2 (z) ] = \frac{ |S|\mathbb{E}[ k (z, Z) ]/h + |S|(|S|-1)\mathbb{E}[ k (z, Z) ]^2 }{|S|^2} = \frac{\mathbb{E}[ k (z, Z) ]}{|S|h} + \frac{|S|-1}{|S|}\mathbb{E}[ k (z, Z) ]^2.
\end{align*}
Furthermore, we have $\int \mathbb{E}[ k (z, Z) ] dz = 1-h/4$ and $\int \mathbb{E}[k (z, Z) ]^2 dz = 1-5h/12$.
Hence, $\int \mathbb{E}[ \hat{p}_{S} ^2 (z) ] dz = \frac{1}{|S|h} -\frac{1}{4|S|} + \frac{|S|-1}{|S|} (1-\frac{5h}{12} )$ and we have 
\begin{align*}
    \int \mathbb{E}[(p(z) -\hat{p}_{S} (z))^2] dz &= 1 + \left(\frac{1}{|S|h} -\frac{1}{4|S|} + \frac{|S|-1}{|S|} (1-\frac{5h}{12} ) \right) -2 \left( 1-\frac{h}{4} \right) \\
    &= \frac{1}{|S|h} - \frac{5}{4|S|} + \frac{(5+|S|)h}{12|S|} \\
    &= \frac{12-15h+(5+|S|)h^2 }{12|S|h}.
\end{align*}
Lastly, $\int p(z) (p(z) - \mathbb{E}[k(z, Z)]) dz = h/4$ gives
\begin{align*}
    h_2 (n,|S|) &= \frac{ n^2 +2n |S| }{ (|S|+n)^2 } \frac{12-15h+(5+|S|)h^2 }{12|S|h} - \frac{ 2 n|S| }{ (|S|+n)^2 } \frac{h}{4}.
\end{align*}

[Step 2]
By construction of $h$, $g(S^*)=0$.
If $\Delta \geq h$, since $z_1 ^*$ and $z_2 ^*$ are apart at least $h$,  
\begin{align*}
    -\int  (p(z) - \hat{p}_{S^*} (z) )^2 dz &= - \int  (1 - \hat{p}_{S^*} (z) )^2 dz \\
    &= - \left( |S^*|h \left( 1- \frac{1}{|S^*|h} \right)^2 + (1-|S^*|h) \right) \\
    &= 1 - \frac{1}{|S^*|h}.
\end{align*}
Therefore, by aggregating all the results in [Step 1] and [Step 2], we have
\begin{align*}
    \nu(S^*; U, P,m) &= A(2,m) \left( 1 - \frac{1}{2h} \right) + C_0(2, m).
\end{align*}
Note that by Equation \eqref{eqn:density_constant},
\begin{align*}
    C_0(2, m) = \frac{1}{m} C_{\mathrm{den}} + \frac{1}{m} \sum_{j=2} ^m h_2 (2,j-1).
\end{align*}
Note that we set $C_{\mathrm{set}} = C_0(2, m)$ in the manuscript.

[Step 3] We now consider the case $\Delta < h$.
To this end, without loss of generality, we assume that $z_1^* \leq z_2 ^*$.
Then there is overlap between $(z_1 ^*-h/2, z_1 ^*+ h/2)$ and $(z_2 ^*-h/2, z_2 ^*+ h/2)$.
\begin{align*}
    \hat{p}_{S^*}(z) = \begin{cases}
\frac{1}{2h} & z_1 ^* - h/2 \leq z \leq z_2 ^* - h/2, \\
\frac{1}{h} & z_2 ^* - h/2 \leq z \leq z_1 ^* + h/2, \\
\frac{1}{2h} & z_1 ^* + h/2 \leq z \leq z_2 ^* + h/2, \\
0 & \text{otherwise}.
\end{cases}    
\end{align*}
Therefore, $\int (p(z) -\hat{p}_{S^*}(z) )^2 dz = -1 + \frac{1}{h} - \frac{\Delta}{2h^2}$.
Hence, we have
\begin{align*}
    \nu(S^*; U, P,m) &= A(2,m) \left( 1 - \frac{1}{h} + \frac{\Delta}{2h^2} \right) + C_0(2, m).
\end{align*}
\end{proof}

\subsection{Details for Example \ref{exp:two_set_synergy}}

A similar analysis used in Example \ref{exp:two_set} gives 
\begin{align*}
    \nu( z_1 ^*; U, P_Z, m) = A(1,m)(1-\frac{1}{h}) + C_0(1,m),    
\end{align*}
where $C_0(1, m) = \frac{1}{m} C_{\mathrm{den}} + \frac{1}{m} \sum_{j=2} ^m h_2 (1,j-1)$ by Equation \eqref{eqn:density_constant}.
Since it is difficult to solve \eqref{ineqn:synergy} analytically, we numerically examine when \eqref{ineqn:synergy} holds when $C_{\mathrm{den}} = 0.2$ and $m=100$.
For fixed bandwidth $h$, we randomly draw $S^*$ 5000 times and observe if there is a synergy.
We empirically find that the synergy is determined by $\Delta$, so we define the synergy threshold as the smallest $\Delta$ when the synergy happens, \textit{i.e}, if $\Delta$ is greater than the synergy threshold, the inequality $\nu( \{z_1 ^*, z_2 ^* \}; U, P_Z, m) \geq \nu( z_1 ^*; U, P_Z, m) + \nu( z_2 ^*; U, P_Z, m)$ holds.
Also, among the 5000 random sampled sets $S^*$, we estimate probability that the synergy happens.
Figure \ref{fig:density_synergy_plot} shows that the synergy threshold and the corresponding synergy probability as a function of $h$.
As $h$ increases, the synergy threshold (in red dashed) increases and the synergy probability (in blue solid) decreases, meaning that in all bandwidths $h \in (0,0.35)$, the synergy happens when the two points in $S^*$ is far apart to some extent.

\begin{figure}[t]
    \centering
    \includegraphics[scale=0.5]{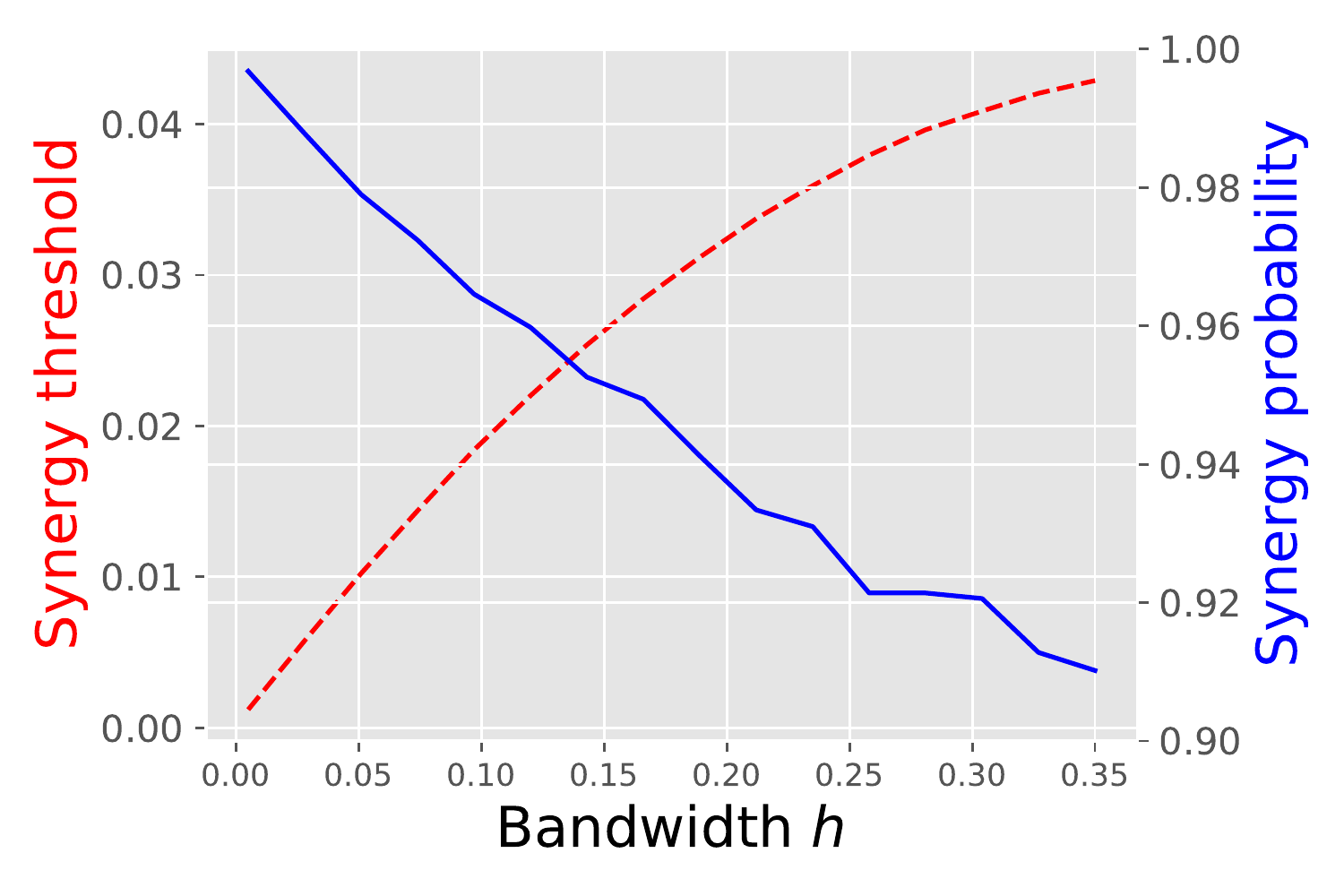}
    \caption{The synergy threshold (red dashed) and the corresponding synergy probability (blue solid) as a function of bandwidth.}
    \label{fig:density_synergy_plot}
\end{figure}

\section{Implementation details}
In this section, we provide implementation details including comprehensive information for algorithms, datasets, and experiment settings. Our implementation codes are available at \url{https://github.com/ykwon0407/fast_dist_shapley}.

\subsection{The proposed algorithms}
In order to estimate DShapley, we implicitly assume that we have a set of random samples $\{ (\tilde{x}_i, \tilde{y}_i) \}_{i=1} ^N$  (\textit{resp.} $\{ \tilde{z}_1, \dots, \tilde{z}_N\}$) from the data distribution $P_{X,Y}$ (\textit{resp.} $P_Z$). This set is used to estimate unknown quantities. For example, the covariance matrix of inputs $\Sigma_X^{-1}$ and the squared error $e^{*2}$ for Alg. \ref{alg:gaussian_inputs_lse_simple} and Alg. \ref{alg:DSV_IRLS}, and the optimal bandwidth in kernel for density estimation problem. 

\paragraph{Linear regression}
DShapley in Theorem \ref{thm:gaussian_inputs_lse} can be viewed as a cumulative sum of decreasing elements, so the computation of every element would be computationally inefficient. Instead of computing the cumulative sum, we consider the partial sum by ignoring negligible expectation terms. We present a detailed version of Alg. \ref{alg:gaussian_inputs_lse_simple}.

\begin{algorithm}[h]
\caption{(Detailed) DShapley for the least squares estimator under Gaussian inputs}
\begin{algorithmic}
\Require True value or estimates for ${x}^{*T} \Sigma_X ^{-1} {x}^{*}$, $e^{*2}$, and $\sigma^2$.
Thresholds $\rho_1=0.01, \rho_2= 0.005$.
The maximum number of Monte Carlo samples $T=10000$.
A constant $q \geq p+3$.
\Procedure{}{}
\State Initialize $\hat{\nu}^{\mathrm{old}} \leftarrow 0$
\For{$j \in \{ q ,\dots, m \}$}
\State Initialize $A_j ^{\mathrm{old}} \leftarrow 0$
\For{$i \in \{ 1 ,\dots, T \}$}
\State Sample $t_{[i]}$ from the $\chi_{j-p+1} ^2$.
\State $A_j ^{\mathrm{new}} \leftarrow \left( (i-1) A_j ^{\mathrm{old}} + \frac{j-1}{j-p}  \frac{x^{*T} \Sigma_X ^{-1} x^* e^{*2} + t_{[i]} \sigma^2 }{( {x}^{*T} \Sigma_X ^{-1} {x}^{*} + t_{[i]} )^2} \right)/i $ \Comment{Based on Theorem \ref{thm:gaussian_inputs_lse}}
\If{$|A_j ^{\mathrm{new}} / A_j ^{\mathrm{old}} - 1| \leq \rho_1$}
\State \textbf{break}
\EndIf
\State $A_j ^{\mathrm{old}} \leftarrow A_j ^{\mathrm{new}}$
\EndFor
\State $\hat{\nu} ^{\mathrm{new}} \leftarrow \hat{\nu}^{\mathrm{old}} - A_j ^{\mathrm{new}}/m$
\If{$| \hat{\nu} ^{\mathrm{old}} / \hat{\nu} ^{\mathrm{new}}  - 1 | \leq \rho_2$}
\State \textbf{break}
\EndIf
\State $\hat{\nu} ^{\mathrm{old}} \leftarrow \hat{\nu} ^{\mathrm{new}}$ 
\EndFor
\State $\hat{\nu}( (x^*, y^*); U_q, P_{X,Y}, m) \leftarrow \hat{\nu}^{\mathrm{new}}$ \Comment{Estimates for DShapley}
\EndProcedure
\end{algorithmic}
\label{alg:gaussian_inputs_lse}
\end{algorithm}

\paragraph{Binary classification}
Likewise Alg. \ref{alg:gaussian_inputs_lse}, the lower bound in Corollary \ref{cor:gaussian_inputs_binary} can be viewed as a cumulative sum of decreasing elements, so we again consider the partial sum. A detailed version of Alg. \ref{alg:DSV_IRLS} is presented in Alg. \ref{alg:DSV_IRLS_detail}.

\begin{algorithm}[h]
\caption{(Detailed) DShapley for binary classification}
\begin{algorithmic}
\Require A datum to be valued $(x^*, y^*)$. A set of random samples $\{(X_i,Y_i)\}_{i=1} ^B$ from $P_{X,Y}$.
\Procedure{Transform\_data}{}
\While{until a convergent condition is met}
\State $\pi_i \leftarrow \mathrm{logit}^{-1}(X_i ^T \hat{\beta}_{\mathrm{IRLS}} )$
\State Update $w_i$ and $Z_i$ based on Equation \eqref{eqn:irls_variabls} and set $\mathbb{W}$ and $\mathbb{Z}$ 
\State $\hat{\beta}_{\mathrm{IRLS}} \leftarrow (\mathbb{X}^T \mathbb{W} \mathbb{X})^{-1} \mathbb{X}^T \mathbb{W} \mathbb{Z}$
\EndWhile
\State $\pi^* \leftarrow \mathrm{logit}^{-1}(x^{*T} \hat{\beta}_{\mathrm{IRLS}} )$
\State $z^* \leftarrow x^{*T} \hat{\beta}_{\mathrm{IRLS}}  + (y^* - \pi^*)/(\pi^*(1-\pi^*))$
\State $w^* \leftarrow \pi^*(1-\pi^*)$
\State Compute a lower bound of DShapley of $\left( (w^*)^{1/2} x^*, (w^*)^{1/2} z^* \right)$.
\EndProcedure
\Require A datum to be valued $((w^*)^{1/2} x^*, (w^*)^{1/2} z^*)$. A set of random samples $\{((w_i) ^{1/2} X_i, (w_i) ^{1/2} Z_i)\}_{i=1} ^B$. Hyperparameters $c=C=1$ and $\rho=0.005$.
\Procedure{Compute\_lower\_bound}{}
\State Initialize $\hat{\nu}^{\mathrm{old}} \leftarrow 0$ and estimate $\tilde{\Sigma}_X$ with $\{(w_1) ^{1/2} X_1, \dots, (w_B) ^{1/2} X_B\}$
\State $e_{\mathrm{b}}^{*2} \leftarrow (w^*)^{-1}(y^* - \pi^*)^2$
\For{$j \in \{ q-1, \dots, m-1 \}$}
\State $\delta_{j} \leftarrow (C\sqrt{p}+\sqrt{\frac{\log(jm)}{2c}}) / \sqrt{j}$
\State $\tilde{\Lambda}_{\mathrm{upper}}(j), \tilde{\Lambda}_{\mathrm{lower}}(j) \leftarrow (j(1-\delta_{j})^2)^{-1}, (j(1+\delta_{j})^2)^{-1}$
\State $\tilde{\Lambda}_{\mathrm{ratio}}(j) \leftarrow \left( \frac{ 1+ w^* {x}^{*T} \tilde{\Sigma}_X ^{-1} {x}^{*} \tilde{\Lambda}_{\mathrm{lower}} (j) }{ 1 + w^* {x}^{*T} \tilde{\Sigma}_X ^{-1} {x}^{*} \tilde{\Lambda}_{\mathrm{upper}} (j) } \right)^2$
\State $A_j \leftarrow \frac{ w^* {x}^{*T} \tilde{\Sigma}_X ^{-1} {x}^{*} \tilde{\Lambda}_{\mathrm{lower}}^{2} (j) }{ (1+ w^* {x}^{*T} \tilde{\Sigma}_X ^{-1} {x}^{*} \tilde{\Lambda}_{\mathrm{upper}} (j) )^2} \left( ( 2 + w^* {x}^{*T} \tilde{\Sigma}_X ^{-1} {x}^{*} \tilde{\Lambda}_{\mathrm{lower}} (j) )  - \tilde{\Lambda}_{\mathrm{ratio}} ^{-1} (j) e_{\mathrm{b}}^{*2} \right)$ \Comment{Based on Corollary \ref{cor:gaussian_inputs_binary_full}}
\State $\hat{\nu} ^{\mathrm{new}} \leftarrow \hat{\nu}^{\mathrm{old}} + A_j ^{\mathrm{new}}/m$
\If{$| \hat{\nu} ^{\mathrm{old}} / \hat{\nu} ^{\mathrm{new}}  - 1 | \leq \rho$}
\State \textbf{break}
\EndIf
\State $\hat{\nu} ^{\mathrm{old}} \leftarrow \hat{\nu} ^{\mathrm{new}}$ 
\EndFor
\EndProcedure
\end{algorithmic}
\label{alg:DSV_IRLS_detail}
\end{algorithm}

\paragraph{Non-parametric density estimation}
DShapley in Theorem \ref{thm:shapley_density} consists of the two integral terms, $\int  (p(z) - \hat{p}_{S^{*}, k} (z) )^2 dz$ and $g(S^*)$. Our approach is to use the MC approximation and to estimate the integrals. Since the first term includes a constant term $\int \{ p(z) \}^2 dz$, we ignore the term as in \citet[Equation (1.183)]{ghosh2018}. We present a practical example of estimation in Alg. \ref{alg:DSV_density}.

\begin{algorithm}[h]
\caption{DShapley for non-parametric density estimation}
\begin{algorithmic}
\Require A set to be valued $S^*$. A Gaussian kernel $k_h$. $B=2000$. A given bandwidth grid $\mathcal{H} := \{ h_1, \dots, h_G \}$. A set of random samples $\{\tilde{z}_1,\dots, \tilde{z}_B\}$ from $P_{Z}$.
\Procedure{}{}
\State Find optimal bandwidth $h^* \in \mathcal{H}$ which minimizes the five-fold cross-validation error
\State Set $k \leftarrow k_{h^*}$
\State Sample $\{\tilde{z}_1 ^*, \dots, \tilde{z}_B ^* \}$ from $\hat{p}_{S^*, k}$
\State $\hat{\nu}(S^*; U_k, P_Z, m) \leftarrow - \frac{A(|S^*|,m)}{B} \sum_{i=1} ^B \left( \hat{p}_{S^*, k} (\tilde{z}_i ^*) -2 \hat{p}_{S^*, k} (\tilde{z}_i) \right) + \frac{B(|S^*|,m)}{B} \sum_{i=1} ^B \left( \hat{p}_{S^*, k} (\tilde{z}_i) - k (\tilde{z}_i ^* - \tilde{z}_i)  \right)$
\EndProcedure
\end{algorithmic}
\label{alg:DSV_density}
\end{algorithm}

\paragraph{Accuracy of the proposed algorithms}
Our algorithms use the Monte-Carlo (MC) method to provide unbiased approximation. When this MC converges, then it guarantees to converge to the true value. In our experiments, we stop the MC when the new increment is small enough compared to the current DShapley estimate to ensure good convergence to the true values. For example, we stop iterations when the new increment is within $0.5\%$ of the current estimates in Alg.~\ref{alg:gaussian_inputs_lse} or we use large samples ($B=2000$) in Alg.~\ref{alg:DSV_density}.

\subsection{Datasets}

\paragraph{Datasets used in time comparison experiment}
We use the two synthetic datasets for the time comparison experiment (Figure~\ref{fig:time_comparison}) as follows.
\begin{itemize}
    \item Linear regression: Given $(m,p)$ and $\beta \sim \mathcal{N}(0, I_p)$, we generate $y_i = x_i ^T \beta + \epsilon_i$ for all $i \in [m]$. Here, $x_i \sim \mathcal{N}(0, I_p)$ and $\epsilon_i \sim \mathcal{N}(0, 1)$ for all $i \in [m]$. We call this data distribution \texttt{Gaussian-R}.
    \item Binary classification: Given $(m,p)$, we generate $y_i = \mathbf{Bern}(0.5)$ and $x_i \sim \mathcal{N}([2 \times y_i, 0, \dots, 0]^T, I_p)$ for all $i \in [m]$.
\end{itemize}

\paragraph{Datasets used in point addition experiment}
We use the two synthetic datasets and eight real datasets for the point addition experiment in Sec. \ref{s:experiments}. For the synthetic datasets, we generate the two types of datasets, \texttt{Gaussian-R} and \texttt{Gaussian-C} for regression and classification, respectively. 
\texttt{Gaussian-R} is described above. As for the \texttt{Gaussian-C}, we first fix $p=3$ and set $\beta=(2,0,0)$. Then, we generate $x_i \sim \mathcal{N}(0, I_p)$ and $y_i = \mathbf{Bern}(\pi_i)$ for all $i \in [m]$. Here $\pi_i := \exp(x_i ^T \beta)/(1+\exp(x_i ^T \beta))$. 
For the real datasets, we collect datasets from multiple sources. For instance, \texttt{abalone}, \texttt{airfoil}, and \texttt{whitewine} are from \texttt{UCI Machine Learning Repository} \citep{dua2019} and \texttt{diabetes} is from \citet{efron2004least}.
A comprehensive list of datasets and details on sample size are provided in Table \ref{tab:summary_real_datasets}.

For the image datasets \texttt{Fashion-MNIST}, \texttt{MNIST} and \texttt{CIFAR10}, we follow the common procedure in prior works \citep{ghorbani2020, koh2017}: we first extract the penultimate layer outputs from the ResNet18 \citep{he2016deep} pre-trained with the ImageNet dataset \citep{russakovsky2015imagenet}. After the extraction, we fit the principal component analysis model and extract the first 32 principal components.

\begin{table}[h]
    \centering
    \caption{A summary of datasets for point addition experiment.}
    \resizebox{\textwidth}{!}{
    \begin{tabular}{lcccccccccccc}
    \toprule
    Dataset & \# of random samples & \# of held-out test data & Input dimension & ML problem & Source &   \\ 
    \midrule
    \texttt{Gaussian-R} & 49000 & 1000 & 10 & Regression & Synthetic dataset \\
    \texttt{abalone} & 3177 & 1000 & 10 & Regression & UCI Repository \\
    \texttt{airfoil} & 1003 & 500 & 5 & Regression & UCI Repository \\
    \texttt{whitewine} & 3898 & 1000 & 11 & Regression & UCI Repository \\
    \midrule
    \texttt{Gaussian-C} & 49000 & 1000 & 3 & Classification & Synthetic dataset \\
    \texttt{skin-nonskin} & 244057 & 1000 & 3 & Classification & \citet{chang2011libsvm} \\
    \texttt{MNIST} & 60000 & 5000 & 32 & Classification & \citet{lecun2010mnist} \\
    \midrule
    \texttt{diabetes} & 342 & 100 & 10 & Density estimation & \citet{efron2004least} \\
    \texttt{australian} & 349 & 100 & 12 & Density estimation & \citet{chang2011libsvm} \\
    \texttt{Fashion-MNIST} & 60000 & 5000 & 32 & Density estimation & \citet{xiao2017fashion} \\
    \midrule
    \multirow{2}{*}{\texttt{CIFAR10}} & \multirow{2}{*}{50000} & \multirow{2}{*}{5000}  & \multirow{2}{*}{32} & Classification & \multirow{2}{*}{\citet{krizhevsky2009learning}} \\
    & & & & Density estimation \\
    \bottomrule
    \end{tabular}}
    \label{tab:summary_real_datasets}
\end{table}

\subsection{Experiment settings}

\paragraph{Point addition experiment}
As for the point addition experiment, we use datasets summarized in Table \ref{tab:summary_real_datasets}. Throughout the experiments, for each dataset, we first randomly select 200 data points to be valued from datasets. For regression and classification problems, all other data points are used to estimate the DShapley, but for the density estimation problem, we randomly pick $2000$ samples. Please note that all the proposed methods, namely Alg.~\ref{alg:gaussian_inputs_lse}, Alg.~\ref{alg:DSV_IRLS_detail}, and Alg.~\ref{alg:DSV_density}, require some data points to estimate unknown-quantities ($\Sigma_X ^{-1}$, $e^{*2}$, or bandwidth)
Every time point we add a data point given order, we evaluate the test accuracy using the held-out dataset. The held-out dataset sizes are provided in Table \ref{tab:summary_real_datasets}.

For linear regression cases, we use the utility function constant $C_{\mathrm{lin}} = 2\hat{\sigma}^2$, where $\hat{\sigma} := \frac{1}{m-p} \sum_{i=1} ^{m-p} (y_i - x_i ^T \hat{\beta})^2$ and $\hat{\beta}$ is the least squares estimator.
For classification, the utility function is classification accuracy.
Lastly, for density estimation, we considered
\begin{align*}
    C_{\mathrm{den}} = m A(n,m) \int \{ p(z) \}^2 dz - \sum_{j=2} ^m h_2 (n,j-1),
\end{align*}
which corresponds to the sum of $m A(n,m) \int \{ p(z) \}^2 dz$ and  \eqref{eqn:density_constant_full}, in order to avoid computing $\int \{ p(z) \}^2 dz$. As for finding the optimal bandwidth, we select one from $\{10^{-2}, 10^{-1.5}, 10^{-1}, 10^{-0.5},  10^{0},  10^{0.5},  10^1 \}$ using the five-fold cross-validation error.

\section{Additional numerical experiments}
\subsection{Illustration of DShapley}
To see how DShapley changes with respect to $x^{*T} \Sigma_X ^{-1} x^{*}$ and $e^{*2}$, we estimate DShapley using Algorithm \ref{alg:gaussian_inputs_lse}.
We consider $m \in \{ 100, 300, 500\}$, $e^{*2} \in \{0, 1, 2, 4, 8\}$, the Gaussian input distribution $X \sim \mathcal{N}_p(0, I_p)$ with $p \in \{10,30\}$. 
Here, we assume that $\Sigma_X ^{-1}$ and $e^{*2}$ are known.
Figure \ref{fig:DSV_vs_x-norm} illustrates DShapley as a function of $x^{*T} \Sigma_X ^{-1} x^{*}$.
As anticipated, for a fixed $x^{*T}\Sigma_X ^{-1} x^*$, DShapley decreases as $e^{*2}$ increases. Moreover, DShapley exhibits different behavior depending on the error level. When $e^{*2}$ is small, DShapley increases as $x^{*T}\Sigma_X ^{-1} x^*$ increases. However, when $e^{*2}$ is big enough, DShapley shows non-monotonic curves in $x^{*T}\Sigma_X ^{-1} x^*$.  This is because of its form \eqref{eqn:DSV_gaussian_exact}. The fraction in \eqref{eqn:DSV_gaussian_exact} has a form of a weighted sum of $e^{*2}$ and $\sigma^2$, so it mainly relies on $e^{*2}$ for small values of $x^{*T}\Sigma_X ^{-1} x^*$. 
Lastly, the absolute magnitude of DShapley gets smaller as $m$ increases.

\begin{figure*}[h]
    \centering
    \includegraphics[scale=0.25]{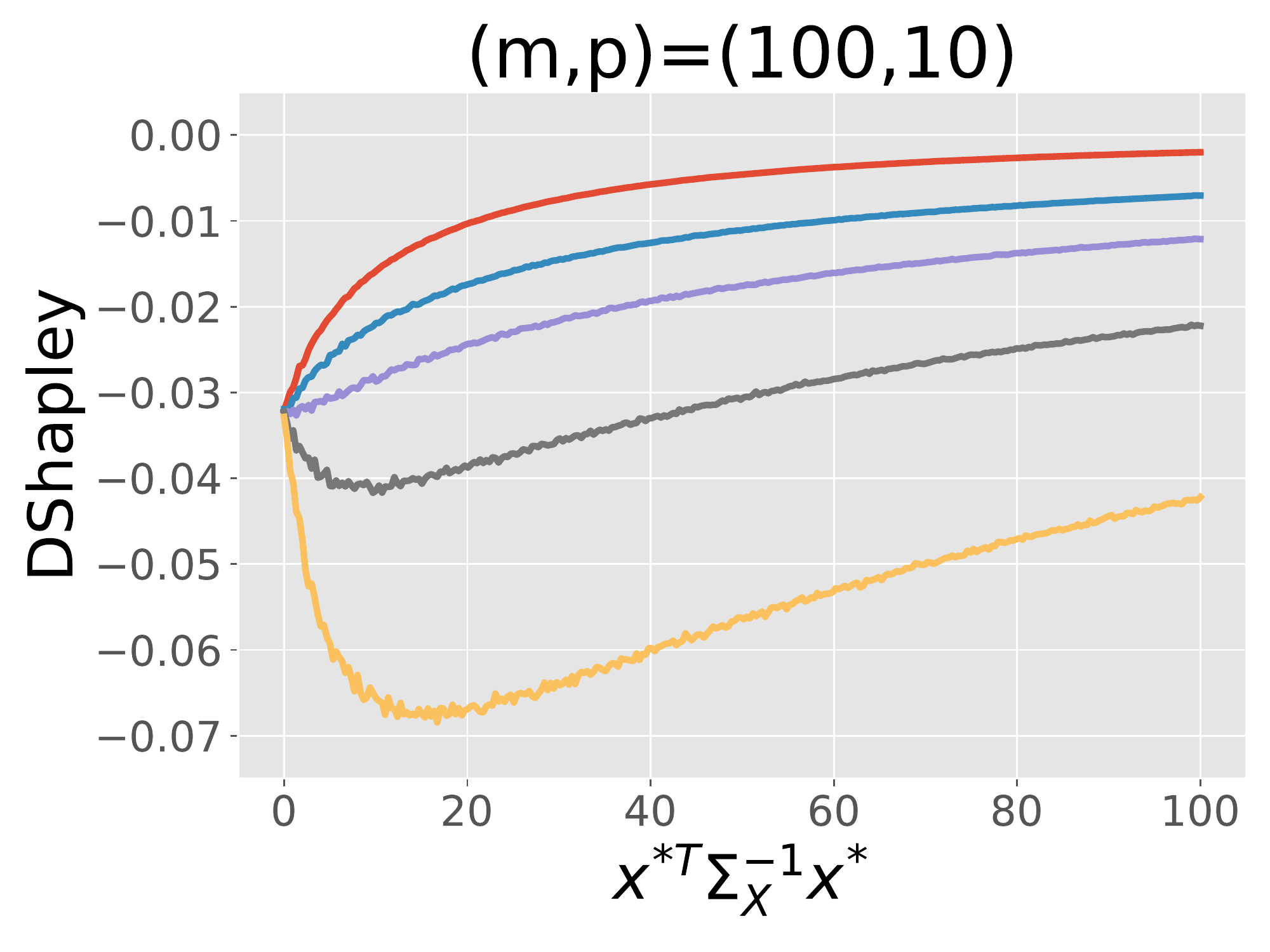}
    \includegraphics[scale=0.25]{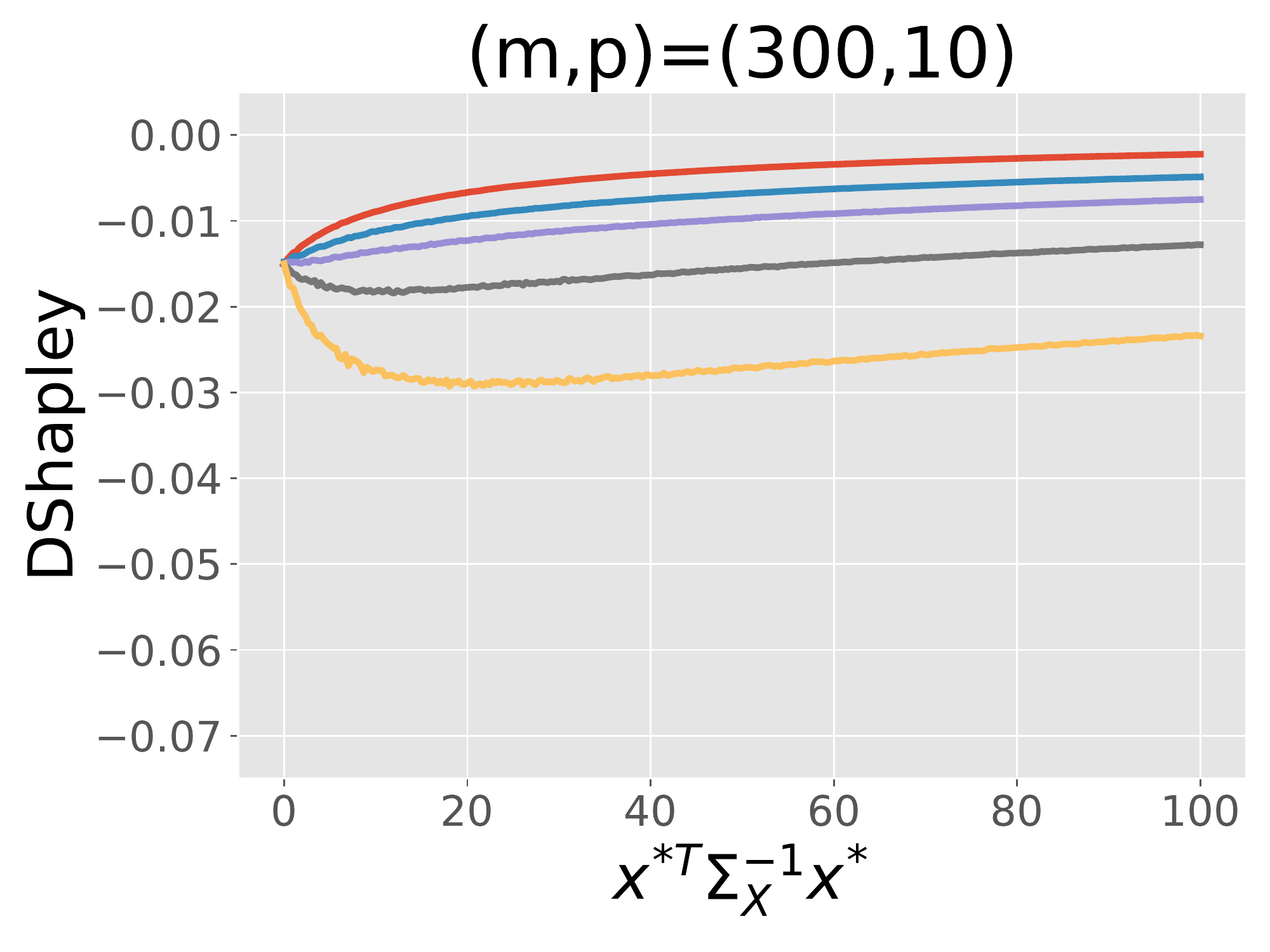}
    \includegraphics[scale=0.25]{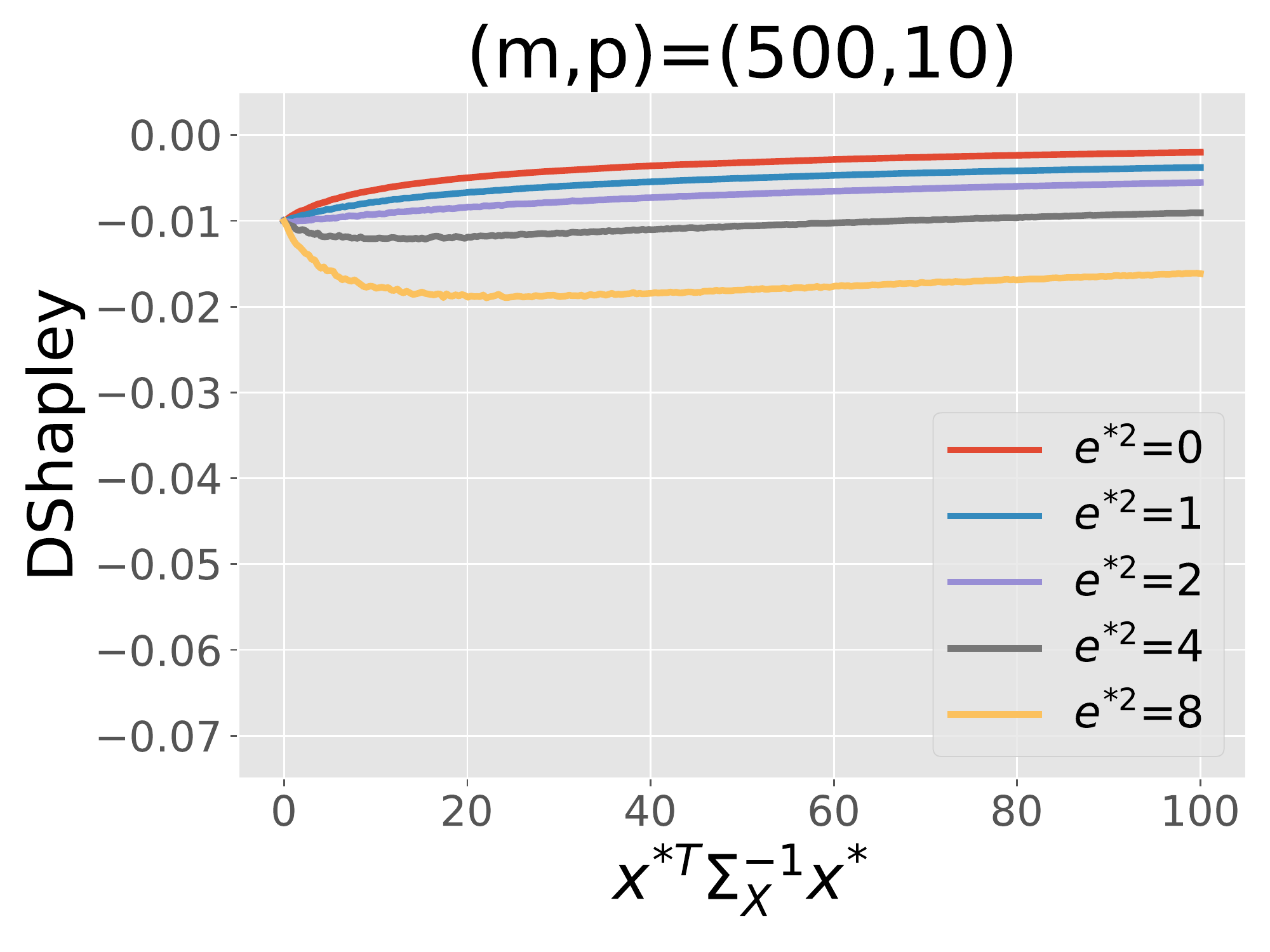}\\
    \includegraphics[scale=0.25]{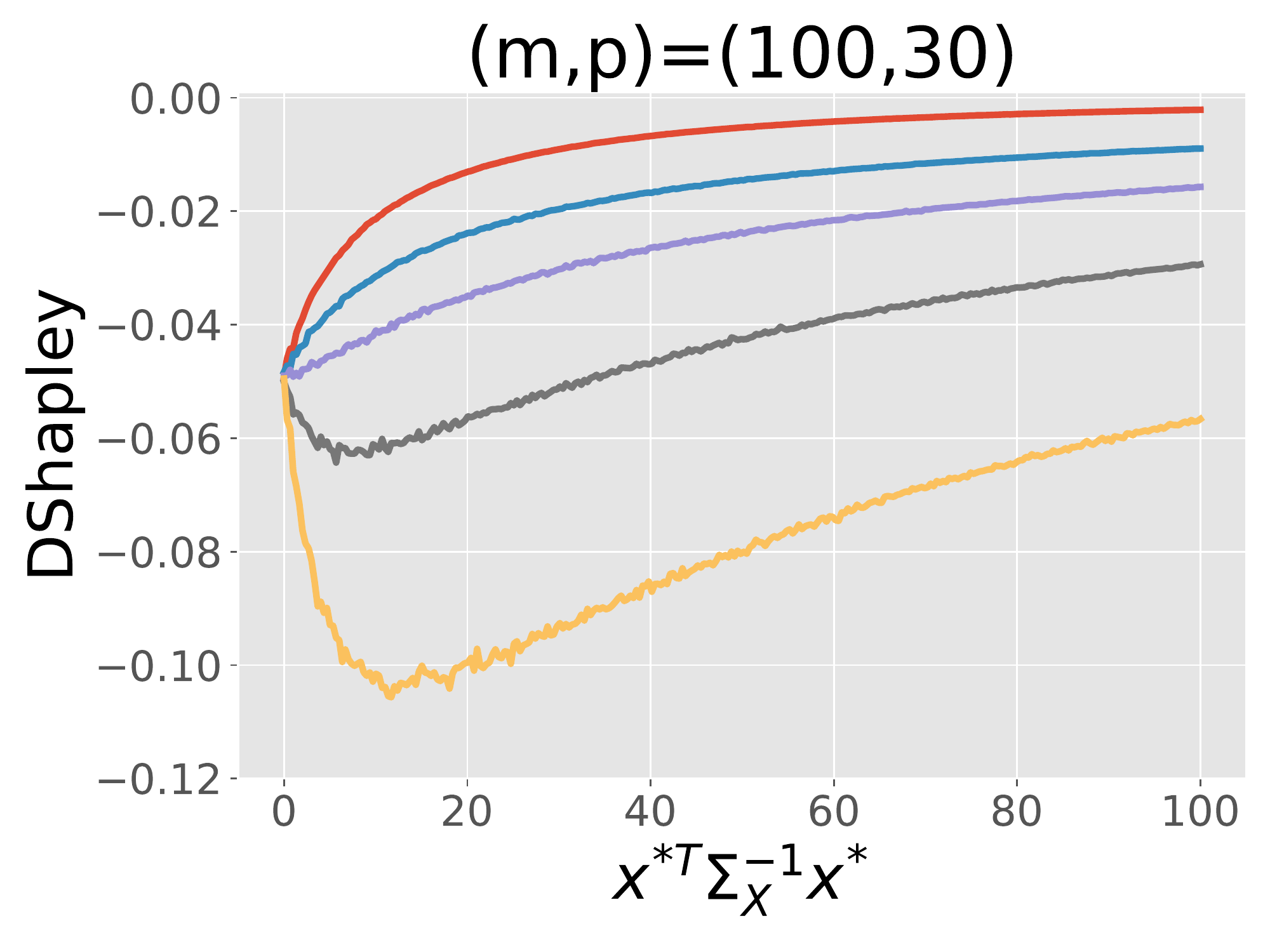}
    \includegraphics[scale=0.25]{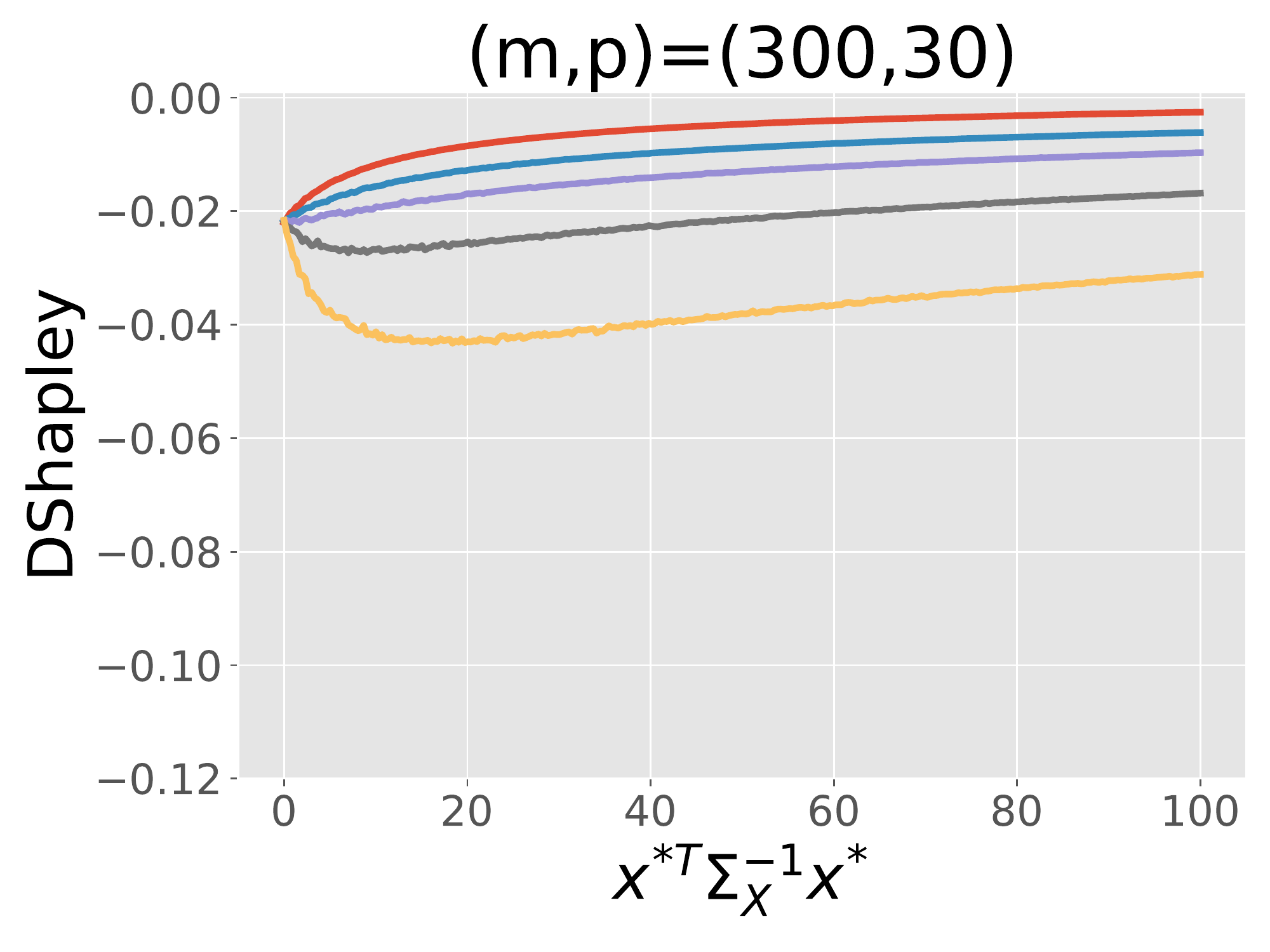}
    \includegraphics[scale=0.25]{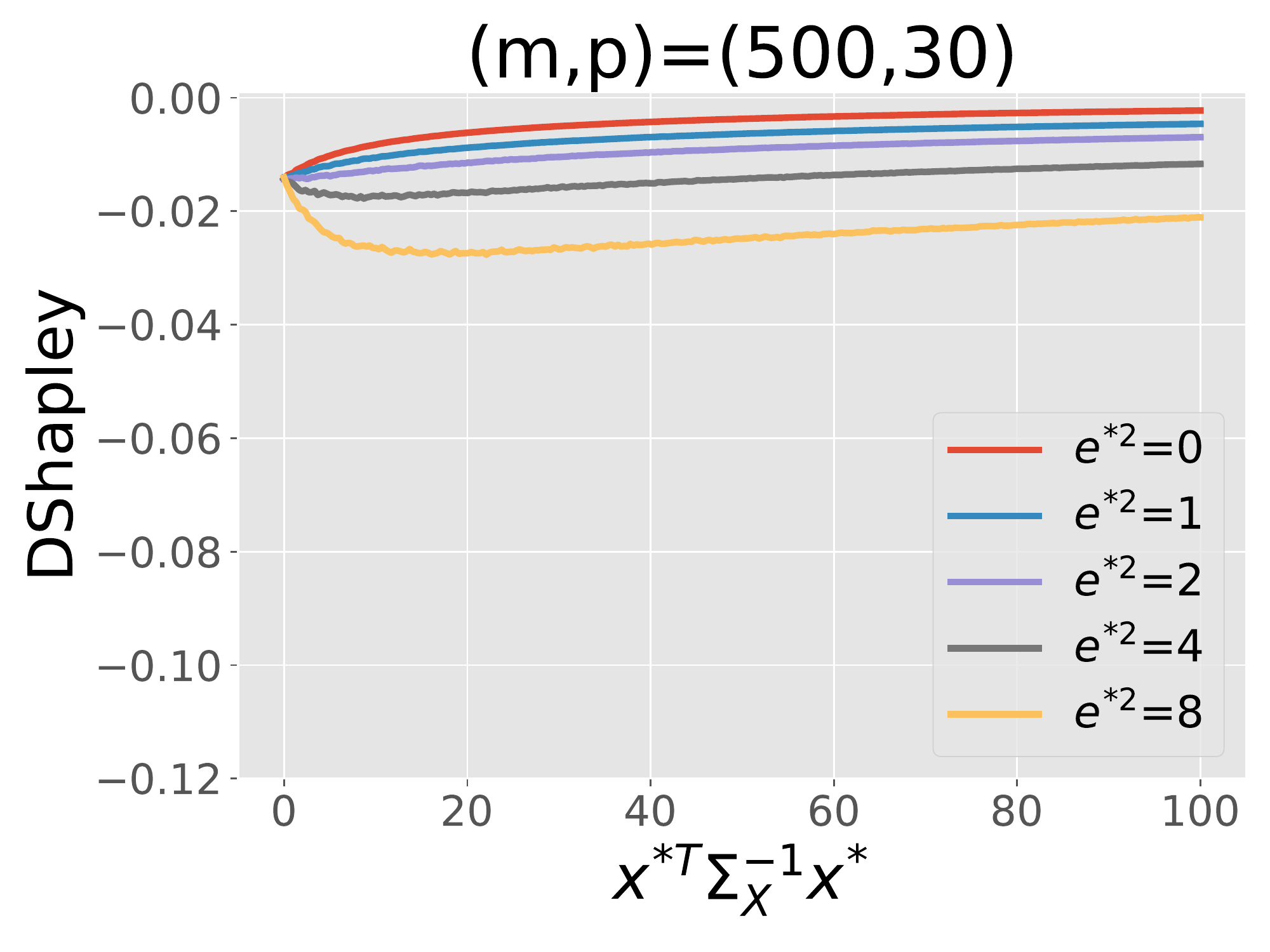}
    \caption{Illustration of DShapley as a function of the Mahalanobis distance $x^{*T} \Sigma_X ^{-1} x^{*}$ when the input dimension $p$ is either (top) 10 or (bottom) 30. Different colors indicate different error levels.}
    \label{fig:DSV_vs_x-norm}
\end{figure*}

\subsection{Point addition experiment with the upper and lower bounds of DShapley}
We additionally conduct the point addition experiment with the upper and lower bounds in \ref{thm:sub_gaussian_inputs_ridge}. Although the specific algorithm is not presented, it is straightforward from the `COMPUTE\_LOWER\_BOUND' procedure in Alg.~\ref{alg:DSV_IRLS_detail}. We use the same constants $C$, $c$, and $\rho$ defined in Alg.~\ref{alg:DSV_IRLS_detail}, but we here set $\gamma=1/200$. 

Figure~\ref{fig:exact_upper_lower_reg} and Figure~\ref{fig:exact_upper_lower_clf} show the upper and lower bounds of DShapley when ML problems are regression and classification, respectively. Note that $\mathcal{D}$-SHAPLEY shows the same plots. In our experiments, although the upper bound curves tend to show poor performance, the lower bound curves show promising results. The approximation of DShapley provides computationally efficient solutions, yet this phenomena shows one should be careful when using the approximation based on Theorem \ref{thm:sub_gaussian_inputs_ridge}.

\begin{figure*}[t]
    \centering
    \includegraphics[scale=0.125]{figures/regression_exact_airfoil.pdf}
    \includegraphics[scale=0.125]{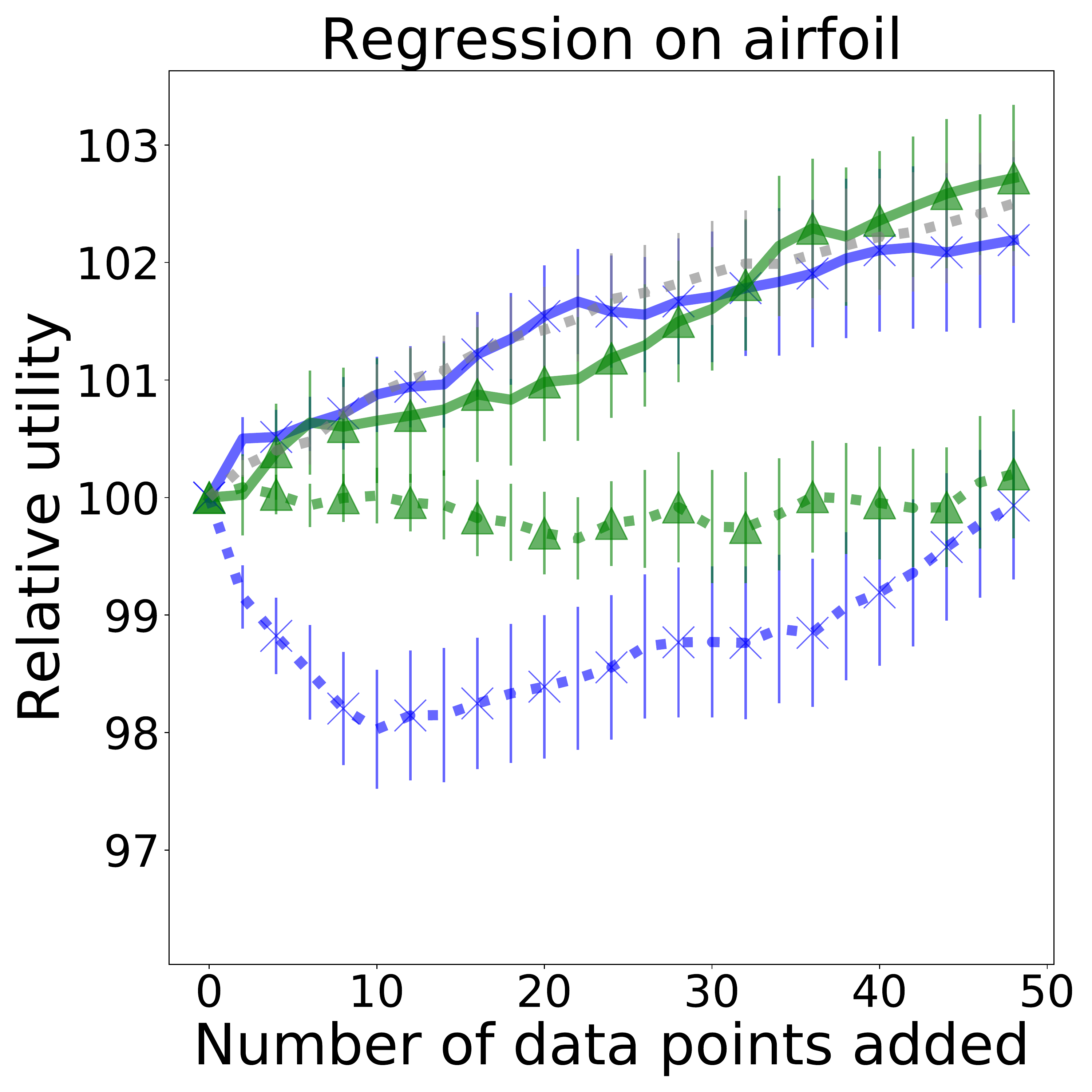}
    \includegraphics[scale=0.125]{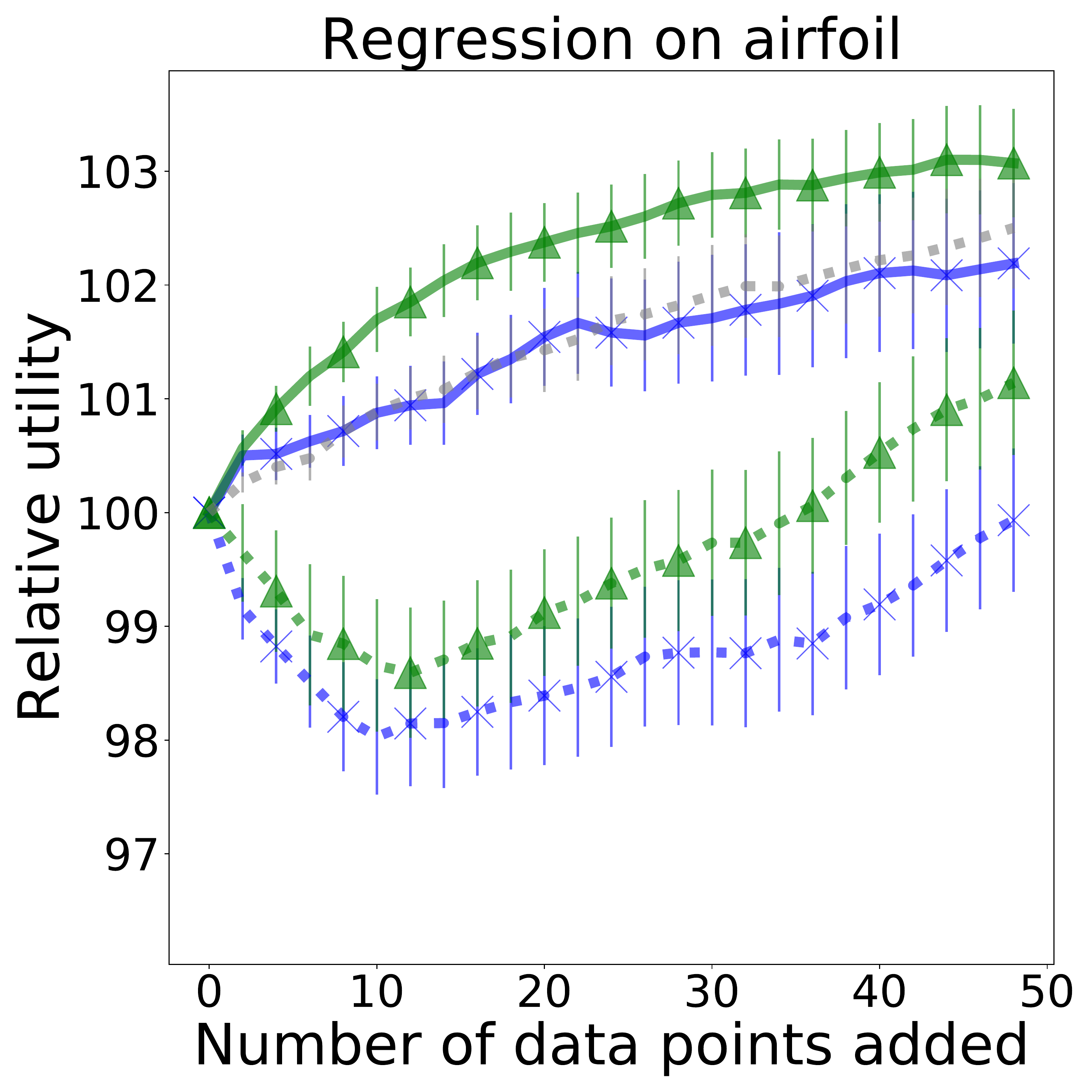}\\
    \includegraphics[scale=0.125]{figures/regression_exact_abalone.pdf}
    \includegraphics[scale=0.125]{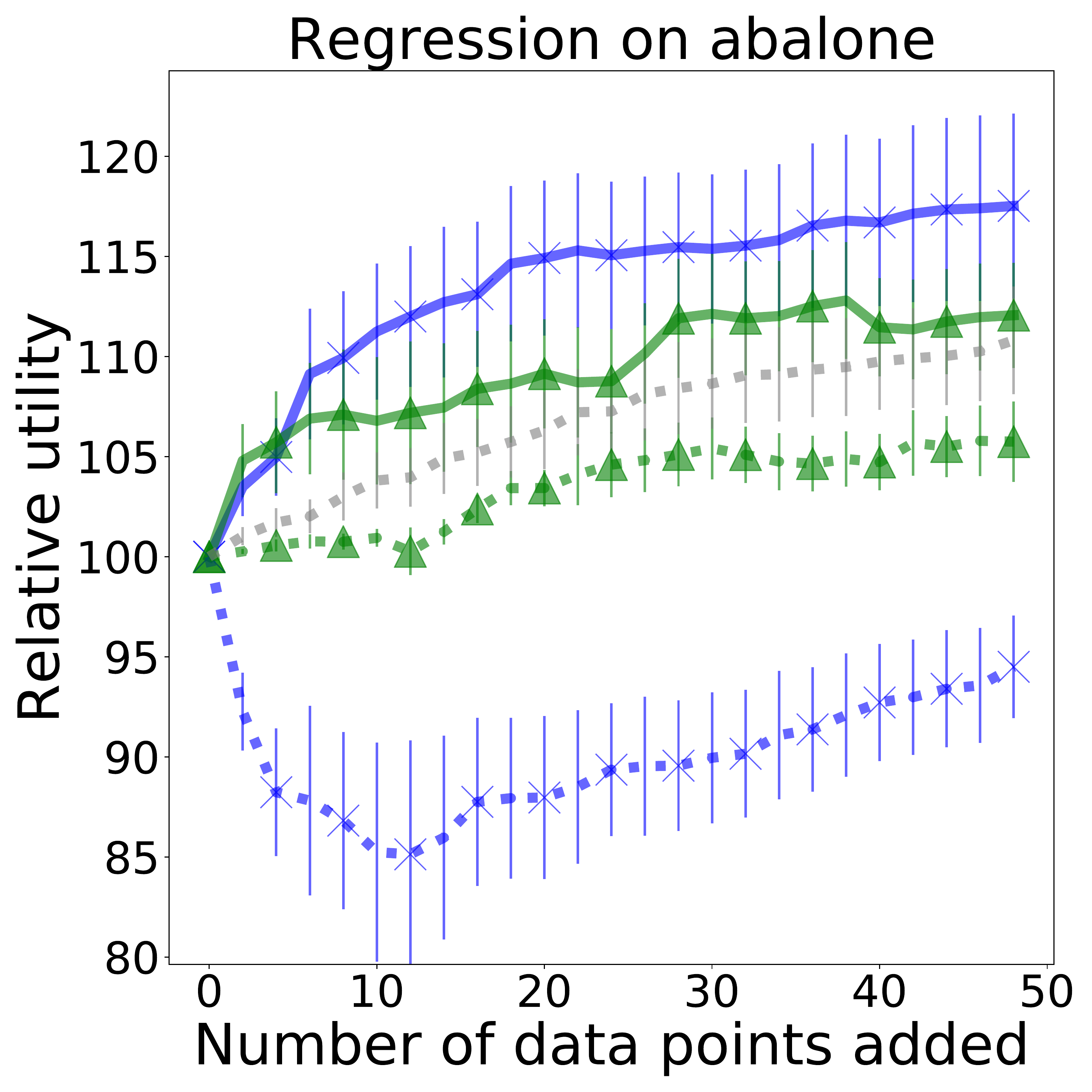}
    \includegraphics[scale=0.125]{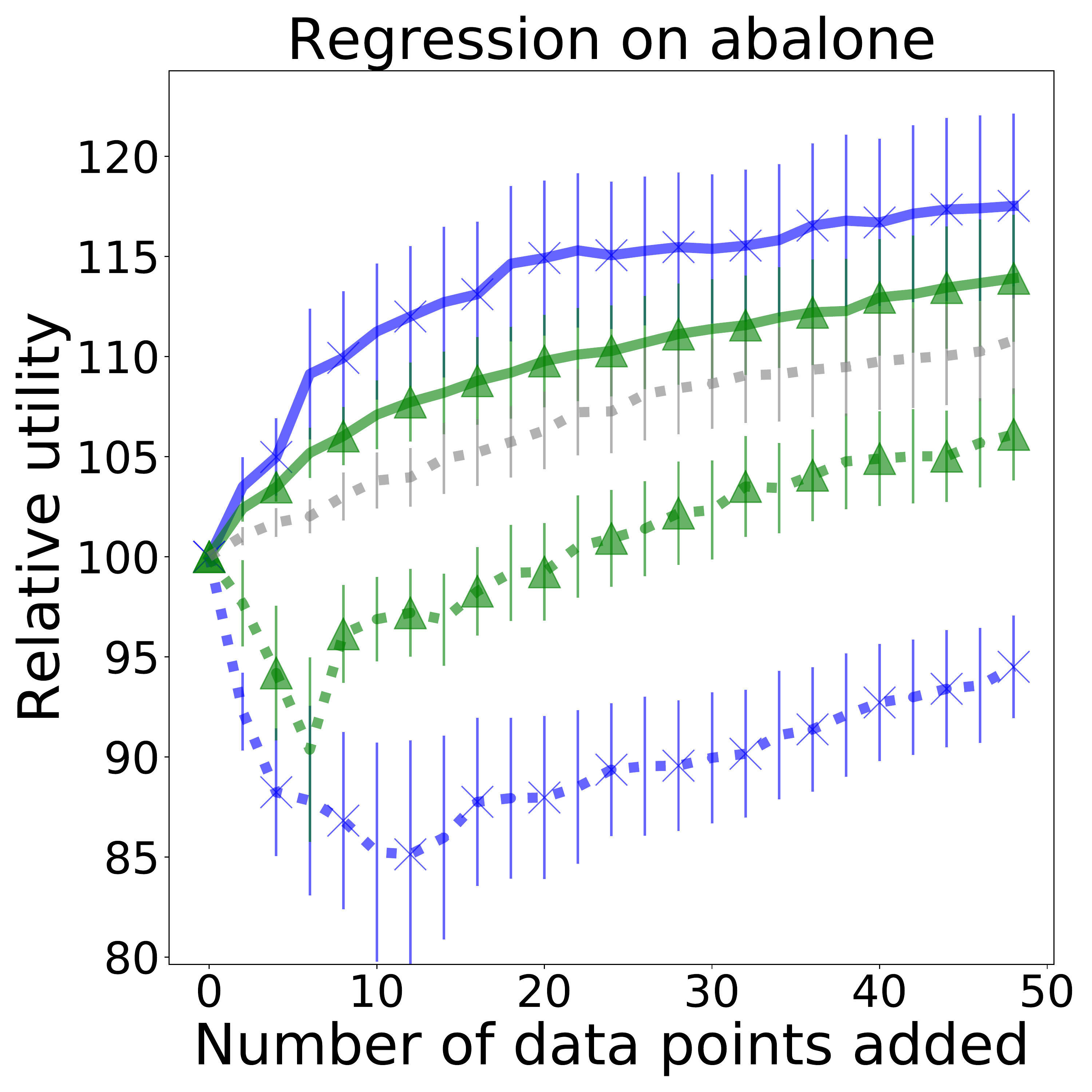}
    \caption{Relative utility and its standard error bar (in \%) as a function of the number of data added in linear regression settings. We examine the state-of-the-art \texttt{$\mathcal{D}$-SHAPLEY} (blue), random order (gray), and our proposed algorithms (green). As for the proposed algorithms, the exact DShapley based on Theorem~\ref{thm:gaussian_inputs_lse} (left), the upper (center), and the lower bounds based on Theorem~\ref{thm:sub_gaussian_inputs_ridge} (right). The solid and dashed curves correspond to adding points with the largest and smallest values first, respectively. The results are based on 50 repetitions.}
    \label{fig:exact_upper_lower_reg}
\end{figure*}
\begin{figure*}[t]
    \centering
    \includegraphics[scale=0.125]{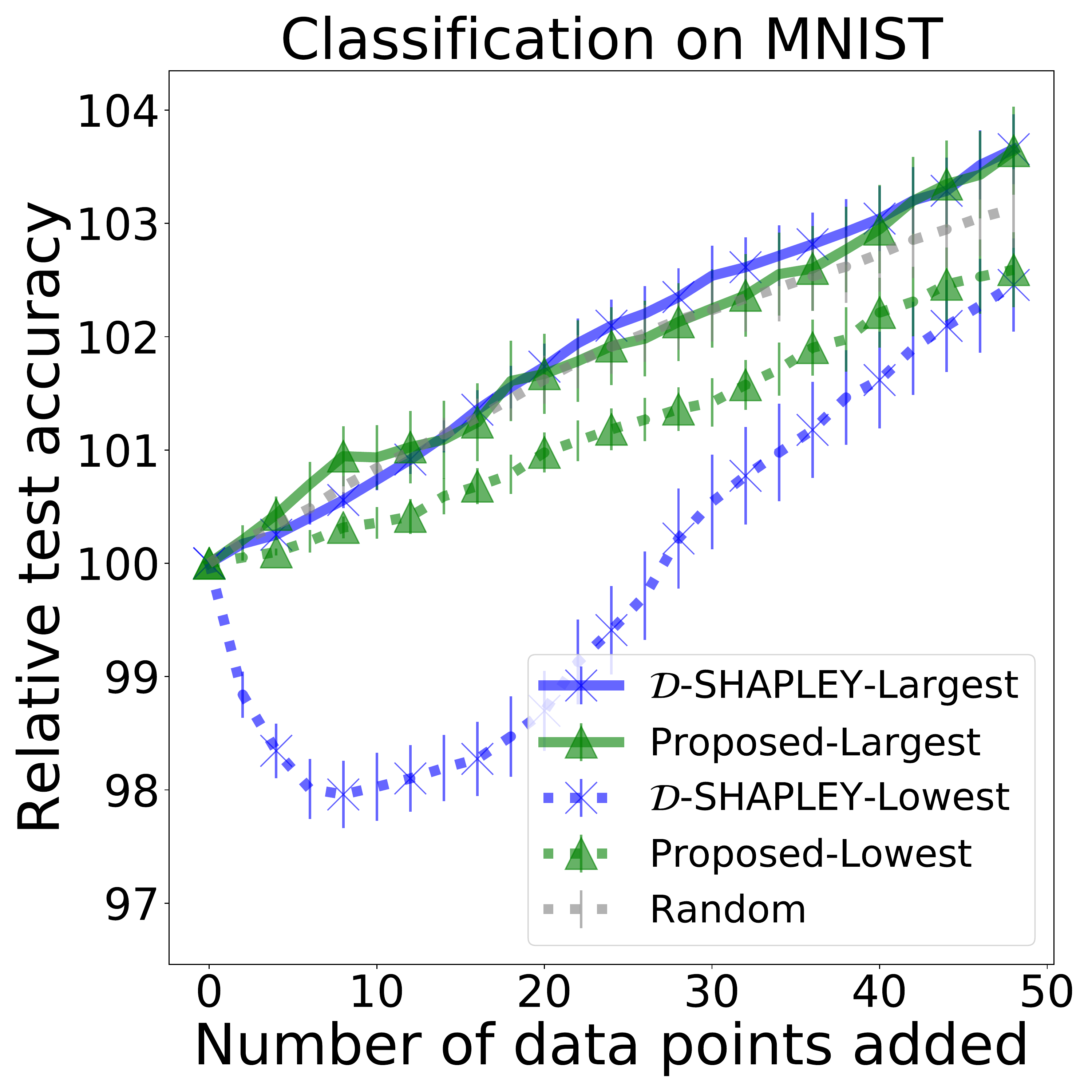}
    \includegraphics[scale=0.125]{figures/classification_lower_mnist.pdf}\\
    \includegraphics[scale=0.125]{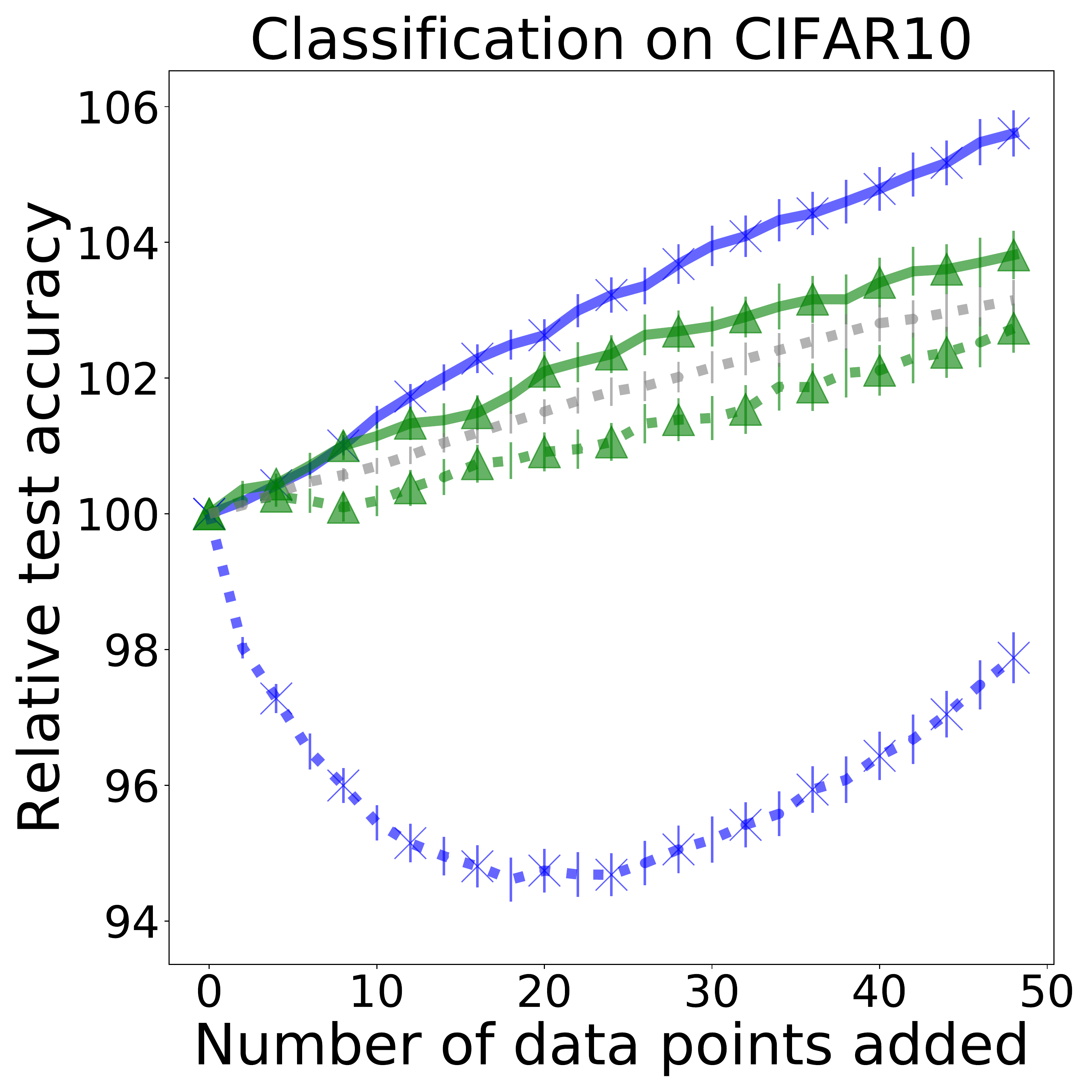}
    \includegraphics[scale=0.125]{figures/classification_lower_cifar10.pdf}
    \caption{Relative utility and its standard error bar (in \%) as a function of the number of data added in classification settings. We examine the state-of-the-art \texttt{$\mathcal{D}$-SHAPLEY} (blue), random order (gray), and our proposed algorithms (green). As for the proposed algorithms, the upper (left) and lower bounds based on Corollary~\ref{cor:gaussian_inputs_binary_full} (right). The solid and dashed curves correspond to adding points with the largest and smallest values first, respectively. The results are based on 50 repetitions.
    }
    \label{fig:exact_upper_lower_clf}
\end{figure*}

\section{A review of Shapley value and its uniqueness}
We briefly review the Shapley axioms: symmetry, null player, and additivity.
Under the axioms, we describe a fair valuation function \citep{shapley1953}.
Let $U$ be a utility function and $B$ be a dataset.
The three Shapley axioms are symmetry, null player, and additivity defined as follows.
\begin{itemize}
    \item \textit{Symmetry}: Let $z_i, z_j \in B$. For all $S \subseteq B \backslash \{z_i, z_j \}$, if $U(S\cup\{z_i\})=U(S\cup\{z_j\})$, then
    \begin{align*}
        \phi(z_i;U,B)=\phi(z_i;U,B).
    \end{align*}
    \item \textit{Null player}: Let $z_i \in B$. For all $S \subseteq B \backslash \{z_i \}$, if $U(S\cup\{z_i\})=U(S)$, then
    \begin{align*}
        \phi(z_i;U,B)=0.
    \end{align*}
    \item \textit{Additivity}: Let $U_1, U_2$ be two utility functions. For all $z \in B$,
    \begin{align*}
        \phi(z;U_1 + U_2,B)=\phi(z;U_1,B)+\phi(z;U_2,B).
    \end{align*}
\end{itemize}

Under the axioms, we provide the following uniqueness theorem quote from \citet[Proposition 293.1]{osborne1994course}. 
\begin{theorem}
Under the three Shapley axioms, the Shapley value is the unique valuation.
\end{theorem}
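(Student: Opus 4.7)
The plan is to run the classical Shapley uniqueness argument via the basis of unanimity games. For each nonempty $T \subseteq B$, I would define the unanimity utility $u_T(S) := \mathds{1}(T \subseteq S)$ and observe that the family $\{u_T : \emptyset \neq T \subseteq B\}$ forms a basis for the vector space of utility functions $U : 2^B \to \mathbb{R}$ with $U(\emptyset)=0$. This is a standard Möbius-inversion fact: the triangular system $U(S) = \sum_{\emptyset \neq T \subseteq S} c_T$ has a unique solution, giving $U = \sum_T c_T u_T$.

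Next I would use the additivity axiom to reduce the uniqueness question to the single-coefficient problem of determining $\phi(z; u_T, B)$ for each $T$. Additivity as stated is for sums of two utilities, but iterating it handles finite sums of unanimity functions, and scalar multiplication by rationals follows by summing $k$ copies and dividing (after noting that the zero utility has zero valuation by null player); extension to real coefficients is immediate from the multilinear structure of \eqref{eqn:def_data_shapley_value} in $U$. For a fixed $T$, the null player axiom forces $\phi(z; u_T, B) = 0$ for every $z \notin T$, since adding or removing such a $z$ never changes whether $T \subseteq S$. Symmetry then forces $\phi(z; u_T, B)$ to take a common value across all $z \in T$, since any two elements of $T$ are interchangeable in $u_T$.

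The final step is to pin down that common value. I would evaluate the data Shapley formula \eqref{eqn:def_data_shapley_value} directly on $u_T$ for $z \in T$, obtaining an explicit number $\alpha_T$ depending only on $|T|$ and $m$, and argue that the axioms together with the ``weighted average of marginal contributions'' normalization built into \eqref{eqn:def_data_shapley_value} force any admissible $\phi$ to take this same value. Combining this with the basis decomposition and additivity then yields $\phi = \sum_T c_T \cdot \phi(\,\cdot\,; u_T, B)$ for any axiom-satisfying valuation, identifying it with the data Shapley value.

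The main obstacle is precisely this last normalization step: symmetry, null player, and additivity by themselves force only that $\phi(z; u_T, B) = 0$ outside $T$ and that $\phi(z; u_T, B)$ is constant on $T$, but they do not fix the numeric value of that constant. The classical Shapley uniqueness proof (as in Osborne--Rubinstein, cited in the paper) adds an efficiency/carrier axiom that fixes $\sum_{z \in T} \phi(z; u_T, B) = 1$, and I would either make the implicit presence of efficiency explicit among the ``natural properties of fair valuation'' invoked in the paper, or phrase the conclusion as uniqueness within the class of valuations of the ``weighted-average-of-marginal-contributions'' form. Once that normalization is in place, symmetry pins the common value to $1/|T|$ on each $u_T$, and additivity plus the basis decomposition completes the uniqueness proof.
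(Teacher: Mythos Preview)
The paper does not actually prove this theorem: it is simply quoted from \citet[Proposition 293.1]{osborne1994course} with no argument given. Your unanimity-game decomposition is precisely the standard proof one finds in that reference, so in that sense your approach matches what the paper implicitly relies on.

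Your identification of the normalization gap is correct and worth emphasizing: the three axioms exactly as listed in the paper (symmetry, null player, additivity) do \emph{not} suffice for uniqueness. Without an efficiency axiom $\sum_{z \in B} \phi(z; U, B) = U(B)$, any rescaling $\lambda \cdot \phi$ of the Shapley value also satisfies all three axioms, so the common value on each $u_T$ is undetermined. Osborne--Rubinstein's Proposition 293.1 includes efficiency (their ``balanced contributions'' formulation implies it), and the paper's informal discussion of ``natural properties of fair valuation'' is presumably meant to include it as well, but the formal list in the appendix omits it. Your plan to make efficiency explicit is the right fix; once it is added, your argument goes through and coincides with the classical proof.
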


\end{document}